\theoremstyle{definition}
\newtheorem{theorem}{Theorem}
\newtheorem{lemma}{Lemma}
\newtheorem{proposition}{Proposition}
\DeclarePairedDelimiter\ceil{\lceil}{\rceil}
\title{Sub-linear Regret Bounds for Bayesian Optimisation in Unknown Search Spaces}
\author{%
  Hung Tran-The\thanks{Correspondence to: Hung Tran-The <\texttt{hung.tranthe@deakin.edu.au}>.}, Sunil Gupta, Santu Rana, Huong Ha, Svetha Venkatesh \\
  Applied Artificial Intelligence Institute\\
  Deakin University, Australia\\
}
\begin{document}

\maketitle

\begin{abstract}
Bayesian optimisation is a popular method for  efficient optimisation of expensive black-box functions. Traditionally, BO assumes that the search space is known. However, in  many problems, this assumption does not hold. To this end, we propose a novel BO algorithm which expands (and shifts) the search space over iterations based on controlling the expansion rate thought a \emph{hyperharmonic series}. Further, we propose another variant of our algorithm that scales to high dimensions. We show theoretically that for both our algorithms, the cumulative regret grows at sub-linear rates. Our experiments with synthetic and real-world optimisation tasks demonstrate the superiority of our algorithms over the current state-of-the-art methods for Bayesian optimisation in unknown search space.
\end{abstract}
\section{Introduction}
Bayesian optimisation (BO) is a powerful and flexible tool for  efficient global optimisation of expensive black-box functions. An underlying limitation of existing approaches is that the search is restricted to a pre-defined and fixed search space, thus implicitly assuming  that this search space  will contain the global optimum. To set suitable bounds of the search space, prior knowledge is required. When exploring entirely new problems (e.g a new machine learning model), such prior knowledge is often poor, and thus the specification of the search space can be erroneous leading to suboptimal solutions. For example, in many machine learning algorithms we have hyperparameters or parameters that can take values in an unbounded space e.g. $L1/L2$ penalty hyperparameters in elastic-net can take any nonnegative value. Similarly, the weights of a neural network can take any real values. No matter how large a finite search space is set, one cannot be sure if this search space contains the global optimum.

This problem is considered in \citep{shahriari16,NguyenGR0V17,ha2019,Wei20}, and UBO \citep{ha2019} is the first to provide global convergence analysis. However, they consider a weak version of the global convergence, i.e., instead of seeking the exact global optimum, they  find a solution wherein the function value is within $\epsilon > 0$ of the global optimum. Further, there is no analysis on the convergence rate of this algorithm which is important for understanding the efficiency of the optimisation.

Another complication arises when high dimensional problems are considered (e.g. hyperparameter tuning \citep{snoek12}, reinforcement learning \citep{CalandraSPD16}), as BO scales poorly in practice. With unknown search spaces, we need to consider evolving/growing search spaces. This growth in search spaces makes high dimensional BO further challenging as the search space is already exponentially large with respect to dimensions. Both these challenges compound the difficulty in maximising the acquisition functions. With limited budgets, the accuracy of points suggested by the acquisition step is often poor and this adversely affects both the convergence and the efficiency of the BO algorithm. Thus solutions to  BO in unknown high-dimensional search spaces need to be found.

In this paper, we address these open problems. Our contributions are as follows:
\begin{itemize}
  \item We introduce a novel BO algorithm for unknown search space, using a volume expansion strategy with a rate of expansion controlled through \emph{hyperharmornic series} \citep{Chlebus2009}. We show that our algorithm achieves a sub-linear convergence rate.
  \item We then provide a first solution for BO problem with unknown high dimensional search spaces. Our solution is based on using a restricted search space consisting of a set of hypercubes with small sizes. Based on controlling the number of hypercubes according to the expansion rate of the search space, we derive an upper bound on the cumulative regret and theoretically show that it can achieve a sub-linear growth rate.
  \item We evaluate our algorithms extensively using a variety of optimisation tasks including optimisation of several benchmark functions and tuning both the hyperparameters (Elastic Net) and parameters of machine learning algorithms (weights of a neural network and Lunar Lander). We demonstrate that our algorithms have better sample and computational efficiency compared to existing methods on both synthetic and real optimisation tasks. Our source code is publicly available at $\texttt{https://github.com/Tran-TheHung/Unbounded\_Bayesian\_Optimisation}$.
\end{itemize}
\section{Related Work}
There are two main approaches in previous work addressing BO with unknown search spaces. The first tackles the problem by nullifying the need to declare the search space - instead a regularized acquisition function is optimised on an unbounded search space such that its maximum can never be at infinity. However, this approach requires critical parameters that are difficult to specify in practice, and there is no theoretical guarantee on the optimisation efficiency. The second approach uses volume expansion - starting from a user-defined region, the search space is sequentially expanded during optimisation. The simplest strategy repeatedly doubles the volume of the search space every few iterations \citep{shahriari16}. Such a strategy is not efficient as it grows the search space exponentially. \citep{Nguyen19} propose a expansion strategy  based on a filter, however they require an additional crucial assumption that the initial search space is sufficiently close to the optimum. Further, their regret bound is non vanishing. More recently, \citep{Wei20} propose an adaptive expansion strategy based on the uncertainty of the GP model, but do not provide convergence guarantees. A recent approach by \citep{ha2019} is the first to provide a global convergence analysis. However, their work has two limitations. First, the convergence analysis aims at $\epsilon$-regret, meaning the algorithm only converges approximately. Second, there is no analysis of the convergence rate. Compared to these works, our approach is novel and is only one to guarantee the sub-linear convergence rate.

In another context, the high dimensional BO has been studied extensively in the literature. In order to make BO scalable to high dimensions, most of the methods make restrictive structural assumptions such as the function having an effective low-dimensional subspace \citep{Wang13,Djolonga13,Garnett14,ErikssonDLBW18,Zhang19,nayebi19a}, or being decomposable in subsets of dimensions \citep{Kandasamy15,li16,rolland18a,Mutny18,HoangHOL18}. Through these assumptions, the acquisition function becomes easier to optimise
and the global optimum can be found. However, such assumptions are rather strong. Without these assumptions, high-dimensional BO
problem is more challenging. There have been a limited attempts to develop scalable BO methods \citep{OhGW18,Johannes19,ErikssonPGTP19,Tran-The0RV20}. To our knowledge, all these works have not been considered in the case of unknown search spaces. We provide the first solution for the unknown high-dimensional problem.  Our solution does not make structural assumptions on the function.
\section{Preliminaries}
Bayesian optimisation (BO) finds the global optimum of an unknown, expensive, possibly non-convex function $f(x)$. It is assumed that we can interact with $f$ only by querying at some $\mathbf{x} \in \mathbb{R}^d$ and obtain a noisy observation $y = f(\mathbf{x}) + \epsilon$ where $\epsilon\sim\mathcal{N}(0,\sigma^2)$. The search space is required to specified a priori and is assumed to include the true global optimum. BO proceeds sequentially in an iterative fashion. At each iteration, a surrogate model is used to probabilistically model $f(\mathbf{x})$. Gaussian process (GP) \citep{Rasmussen05} is a popular choice for the surrogate model as it offers a prior over a large class of functions and its posterior and predictive distributions are tractable. Formally, we have $f(\mathbf{x}) \sim \mathcal {GP}(m(\mathbf{x}), k(\mathbf{x}, \mathbf{x}'))$ where $m(\mathbf{x})$ and $k(\mathbf{x},\mathbf{x}')$ are the mean and the covariance (or kernel) functions. Popular covariance functions include Squared Exponential (SE) kernels, Mat\'ern kernels etc. Given a set of observations $\mathcal D_{1:t} = \{\mathbf{x}_i, y_i\}_{i=1}^t$, the predictive distribution can be derived as $P(f_{t+1}| \mathcal D_{1:t}, \mathbf{x}) = \mathcal N(\mu_{t+1}(\mathbf{x}), \sigma_{t+1}^2(\mathbf{x}))$, where $\mu_{t+1}(\mathbf{x}) = \textbf{k}^T[\mathbf{K} + \sigma^2\textbf{I}]^{-1}\textbf{y} + m(\mathbf{x})$ and $\sigma_{t+1}^2(\mathbf{x}) = k(\mathbf{x}, \mathbf{x}) - \text{\textbf{k}}^{T}[\textbf{K} + \sigma^2\textbf{I}]^{-1}\textbf{k}$. In the above expression we define $\textbf{k }= [k(\mathbf{x}, \mathbf{x}_1), ..., k(\mathbf{x}, \mathbf{x}_t)]$, $\textbf{K }= [k(\mathbf{x}_i, \mathbf{x}_j)]_{1 \le i,j \le t}$ and $\textbf{y}=[y_1,\ldots,y_t]$.

After the modeling step, an acquisition function is used to suggest the next $\mathbf{x}_{t+1}$ where the function should be evaluated. The acquisition step uses the predictive mean and the predictive variance from the surrogate model to balance the exploration of the search space and exploitation of current promising regions. Some examples of acquisition functions include Expected Improvement (EI) \citep{Mockus74}, GP-UCB \citep{Srinivas12} and PES \citep{LobatoHG14}. We use GP-UCB acquisition function which is defined  as
\begin{eqnarray}
u_{t}(\mathbf{x}) =\mu_{t-1}(\mathbf{x}) + \sqrt{\beta_{t}}\sigma_{t-1}(\mathbf{x}),
\label{eq:1}
\end{eqnarray}
where $\beta_{t}$  balances the exploration and the exploitation (see \citep{Srinivas12}).
\paragraph{Cumulative Regret:}To measure the performance of a BO algorithm, we use the regret, which is the loss incurred by evaluating the function at $x_t$, instead of at unknown optimal input, formally $r_t = f(\mathbf{x}^*) - f(\mathbf{x}_t)$.  The cumulative regret is defined as $R_T = \sum_{1 \le t \le T}r_t$ , the sum of regrets incurred over given a horizon of $T$ iterations. If we can show that $\lim_{T \rightarrow \infty}\frac{R_T}{T} = 0$, the cumulative regret is \textbf{sub-linear}, and so the algorithm efficiently converges to the optimum.
\section{Problem Setup and HuBO Algorithm}
Bayesian optimisation aims to find the global optimum of black-box functions, i.e.
$$\mathbf{x}^* = \text{argmax}_{\mathbf{x} \in \mathbb{R}^d}f(\mathbf{x}),$$
where $d$ is the input dimension of the search space. Differing from traditional BO where the search space is assumed to be known a priori, we assume that the search space is \textbf{unknown}. As in \citep{ha2019}, we  assume that $\mathbf{x}^*$ is not at infinity to make the BO tractable.

When the search space is unknown, one heuristic solution is to specify it arbitrarily. However, there are two problems: (1) an arbitrary search space that is finite, no matter how large, may not contain the global optimum (2) optimisation efficiency decreases with increasing size of the search space. 

We propose a volume expansion strategy such that the search space can eventually cover the whole $\mathbb{R}^d$ (therefore, guaranteed to contain unknown $\mathbf{x}^*$), while the expansion rate is kept slow enough so that the algorithm efficiently converges, i.e.
$\lim_{T \rightarrow \infty} \frac{ \sum_{1 \le t\le T}(f(\mathbf{x}^*) - f(\mathbf{x}_t))}{T} \rightarrow 0$, given any $T > 0$. To do this, our key idea is to iteratively expand and shift the search space toward the "promising regions". At iteration $t$, we expand the search space by $\mathcal{O}(t^{\alpha})$, where $\alpha < 0$. We choose this form so that the search space expansion slows over time. The parameter $\alpha$ is set to guarantee the efficient convergence. Our volume expansion strategy is as follows: starting from an initial user-defined region, denoted by $\mathcal X_0= [a, b]^d$, the search space at iteration $t$, denoted by $\mathcal X_t = [a_t, b_t]^d$ will be built from $\mathcal X_{t-1} = [a_{t-1}, b_{t-1}]^d$ by a sequence of transformations as follows:
\begin{eqnarray}
\mathcal X_{t-1} \rightarrow \mathcal X'_t \rightarrow \mathcal X_t
\label{transformation}
\end{eqnarray}
where $\mathcal X'_t = [a'_t, b'_t]^d$ is expanded from $\mathcal X_{t-1}$ by the $(\frac{b-a}{2}) t^{\alpha}$ increment in each direction for all dimensions as $a'_t = a_{t-1} - \frac{b-a}{2}t^{\alpha}$ and $b'_t = b_{t-1} + \frac{b-a}{2}t^{\alpha}$.

To build $\mathcal X_t$ from $\mathcal X'_t$, we translate the center of $\mathcal X'_t$, denoted by $\mathbf{c}'_t$ toward the best solution found until iteration $t$. To avoid a "fast" translation of $\mathbf{c}'_t$, which could cause the divergence, we use a fixed, finite domain  $\mathcal C_{initial}$ to restrict the translation of $\mathbf{c}'_t$. We translate $\mathbf{c}'_t$ toward a point $\mathbf{c}_t$ where $\mathbf{c}_t \in \mathcal C_{initial}$ is the closest point to the best solution found until iteration $t$. In practice, our algorithm would typically benefit by setting a large  $\mathcal C_{initial}$ as this allows the search space in iteration $t$ to be centred close to the best found solution. However, irrespective of the size of $\mathcal C_{initial}$, as we show in our convergence analysis, our search space expansion scheme is still guaranteed to converge to $\mathbf{x}^*$.

In this transformation, step $\mathcal X_{t-1} \rightarrow \mathcal X'_t$ plays the role to expand the search space. Step $X'_t \rightarrow \mathcal X_t$ plays the role to translate the search space towards the promising region surrounding the best solution found so far. By induction, we can compute the volume of $\mathcal X_t$ as $Vol(\mathcal X_t)$: $Vol(\mathcal X_t) = (b-a)^d(1 + \sum_{j=1}^t j^{\alpha})^d$. Therefore, given any $t$, the volume of the search space $\mathcal X_t$ is controlled by a partial sum of a \emph{hyperharmonic series} $\sum_{j=1}^{t} j^{\alpha}$ \citep{Chlebus2009}.

Our strategy called the $\text{\textbf{H}yperharmonic \textbf{u}nbounded \textbf{B}ayesian \textbf{O}ptimisation}$(HuBO) is described in Algorithm \ref{alg:alg1}. It closely follows the standard BO algorithm. The only difference lies in the acquisition step where instead of using a fixed search space, the search space expands in each iteration following (\ref{transformation}). We use the GP-UCB acquisition function with $\beta_t = 2log(4\pi_t/\delta) + 4dlog(dts_2(b-a)(1 + \sum_{j=1}^{t} j^{\alpha})\sqrt{log(4ds_1/\delta)})$, where $\sum_{t\ge 1} \pi_t^{-1} =1$, $\pi_t > 0$.
\begin{algorithm}[tb]
\caption{$\text{HuBO}$ Algorithm}
\label{alg:alg1}
\textbf{Parameters}: $\alpha \in \mathbb{R}$- rate of expanding the volume of the search space\\
\textbf{Initialisation}: Define an initial search space $\mathcal X_0= [a, b]^d$, a finite domain $\mathcal C_{initial} =[c_{min}, c_{max}]^d$, where $\mathcal X_0 \subseteq \mathcal C_{initial}$. Sample initial points in $\mathcal X_0$ to build $\mathcal D_{0}$.
\begin{algorithmic}[1]
\FOR{$t = 1, 2, ...T $}
    \STATE Fit the Gaussian process using $\mathcal D_{t-1}$.
    \STATE Define $\mathcal X_t$ using (\ref{transformation}).
    \STATE Find $\mathbf{x}_t = \text{argmax}_{\mathbf{x} \in \mathcal X_t} u_t(\mathbf{x})$, where $u_t(\mathbf{x})$ defined as in Eq (\ref{eq:1}) to find $\mathbf{x}_t$.
    \STATE Sample $y_t = f(\mathbf{x}_t) + \epsilon_t$.
    \STATE Augment the data $\mathcal D_{t} = \{\mathcal D_{t-1}, (\mathbf{x}_{t}, y_t)\}$.
\ENDFOR
\end{algorithmic}
\end{algorithm}
\vspace*{-2mm}
\subsection{Convergence Analysis of HuBO Algorithm}
In this section, we provide the convergence analysis of proposed HuBO Algorithm. \textbf{All proofs are provided in the Supplementary Material}. To guarantee the convergence, the first necessary condition is that the search space eventually contains $\mathbf{x}^*$.
\begin{theorem}[Reachability]
If $\alpha \ge -1$, then the HuBO algorithm guarantees that there exists a constant $T_0 > 0$ (independent of $t$) such that when $t > T_0$,  $\mathcal X_t$ contains $\mathbf{x}^*$.
\label{theorem:11}
\end{theorem}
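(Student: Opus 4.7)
The plan is to compare two quantities for large $t$: the half side-length of the hypercube $\mathcal{X}_t$ (which grows with $t$) and the $\ell_\infty$ distance from its center to $\mathbf{x}^*$ (which stays bounded). Since $\mathcal{X}_t$ is always a hypercube of the form $[a_t,b_t]^d$ obtained from $\mathcal{X}_{t-1}$ by first expanding to $\mathcal{X}'_t$ and then translating (translation preserves side length), the half side-length in each coordinate is exactly $\tfrac{b-a}{2}\bigl(1 + \sum_{j=1}^{t} j^{\alpha}\bigr)$, which matches the volume formula stated earlier. Thus $\mathbf{x}^*\in\mathcal{X}_t$ iff $\|\mathbf{c}_t-\mathbf{x}^*\|_\infty \le \tfrac{b-a}{2}\bigl(1+\sum_{j=1}^{t} j^{\alpha}\bigr)$, where $\mathbf{c}_t$ is the center of $\mathcal{X}_t$.

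Next I would bound the left-hand side by a $t$-independent constant. By construction, $\mathbf{c}_t \in \mathcal{C}_{initial}=[c_{min},c_{max}]^d$, which is a fixed bounded set. Since $\mathbf{x}^*$ is assumed not to be at infinity (a standing assumption in the paper), set
\begin{equation*}
D \;:=\; \sup_{\mathbf{c}\in \mathcal{C}_{initial}} \|\mathbf{c}-\mathbf{x}^*\|_\infty \;<\; \infty.
\end{equation*}
Then $\|\mathbf{c}_t-\mathbf{x}^*\|_\infty \le D$ for all $t$, uniformly in $t$.

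The remaining step uses the divergence of the hyperharmonic series: for $\alpha \ge -1$, $\sum_{j=1}^{t} j^{\alpha}\to \infty$ as $t\to\infty$ (for $\alpha=-1$ this is the harmonic series, with $\sum_{j=1}^t 1/j \ge \ln t$; for $-1<\alpha\le 0$ it grows like $t^{\alpha+1}/(\alpha+1)$; for $\alpha>0$ faster still). Therefore there exists a finite $T_0$, depending only on $D$, $b-a$, $\alpha$ and $\mathbf{x}^*$ (not on $t$), such that for all $t>T_0$,
\begin{equation*}
\tfrac{b-a}{2}\Bigl(1+\sum_{j=1}^{t} j^{\alpha}\Bigr) \;>\; D \;\ge\; \|\mathbf{c}_t-\mathbf{x}^*\|_\infty,
\end{equation*}
which gives $\mathbf{x}^*\in\mathcal{X}_t$ and completes the proof. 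The only subtle point is to verify cleanly that the translation step $\mathcal{X}'_t\to\mathcal{X}_t$ does not alter the side length, so that the expansion rate alone controls reachability; apart from that the argument is essentially a sup-norm triangle inequality combined with the classical divergence of $\sum j^\alpha$ for $\alpha\ge -1$.
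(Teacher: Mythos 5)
Your proof is correct and follows essentially the same route as the paper's: both arguments rest on (i) the inductive fact that $\mathcal{X}_t$ remains a hypercube of side length $(b-a)(1+\sum_{j=1}^{t}j^{\alpha})$, (ii) the confinement of the center to the fixed bounded set $\mathcal{C}_{initial}$, and (iii) the divergence of the hyperharmonic series for $\alpha\ge -1$. Your formulation via the sup-norm bound $D=\sup_{\mathbf{c}\in\mathcal{C}_{initial}}\|\mathbf{c}-\mathbf{x}^*\|_\infty$ is just a cleaner packaging of the paper's ``worst case over the center's position'' argument.
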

\vspace*{-2mm}
\begin{proof}
We denote the center of the user-defined finite region $\mathcal C_{initial} = [c_{min},c_{max}]^d$ as $\mathbf{c}_0$. By the assumption of $\mathbf{x}^*$  being not at infinity, there exists a smallest range $[a_g, b_g]^d$ so that both $\mathbf{x}^*$ and  $\mathbf{c}_0$ belong to  $[a_g, b_g]^d$. By induction, the search space $\mathcal X_t$ at iteration $t$ is a hypercube, denoted by $[a_t, b_t]^d$. Following our search space expansion, the center of $\mathcal X_t$ only moves in region $\mathcal C_{initial}$. Therefore, for each dimension $i$,  we have in the worst case,  $(b_t - [\mathbf{c}_0]_i)$ is at least $\frac{c_{min}-c_{max}}{2} + \frac{b_t -a_t}{2}$ and $([\mathbf{c}_0]_i - a_t)$ is at most $\frac{c_{max} -c_{min}}{2} - \frac{b_t - a_t}{2}$. By induction, we can compute the length of $\mathcal X_t$ as $b_t - a_t: b_t- a_t = (b-a)(1+\sum_{j=1}^{t} j^{\alpha})$.
Therefore, $(b_t - [\mathbf{c}_0]_i)$ is at least $ \frac{c_{min}-c_{max}}{2} + \frac{b-a}{2}(1+\sum_{j=1}^{t} j^{\alpha})$ and $([\mathbf{c}_0]_i - a_t)$ is at most $\frac{c_{max}-c_{min}}{2} - \frac{b-a}{2}(1+\sum_{j=1}^{t} j^{\alpha})$.

If there exists a $T_0$ such that two conditions satisfy: (1) $\frac{c_{min}-c_{max}}{2} + \frac{b-a}{2}(1+\sum_{j=1}^{T_0} j^{\alpha}) \ge b_g$, and (2) $\frac{c_{max}-c_{min}}{2} - \frac{b-a}{2}(1+\sum_{j=1}^{T_0} j^{\alpha})  < a_g$, then we can guarantee that for all $t>T_0$, the search space $\mathcal X_t$ will contain $[a_g, b_g]^d$ and thus also contain $\mathbf{x}^*$. Such a $T_0$ exists because $\sum_{j=1}^{t} j^{\alpha}$ is a diverging sum with $t$ when $\alpha \ge -1$ \citep{Chlebus2009}. From the conditions (1) and (2), we can see that $T_0$ is a function of parameters $a$,  $b$, $c_{min}$, $c_{max}$, $\alpha$ and $a_g, b_g$. We provide the complete proof in the Supplementary. \qedhere
\vspace*{-2mm}
\end{proof}
Using the existence of $T_0$, we derive a cumulative regret for our proposed HuBO algorithm by applying the techniques of GP-UCB
as in \citep{Srinivas12}, however, with an adaptation according to the growth of search spaces over time.
\begin{theorem}[Cumulative Regret $R_T$ of HuBO Algorithm]
Let $f \sim \mathcal{GP}(\mathbf{0}, k)$ with a stationary covariance function $k$. Assume that there exist constants $s_1, s_2 > 0$ such that $\mathbb{P}[sup_{\mathbf{x} \in \mathcal X}|\partial f/ \partial x_{i}| > L] \le s_1e^{-(L/s_2)^2}$
for all $L > 0$ and for all $i \in \{1,2,..., d\}$. Pick a $\delta \in (0,1)$. Thus, if  $-1 \le \alpha < 0$ then for any horizon $T > T_0$, the cumulative regret of the proposed HuBO algorithm is bounded as
\begin{itemize}
  \item \scalebox{0.9}{$R_T \le \mathcal O^*(T^{\frac{(\alpha +1)d +1}{2}})$} if $k$ is a SE kernel,
  \item \scalebox{0.9}{$R_T \le \mathcal O^*(T^{\frac{d^2(\alpha +2) + d}{4\nu + 2d(d+1)}})$} if $k$ is a Mat\'ern kernel
\end{itemize}
with probability greater than $1 -\delta$.
\label{cumulative_regret_11}
\end{theorem}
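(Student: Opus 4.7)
The argument follows the GP-UCB template of \citep{Srinivas12}, adapted to the sequence of expanding search spaces $\{\mathcal{X}_t\}$ produced by the hyperharmonic expansion rule.

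First I would invoke Theorem \ref{theorem:11}: for $\alpha \in [-1,0)$ there is a finite $T_0$, independent of $T$, such that $\mathbf{x}^* \in \mathcal{X}_t$ for every $t > T_0$. Splitting $R_T = \sum_{t \le T_0} r_t + \sum_{T_0 < t \le T} r_t$, the first sum is an $O(1)$ additive constant (controlled by $\|f\|_\infty$ on the initial box, which is finite with high probability by the gradient tail assumption), so all asymptotic work concerns the second sum, on which $\mathbf{x}^* \in \mathcal{X}_t$.

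Second, for each $t > T_0$ I would build a regular grid discretisation $\tilde{\mathcal{X}}_t \subset \mathcal{X}_t$ with $\tau_t^d$ points, whose nearest-neighbour radius is $(b-a)(1+\sum_{j\le t} j^\alpha)/\tau_t$. The gradient-tail assumption gives, via a union bound over coordinates, a high-probability Lipschitz constant $L = s_2\sqrt{\log(4ds_1/\delta)}$ for $f$ on $\mathcal{X}_T$; choosing $\tau_t = dt^2 s_2(b-a)(1+\sum_{j\le t} j^\alpha)\sqrt{\log(4ds_1/\delta)}$ then bounds the discretisation error $|f(\mathbf{x}) - f([\mathbf{x}]_t)|$ by $1/t^2$. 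Standard GP concentration combined with a union bound over $\tilde{\mathcal{X}}_t$ with per-round weight $\pi_t^{-1}\delta$ shows the HuBO choice of $\beta_t$ is exactly the quantity needed for $|f(\mathbf{x}) - \mu_{t-1}(\mathbf{x})| \le \sqrt{\beta_t}\sigma_{t-1}(\mathbf{x})$ to hold simultaneously over $t \ge 1$ and $\mathbf{x} \in \tilde{\mathcal{X}}_t$; since $\sum_t \pi_t^{-1} = 1$ the total failure probability remains at most $\delta$. Combining these confidence bands with the UCB rule $\mathbf{x}_t = \arg\max_{\mathcal{X}_t} u_t(\mathbf{x})$ and with $\mathbf{x}^* \in \mathcal{X}_t$ for $t > T_0$ yields the per-round inequality $r_t \le 2\sqrt{\beta_t}\sigma_{t-1}(\mathbf{x}_t) + 1/t^2$.

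Third, Cauchy--Schwarz together with the classical identity $\sum_{t=1}^{T}\sigma_{t-1}^2(\mathbf{x}_t) \le C_1 \gamma_T$ (with $C_1 = 2/\log(1+\sigma^{-2})$) yields $R_T \le \sqrt{C_1 T \beta_T \gamma_T} + O(1)$, and $\beta_T = O^*(\log T)$. To finish I would bound the maximum information gain $\gamma_T$ on the growing $\mathcal{X}_T$: since $1+\sum_{j=1}^{T} j^\alpha = O^*(T^{\alpha+1})$ for $\alpha \in [-1,0)$, the linear size of $\mathcal{X}_T$ is $L_T = O^*(T^{\alpha+1})$ and its volume is $O^*(T^{(\alpha+1)d})$. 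Applying the eigenvalue-decay style argument of \citep{Srinivas12} to a domain of scale $L_T$ gives $\gamma_T = O^*(L_T^d)$ for the SE kernel and hence $R_T = O^*(T^{((\alpha+1)d + 1)/2})$; for the Mat\'ern-$\nu$ kernel the analogous computation inherits both the Mat\'ern spectral factor $d(d+1)/(2\nu+d(d+1))$ and a polynomial dependence on $L_T$, and after simplification produces the exponent stated in the theorem.

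\textbf{Main obstacle.} The delicate step is the information-gain bound on the \emph{growing} domain. The original GP-UCB analysis is stated for a fixed compact set, whereas here the Gram matrix at time $T$ is built from evaluations that can be spread across $\mathcal{X}_T$ of polynomial volume in $T$, so one must quantify how the eigenvalue decay of the kernel integral operator degrades with $L_T$ and feed the resulting rate into the standard $\gamma_T$ inequality; for the Mat\'ern kernel this requires a careful rescaling of the Mercer expansion. A secondary subtlety is the coupling of the mesh $\tau_t$ with the expansion factor $(1+\sum j^\alpha)$: the explicit presence of this factor inside the logarithm of $\beta_t$ is precisely what allows the per-round union bound over $\tilde{\mathcal{X}}_t$ to remain valid for every $t$ simultaneously while keeping $\beta_T$ only polylogarithmic in $T$.
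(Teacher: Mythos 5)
Your proposal is correct and follows essentially the same route as the paper: split the regret at the reachability time $T_0$ (treating the first $T_0$ terms as a constant), run the standard GP-UCB discretisation argument with the expansion-adjusted $\beta_t$ to get $r_t \le 2\sqrt{\beta_t}\sigma_{t-1}(\mathbf{x}_t) + 1/t^2$ for $t > T_0$, and feed a volume-dependent information-gain bound for the growing domain into the Cauchy--Schwarz step, with $\gamma_T = O^*(T^{(\alpha+1)d})$ for SE and the rescaled Mat\'ern exponent otherwise. The one detail to tighten is that, because the search space also translates, the queried points need not all lie in $\mathcal{X}_T$ itself; the paper therefore bounds $\gamma_T$ over a fixed enclosing box $\mathcal{C}_T \supseteq \bigcup_{t\le T}\mathcal{X}_t$ (its Lemma on $\mathcal{C}_T$), whose side length has the same $O^*(T^{\alpha+1})$ order, so your stated rates are unaffected.
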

\paragraph{Sub-linear Regret}
By Theorem \ref{cumulative_regret_11}, the HuBO algorithm obtains a sub-linear cumulative regret for SE kernels if $-1 \le \alpha < -1 + \frac{1}{d}$, and for Mat\'ern kernels if $-1 \le \alpha < \text{min}\{ 0, -1 + \frac{2\nu}{d^2}\}$.
\section{HD-HuBO Algorithm in High Dimensions}
Further, we extend the HuBO algorithm for high dimensional spaces. As discussed in the introduction, maximisation of the acquisition function in a crucial step when working with unknown high dimensional search spaces. Given the same computation budget, the larger the expanded search space, the less accurate the maximiser suggested by the acquisition step is. To improve this step, our solution is to restrict the search space. We propose a novel volume expansion strategy as follows. Starting from $\mathcal X_0$, the search space $\mathcal H_t$ at iteration $t$ with $t \ge 1$ is defined as
\begin{eqnarray}
\mathcal H_t = \{H(\mathbf{z}^{1}_t, l_h) \cup ...\cup H(\mathbf{z}^{N_t}_t, l_h)\} \cap \mathcal X_t,
\end{eqnarray}
where $\mathcal X_t$ is the search space of HuBO and is defined in \ref{transformation}, $H(\mathbf{z}^{i}_t, l_h)$ is a $d$-dimensional hypercube centered at $\mathbf{z}^{i}_t$ with size $l_h$, and $N_t$ denotes the number of such hypercubes at iteration $t$. To handle the computational requirement, we choose $l_h$ to be small. Thus, at an iteration $t$, we maximise the acquisition function on only this finite set of hypercubes in $\mathcal X_t$ with small size.

Importantly, we can show that the maximisation on such hypercubes can result in low regret by proposing
a strategy to choose the set of hypercubes. Formally, at iteration $t$ we choose $N_t = N_0\ceil*{t^{\lambda}}$ where $\lambda \ge 0$, $N_0 \in \mathbb{N}$ and $N_0 \ge 1$. We choose $N_t$ hypercubes with centres $\{\mathbf{z}^{i}_t\}$ which are sampled uniformly at random from $\mathcal X_t$. We refer to this algorithm as HD-HuBO which is described in Algorithm \ref{alg:alg2}. We use the acquisition function $u_t(\mathbf{x})$ with $\beta_t = 2log(\pi^2t^2/\delta) +  2dlog(2s_2l_hd\sqrt{log(6ds_1/\delta)}t^2)$, where $s_1, s_2$ is defined in Theorem \ref{theorem44}.
\begin{algorithm}[tb]
\caption{HD-HuBO Algorithm}
\label{alg:alg2}
\textbf{Parameters}: $\alpha \in \mathbb{R}$- rate of expanding the search space, $\lambda \in \mathbb{R^+}$ and $N_0$- the parameters related to the number of hypercubes, $l_h \in \mathbb{R^{+}}$- the size of hypercubes\\
\textbf{Initialisation}: Define an initial space $\mathcal X_0= [a, b]^d$, an initial domain $\mathcal C_{initial} =[c_{min}, c_{max}]^d$, where $\mathcal X_0 \subseteq \mathcal C_{initial}$. Sample initial points in $\mathcal X_0$ to construct $\mathcal D_{0}$.
\begin{algorithmic}[1] 
\FOR{$t = 1, 2, ...T $}
    \STATE Fit a Gaussian process using $\mathcal D_{t-1}$.
    \STATE Update the search space $\mathcal H_t = \{H(\mathbf{z}^1_t, l_h) \cup ...\cup H(\mathbf{z}^{N_t}_t, l_h)\} \cap \mathcal X_t$, where $N_t = N_0\ceil*{t^{\lambda}}$ and $N_t$ values of $\mathbf{z}^{i}_t$ are drawn uniformly at random from $\mathcal X_t$.
    \STATE Find $\mathbf{x}_t = \text{argmax}_{\mathbf{x} \in \mathcal H_t} u_t(\mathbf{x})$
    \STATE Sample $y_t = f(\mathbf{x}_t) + \epsilon_t$.
    \STATE Augment the data $\mathcal D_{t} = \{\mathcal D_{t-1}, (\mathbf{x}_{t}, y_t)\}$
\ENDFOR
\end{algorithmic}
\end{algorithm}
\vspace*{-4mm}
\subsection{Convergence Analysis for HD-HuBO Algorithm}
In this section, we analyse the convergence of our proposed HD-HuBO algorithm. Similar to HuBO algorithm, a Reachability property is necessary to guarantee the convergence. On the restricted search space $\mathcal H_t$, it is a crucial challenge. To overcome this, we estimate the distance between $\mathbf{x}^*$ and $\mathbf{x}^*_t$ which is the closest point to $\mathbf{x}^*$ in the search space $\mathcal H_t$, as shown in the following Theorem \ref{reachbility22}. Thus, although we cannot maintain the Reachability property as in HuBO algorithm, we can still obtain a similar Reachability property with high probability if $\lambda > d(\alpha +1)$ and $-1 \le \alpha < 0$, where $\alpha$ is the expansion rate of the search space and is defined as in HuBO algorithm.
\begin{theorem}
Pick a $\delta \in (0,1)$. Let $\mathbf{x}^*_t \in \mathcal H_t$ be the closest point to $\mathbf{x}^*$ in the search space $\mathcal H_t$. For any $t > T_0$ and $-1 \le \alpha < 0$, with probability greater than $1- \delta$, we have
\begin{equation}
||\mathbf{x}^*_t - \mathbf{x}^*||_2 < \frac{2(b-a)}{\pi}(\Gamma(\frac{d}{2} +1))^{\frac{1}{d}}(log(\frac{1}{\delta}))^{\frac{1}{d}}M_t,
\end{equation}
where the constant $T_0$ is defined in Theorem \ref{theorem:11}, $\Gamma$ is the gamma function, and $ M_t= (2 + ln(t))t^{-\frac{\lambda}{d}}$ if $\alpha = -1$, otherwise, $ M_t = 2(\alpha +1)^{-1}t^{-\frac{\lambda}{d}}$ if $-1 < \alpha < 0$.
\label{reachbility22}
\end{theorem}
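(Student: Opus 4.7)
The plan is to reduce the reachability statement to a nearest-neighbour question in $\mathcal X_t$ and then apply a standard volumetric probability bound. First, since $t > T_0$, Theorem~\ref{theorem:11} already guarantees $\mathbf{x}^* \in \mathcal X_t$. Because every random centre satisfies $\mathbf{z}^i_t \in H(\mathbf{z}^i_t, l_h) \cap \mathcal X_t \subseteq \mathcal H_t$, the nearest-point distance is controlled as $||\mathbf{x}^*_t - \mathbf{x}^*||_2 \le \min_i ||\mathbf{z}^i_t - \mathbf{x}^*||_2$. It therefore suffices to argue that, with probability at least $1-\delta$, at least one of the $N_t$ i.i.d.\ uniform samples drawn from $\mathcal X_t$ lands in a Euclidean ball $B(\mathbf{x}^*, r)$ whose radius matches the bound in the statement.

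Next, I would execute a ``no sample in the ball'' calculation. Set $L_t = (b-a)\bigl(1 + \sum_{j=1}^t j^\alpha\bigr)$ so that $\mathrm{vol}(\mathcal X_t) = L_t^d$. For any $r$, the probability that a single uniform sample lies in $B(\mathbf{x}^*, r)$ is at least $\pi^{d/2} r^d / (\Gamma(d/2+1)\, L_t^d)$, up to a bounded boundary-effect constant. Independence across centres then gives
\[
\Pr\!\Bigl[\min_i ||\mathbf{z}^i_t - \mathbf{x}^*||_2 > r\Bigr] \;\le\; \Bigl(1 - \tfrac{\pi^{d/2} r^d}{\Gamma(d/2+1)\, L_t^d}\Bigr)^{N_t} \;\le\; \exp\!\Bigl(-\tfrac{N_t\, \pi^{d/2} r^d}{\Gamma(d/2+1)\, L_t^d}\Bigr).
\]
Forcing the right-hand side to be at most $\delta$ and solving for $r$ yields $r \lesssim L_t\, N_t^{-1/d}\, (\Gamma(d/2+1))^{1/d}\, \pi^{-1/2}\, (\log(1/\delta))^{1/d}$, which already has the overall shape of the claimed inequality.

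Third, I would substitute $N_t \ge t^\lambda$ together with sharp integral estimates for the partial hyperharmonic sum to recover the two branches of $M_t$. For $\alpha = -1$, $\sum_{j=1}^t j^{-1} \le 1 + \ln t$ gives $L_t/(b-a) \le 2 + \ln t$, producing the $(2+\ln t)\, t^{-\lambda/d}$ branch. For $-1 < \alpha < 0$, $\sum_{j=1}^t j^\alpha \le 1 + (t^{\alpha+1}-1)/(\alpha+1)$, which for $t > T_0$ sufficiently large is dominated by $2(\alpha+1)^{-1}$ times the appropriate leading power, producing the second branch after folding the dependence on $(b-a)$ and $(\Gamma(d/2+1))^{1/d}$ into the prefactor.

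The main obstacle will be cleanly absorbing the geometric boundary effect when $\mathbf{x}^*$ sits near $\partial \mathcal X_t$: only a fraction (at worst $2^{-d}$, attained when $\mathbf{x}^*$ is at a corner) of $B(\mathbf{x}^*, r)$ actually lies inside $\mathcal X_t$, and that multiplicative loss must be folded into the leading $2(b-a)/\pi$ constant without distorting the dependence on $t$. A secondary subtlety is that the exponential bound $(1-p)^{N_t} \le e^{-N_t p}$ produces a genuinely vanishing $r$ only when $\lambda > d(\alpha+1)$, which is precisely the regime in which $M_t \to 0$ and therefore explains the hypothesis under which this asymptotic reachability property becomes meaningful.
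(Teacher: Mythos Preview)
Your proposal is correct and follows essentially the same volumetric argument as the paper: both bound the probability that no uniformly sampled centre lands near $\mathbf{x}^*$ via the ball-to-cube volume ratio, apply the $(1-p)^{N_t}\le e^{-N_t p}$ bound, handle the corner case with a $2^{-d}$ factor, and finish by inserting $N_t\ge t^\lambda$ together with the hyperharmonic-sum estimates from Lemma~\ref{le:2}. Your version is marginally cleaner in that you use the centres $\mathbf{z}^i_t\in\mathcal H_t$ directly, whereas the paper first argues via hypercubes intersecting $S_\theta$ (gaining an $l_h/2$ in the radius) and then discards that gain anyway.
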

By Theorem \ref{reachbility22}, for both cases $\alpha = -1$ and $-1 < \alpha < 0$, $\lim_{t \rightarrow \infty} M_t \rightarrow 0$ if $\lambda > d(\alpha +1)$. Therefore, $\lim_{t \rightarrow \infty} ||\mathbf{x}^*_t - \mathbf{x}^*||_2 \rightarrow 0$ if $\lambda > d(\alpha +1)$. The Reachability property is guaranteed with high probability.

Using Theorem \ref{reachbility22}, we derive a cumulative regret for our proposed HD-HuBO algorithm as follows.
\begin{theorem}[Cumulative Regret $R_T$ of HD-HuBO Algorithm]
Let $f \sim \mathcal{GP}(\mathbf{0}, k)$ with a stationary covariance function $k$. Assume that there exist constants $s_1, s_2 > 0$ such that $\mathbb{P}[sup_{\mathbf{x} \in \mathcal X}|\partial f/ \partial x_{i}| > L] \le s_1e^{-(L/s_2)^2}$
for all $L > 0$ and for all $i \in \{1,2,..., d\}$. Pick a $\delta \in (0,1)$. Then, with $T > T_0$, under conditions $\lambda > d(\alpha +1)$, $-1 \le \alpha < 0$, $l_h > 0$, the cumulative regret of proposed HD-HuBO algorithm is bounded as
\begin{itemize}
  \item \scalebox{0.9}{$R_T \le \mathcal O^*(T^{\frac{(\alpha +1)d +1}{2}} + (log(\frac{6}{\delta}))^{\frac{1}{d}}B_T)$} if $k$ is a SE kernel,
  \item \scalebox{0.9}{$R_T \le \mathcal O^*(T^{\frac{d^2(\alpha +2) + d}{4\nu + 2d(d+1)}+ \frac{1}{2}} + (log(\frac{6}{\delta}))^{\frac{1}{d}}B_T)$} if $k$ is a Mat\'ern kernel,
\end{itemize}
with probability greater than $1 -\delta$, where $B_T =U_T V_T$ such that $U_T = 2 + ln(T)$ if $\alpha =-1$, otherwise $U_T = 2(\alpha +1)^{-1}$, and $V_T = 1 + ln(T)$ if $\lambda = d$, otherwise $V_T = 1 + \frac{d}{d- \lambda} \text{max}\{ 1, T^{1 -\frac{\lambda}{d}}\}$.
\label{theorem44}
\end{theorem}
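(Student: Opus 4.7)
The plan is to decompose the instantaneous regret through the point $\mathbf{x}^*_t \in \mathcal H_t$ closest to $\mathbf{x}^*$:
\begin{equation*}
r_t = [f(\mathbf{x}^*) - f(\mathbf{x}^*_t)] + [f(\mathbf{x}^*_t) - f(\mathbf{x}_t)] .
\end{equation*}
The first bracket (reach error) is a function gap on the unrestricted space $\mathcal X_t$ and is controlled via the derivative tail assumption combined with Theorem~\ref{reachbility22}. The second bracket (acquisition error) is a gap between two points that both lie in $\mathcal H_t$ and is amenable to the standard GP-UCB template, only applied on $\mathcal H_t$ instead of a fixed cube.

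For the reach error, I would use the derivative tail assumption to get, with probability at least $1-\delta/3$, a Lipschitz bound $|f(\mathbf{x}^*) - f(\mathbf{x}^*_t)| \le L\|\mathbf{x}^*_t - \mathbf{x}^*\|_2$ on $\mathcal X_t$, where $L$ is polylogarithmic in $s_1, s_2$ and $1/\delta$. Theorem~\ref{reachbility22} then replaces the norm by $\mathcal O^*((\log(1/\delta))^{1/d} M_t)$. Summing over $t \le T$ yields precisely the product $U_T V_T$: the prefactor $U_T$ comes directly from $M_t$ (with the $\ln T$ in the borderline case $\alpha=-1$), while $V_T$ arises from $\sum_{t\le T} t^{-\lambda/d}$, which is $\Theta(\ln T)$ when $\lambda=d$, bounded when $\lambda>d$, and $\Theta(T^{1-\lambda/d})$ when $\lambda<d$.

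For the acquisition error, I would follow the Srinivas--Krause--Kakade--Seeger argument. Since $\mathbf{x}_t$ is the UCB maximiser on $\mathcal H_t$ and $\mathbf{x}^*_t \in \mathcal H_t$, the usual UCB inequality gives $f(\mathbf{x}^*_t) - f(\mathbf{x}_t) \le 2\sqrt{\beta_t}\,\sigma_{t-1}(\mathbf{x}_t)$ provided that confidence bounds hold uniformly on $\mathcal H_t$. A discretisation of $\mathcal X_t$ at resolution $t^{-2}$ has only polynomially many cells because the side of $\mathcal X_t$ grows as $(1+\sum_{j\le t} j^\alpha) = \mathcal O(t^{\alpha+1})$, and the union bound over this grid is absorbed by our choice of $\beta_t$. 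Cauchy--Schwarz plus the mutual-information inequality $\sum_{t\le T}\sigma_{t-1}^2(\mathbf{x}_t) \le C\gamma_T$ then give $\sum_{t\le T} 2\sqrt{\beta_t}\,\sigma_{t-1}(\mathbf{x}_t) = \mathcal O^*(\sqrt{T\beta_T\gamma_T})$. Substituting the SE bound $\gamma_T = \mathcal O((\log T)^{d+1})$ inflated by the volume of $\mathcal X_T$ reproduces the HuBO exponent $((\alpha+1)d+1)/2$, while the analogous Matérn bound, together with an additional $\sqrt{T}$ factor needed to transfer the confidence bound through the random placement of the hypercubes, yields the quoted Matérn exponent with its extra $+1/2$.

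The proof concludes with a triple union bound over the confidence event, the Lipschitz event for $f$, and the reachability event of Theorem~\ref{reachbility22}, each at level $\delta/3$, and by summing the reach-error and acquisition-error contributions. The main obstacle, I expect, is the acquisition-error analysis on the random set $\mathcal H_t$: the confidence set has to be established on all of $\mathcal X_t$ to cover the (random) point $\mathbf{x}^*_t$, while only $\mathbf{x}_t \in \mathcal H_t$ drives the information-gain sum; reconciling these two scales without inflating $\beta_t$ beyond the $\mathcal O(d\log t)$ declared by the algorithm is what forces the extra $\sqrt{T}$ in the Matérn rate. The remaining steps are the same series bookkeeping already used for Theorems~\ref{theorem:11} and \ref{cumulative_regret_11}.
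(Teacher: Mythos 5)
Your decomposition $r_t = [f(\mathbf{x}^*)-f(\mathbf{x}^*_t)] + [f(\mathbf{x}^*_t)-f(\mathbf{x}_t)]$, the use of Theorem~\ref{reachbility22} plus the Lipschitz tail bound for the reach error, and the series bookkeeping that produces $B_T = U_T V_T$ are exactly the paper's route (its Lemmas on Parts 1--3 and Proposition 3). However, you leave open precisely the step the paper actually has to resolve: how to get the upper confidence bound to hold at the random point $\mathbf{x}^*_t$ without inflating $\beta_t$. Your suggestion of discretising all of $\mathcal X_t$ would force $\beta_t$ to depend on the growing side length $(b-a)(1+\sum_{j\le t}j^{\alpha})$, which is not the $\beta_t$ declared for HD-HuBO. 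The paper's fix is that $\mathbf{x}^*_t$ lies in \emph{one specific hypercube} $H(\mathbf{z}^*_t,l_h)$ of fixed side $l_h$, so it suffices to discretise that single small cube into $(s_2 l_h d\sqrt{\log(2ds_1/\delta)}\,t^2)^d$ points; this is exactly why $l_h$ (and not the width of $\mathcal X_t$) appears in $\beta_t$. The confidence bound at the queried point $\mathbf{x}_t$ itself needs only a pointwise (Lemma~5.5-style) argument, so no grid over $\mathcal X_t$ is ever required.

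A second, smaller issue: your explanation of the extra $+\tfrac12$ in the Mat\'ern exponent as ``an additional $\sqrt T$ factor needed to transfer the confidence bound through the random placement of the hypercubes'' is not what happens. In the paper the $+\tfrac12$ is just the ordinary $\sqrt{T}$ from Cauchy--Schwarz in $\sum_t \sqrt{\beta_t}\sigma_{t-1}(\mathbf{x}_t)\le\sqrt{C_1 T\beta_T\gamma_T(\mathcal C_T)}$ combined with $\gamma_T(\mathcal C_T)=\mathcal O\bigl(T^{\frac{d^2(\alpha+2)+d}{2\nu+d(d+1)}}\bigr)$; the same $\sqrt{T}$ is present in the SE case, where it is simply absorbed into the exponent $\frac{(\alpha+1)d+1}{2}$. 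Nothing about the randomness of the hypercubes contributes an extra power of $T$ to the information-gain term; the randomness only enters through the reach-error term $A(\log(6/\delta))^{1/d}B_T$. Apart from these two points, your plan (triple union bound at level $\delta/3$, summation of $M_t$ into $U_TV_T$, and the constant from $t\le T_0$) matches the paper's proof.
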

\vspace*{-5mm}
\paragraph{Sub-linear Regret} The upper bound on $R_T$ is sub-linear because we have $\lim_{T \rightarrow \infty} \frac{B_T}{T} = 0$ for all the cases of $U_T$ and $V_T$. The conditions on $\alpha$ is maintained as in HuBO to guarantee the sub-linear regret for SE kernels and Mat\'ern kernels. Together with conditions on $\lambda$, our proposed HD-HuBO obtains a sub-linear cumulative regret for SE kernels if $-1 \le \alpha < -1 + \frac{1}{d}$ and $\lambda > d(\alpha +1)$, and for Mat\'ern kernels if $-1 \le \alpha < -1 + \frac{2\nu}{d^2}$ and $\lambda > d(\alpha +1)$. We note that the regret bound of HD-HuBO is higher than HuBO's regret bound, however HD-HuBO uses only the restricted search space of HuBO.
\vspace*{-4mm}
\section{Discussion}
\paragraph{On the use of hypercubes} While Eriksson et al. \citep{ErikssonPGTP19} proposed to use hypercubes for high dimensional BO, our main contribution is a high dimensional BO for \emph{unknown search spaces} (a novel problem setting). Unlike in \citep{ErikssonPGTP19} where the number of hypercubes are fixed, we provide a rigorous method to increase the number of hypercubes with iterations which is required for our case as the search space is unknown and an initial randomly specified search space needs to keep growing to ensure the convergence. Further, in \citep{ErikssonPGTP19}, there is no theoretical analysis of regret, nor there is any rigorous analysis on the number of hypercubes.
\vspace*{-3mm}
\paragraph{On the effect of $\alpha$ and $\lambda$ parameters}
For both our algorithms, to achieve the tightest sub-linear term in the regret, the parameter $\alpha$ needs to be as small as possible while being in the kernel-specific permissible range. However, a small $\alpha$ may lead to a higher value of $T_0$, which may increase finite-time regret. For HD-HuBO, a high $\lambda$ may lead to a larger  volume of the restricted search space. Our algorithm offers a  range of operating choices (through the choice of $\lambda$) while still guaranteeing different grades of sub-linear rate.
\vspace*{-2mm}
\section{Experiments}
To evaluate the performance of our algorithms, HuBO and HD-HuBO, we have conducted a set of experiments involving optimisation of five benchmark functions and three real applications. We compare our algorithms against five baselines: (1) UBO: the method in a recent paper \citep{ha2019}, (2) FBO: the method in \citep{Nguyen19}, (3) Vol2: BO with the search space volume doubled every $3d$ iterations \citep{shahriari16}, (4) Re-H: the Regularized acquisition function with a hinge-quadratic prior \citep{shahriari16}; (5) Re-Q: the Regularized acquisition function with a quadratic prior \citep{shahriari16}.
\begin{figure*}[ht]
\centering
\subfigure{\includegraphics[scale=1.0,width=.32\textwidth,height= .13\textheight]{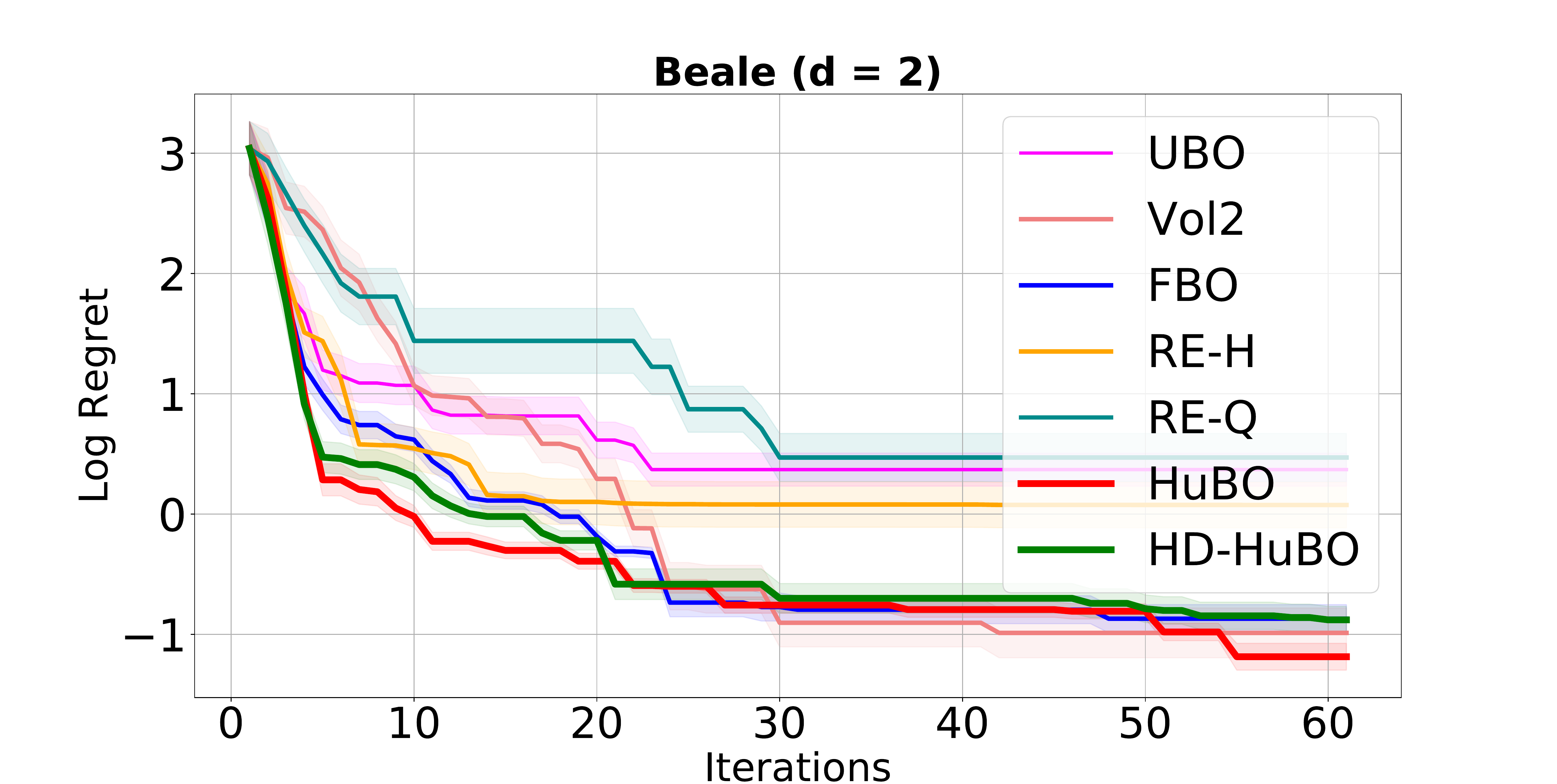}
}\hfill
\subfigure{\includegraphics[scale=1.0,width=.32\textwidth,height= .13\textheight]{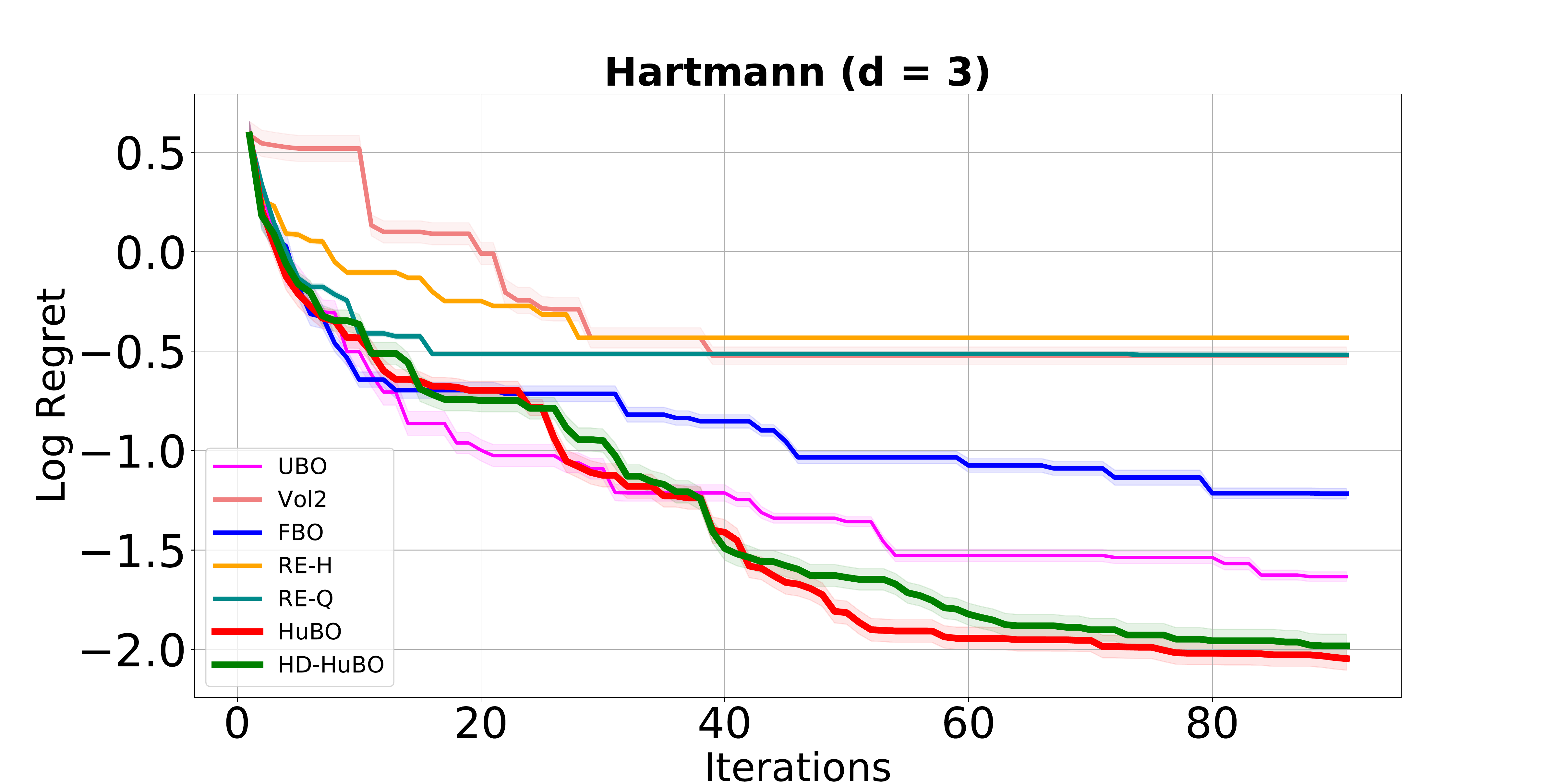}
}\hfill
\subfigure{\includegraphics[scale=1.0,width=.32\textwidth,height= .13\textheight]{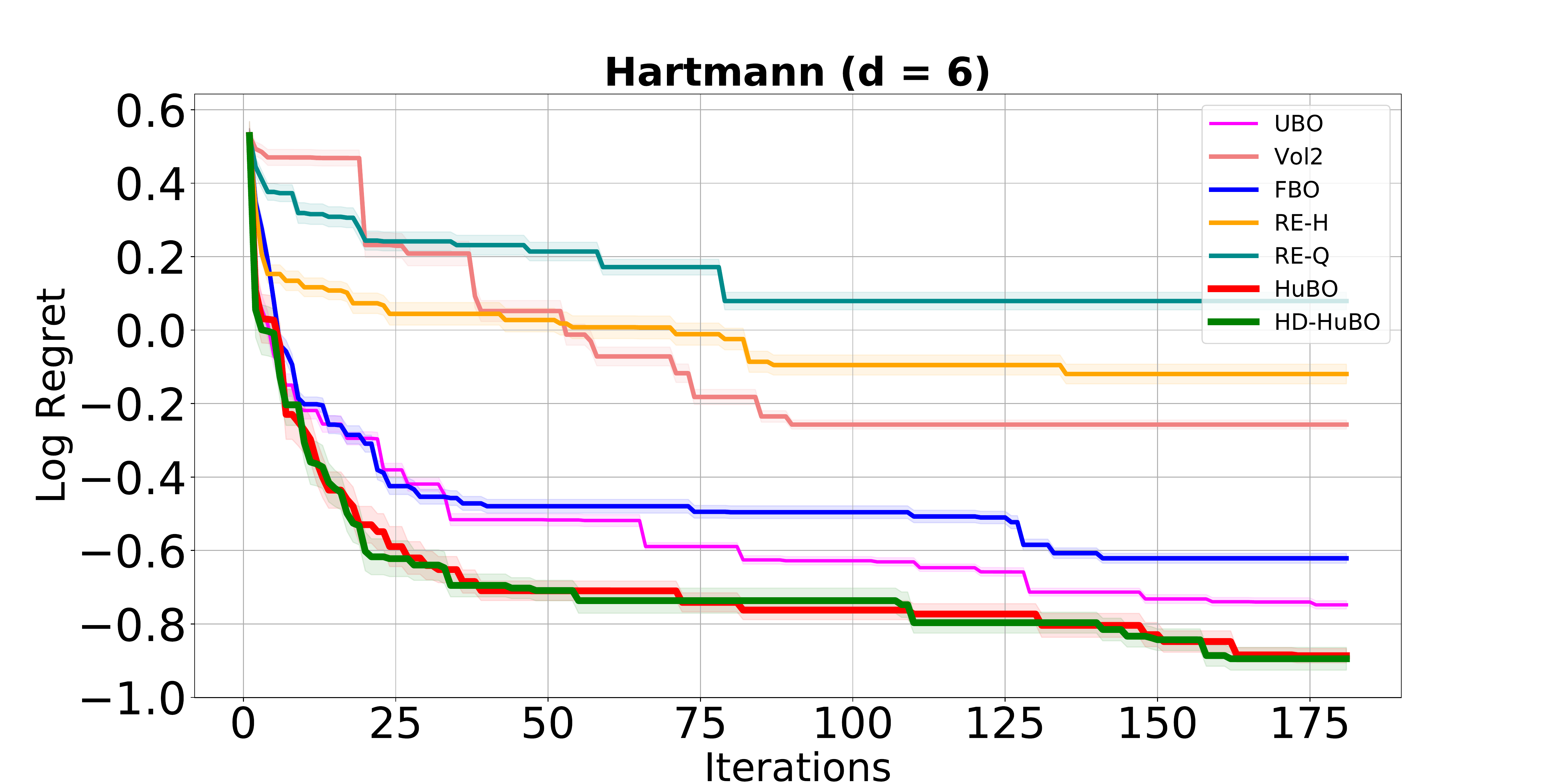}
}
\vspace{-3mm}
\caption{Comparison of baselines and the proposed methods in low dimensions.}
\label{figure1}
\vspace*{-6mm}
\end{figure*}
\begin{figure}[ht]
\centering
\subfigure{\includegraphics[scale=1.0,width=.32\textwidth,height= .13\textheight]{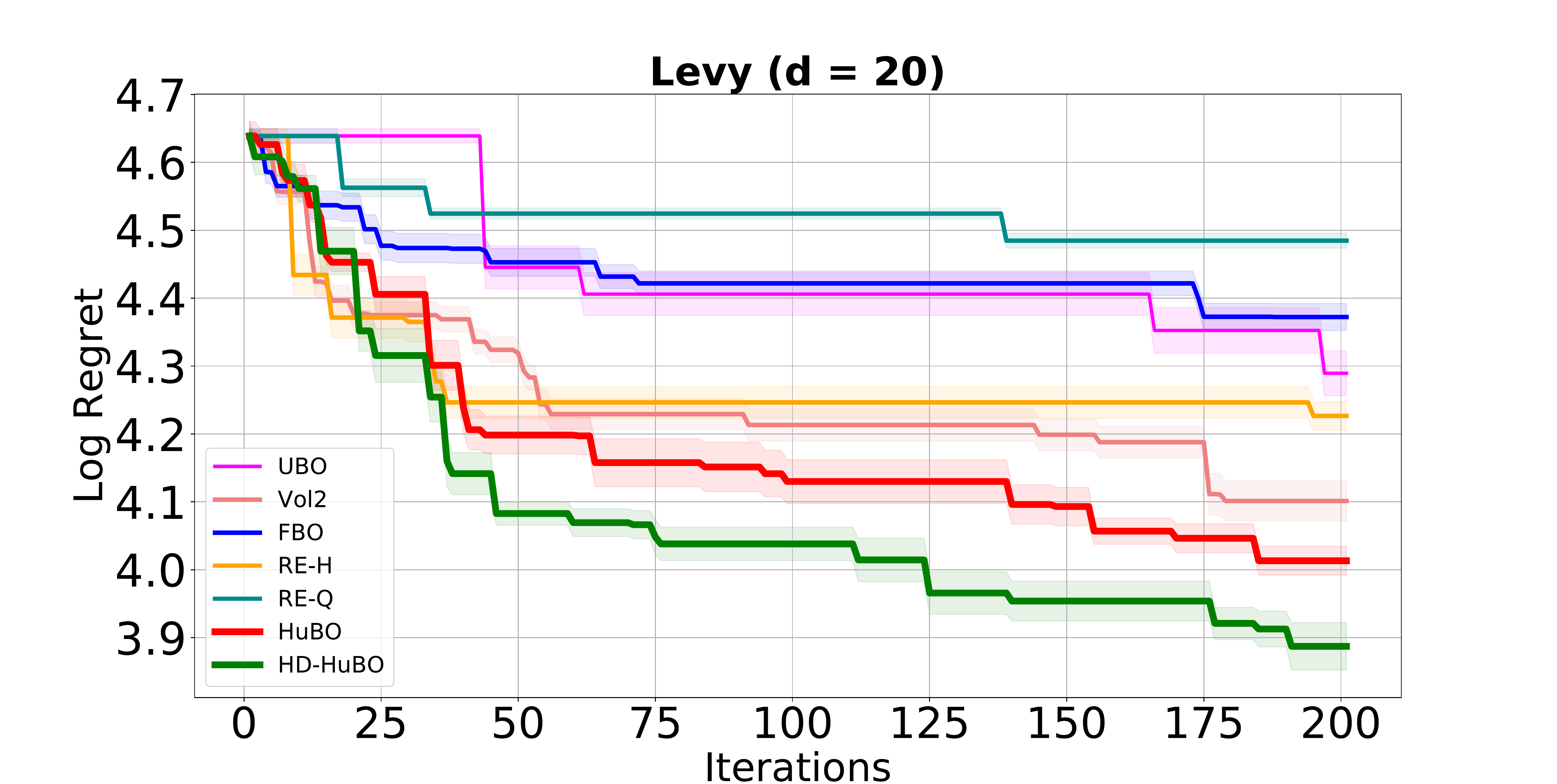}
}\hfill
\subfigure{\includegraphics[scale=1.0,width=.32\textwidth,height= .13\textheight]{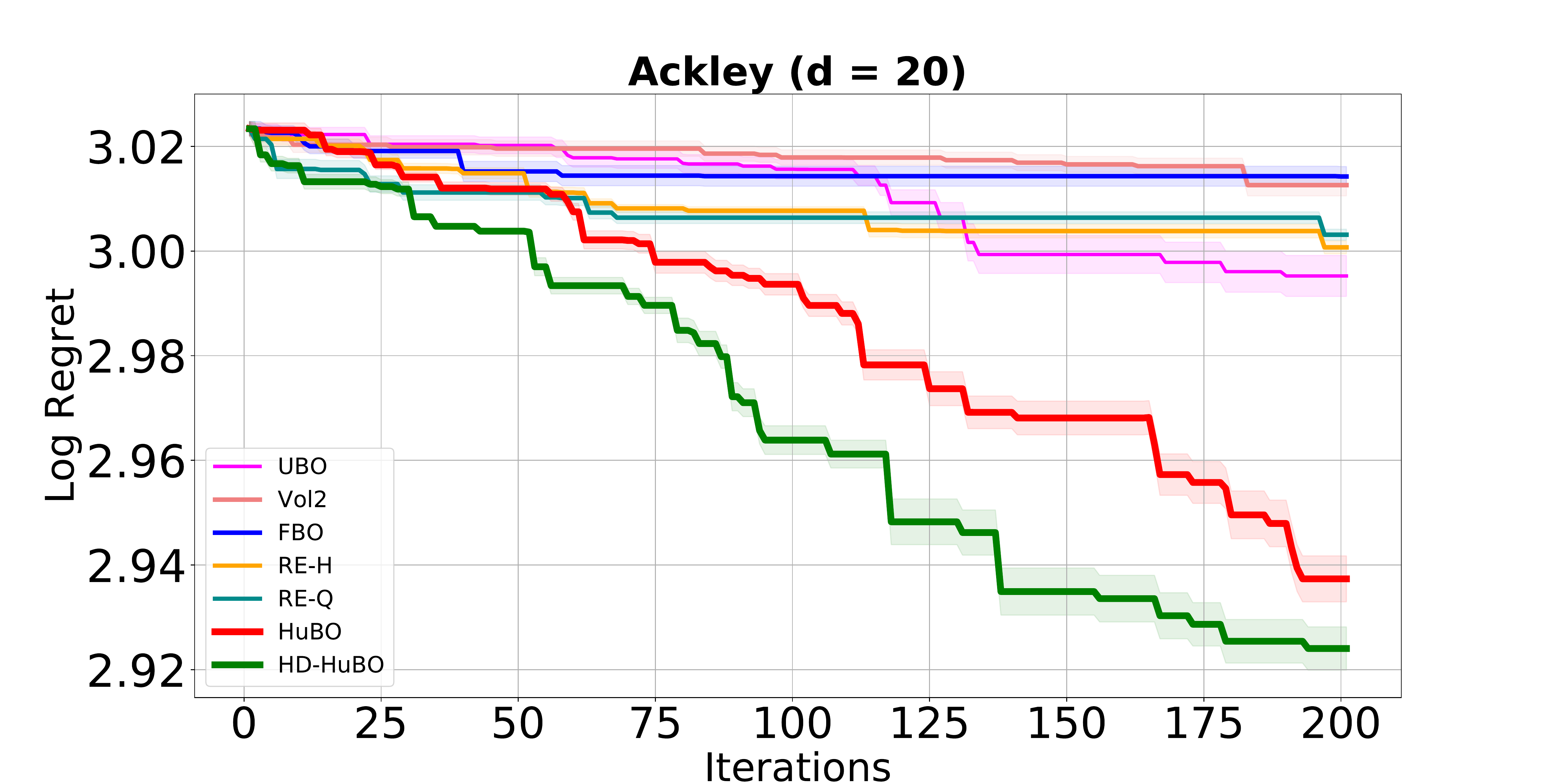}
}\hfill
\subfigure{\includegraphics[scale=1.0,width=.32\textwidth,height= .13\textheight]{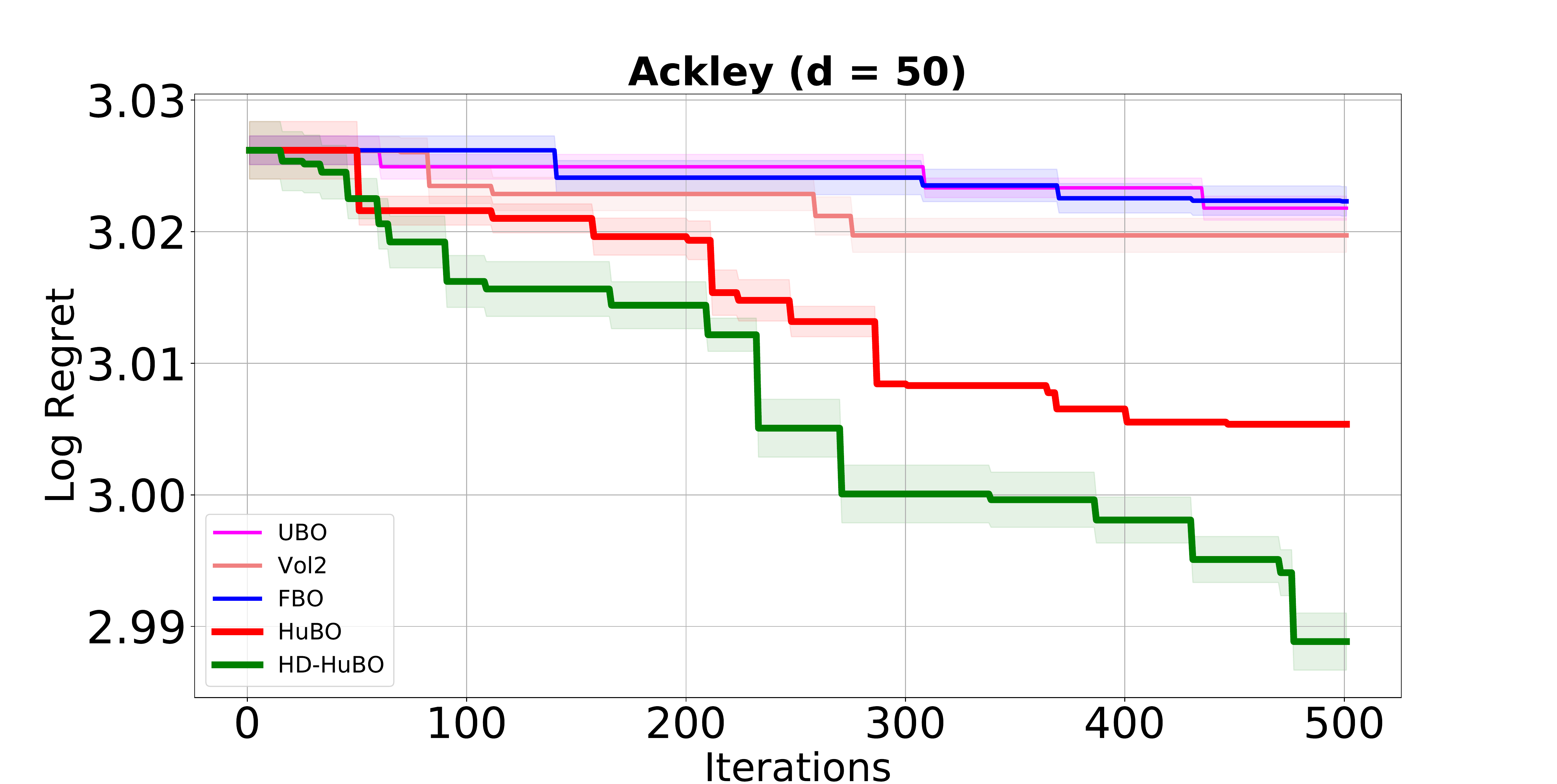}
}
\vspace*{-3mm}
\caption{Comparison of baselines and the proposed methods in high dimensions.}
\label{figure2}
\vspace*{-3mm}
\end{figure}
\paragraph{Experimental settings}
Following the setting of the initial search space $\mathcal X_0$ as in all baselines \citep{shahriari16,Nguyen19,ha2019}, we select the $\mathcal X_0$ as $20\%$ of the pre-defined function domain. For example, if $\mathcal X =[0,1]^d$, the size of $\mathcal X_0$ is the 0.2 where its center is placed randomly in the domain $[0,1]^d$. For our algorithms, we set $\mathcal C_{inital}$ as 10 times to the size of $\mathcal X_0$ along each dimension. We note that we also validate our algorithms by considering additionally a case where $\mathcal X_0$ is only $2\%$ (very small) of the pre-defined function domain. We report this case in the \textbf{supplementary material}.

For all algorithms, the Squared Exponential kernel is used to model GP. The GP models are fitted using the Maximum Likelihood Estimation. The function evaluation budget is set to $30d$ in low dimensions and $10d$ in high dimensions where $d$ is the input dimension. The experiments were repeated 15 times and average performance is reported. For the error bars (or variances), we use the standard error: $\text{Std. Err} = \text{Std. Dev}/\sqrt{n}$, $n$ being the number of runs.
\begin{figure}[ht]
  \centering
  \subfigure{\includegraphics[scale=1.0,width=.40\textwidth]{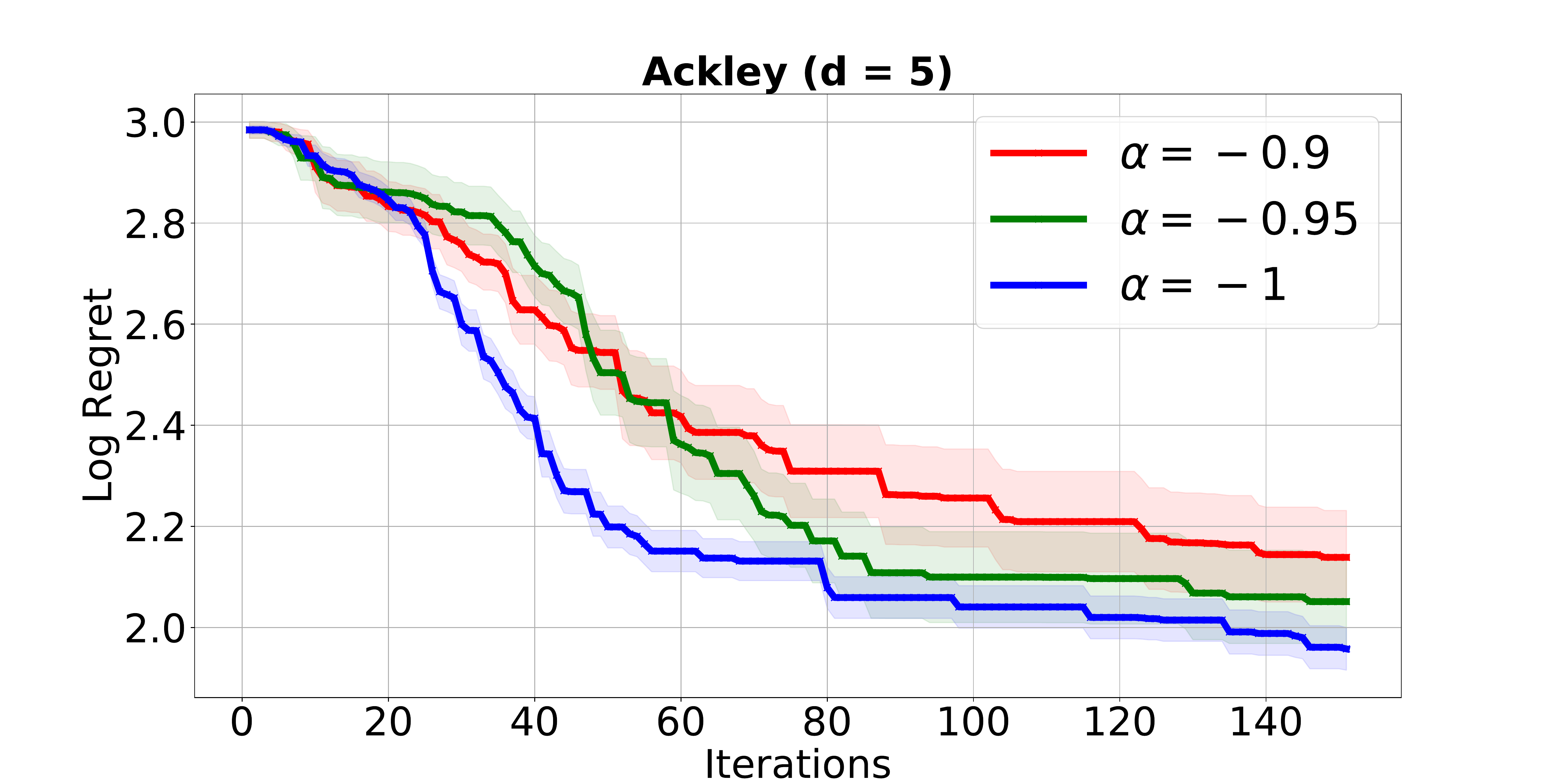}}\quad
  \subfigure{\includegraphics[scale=1.0,width=.40\textwidth]{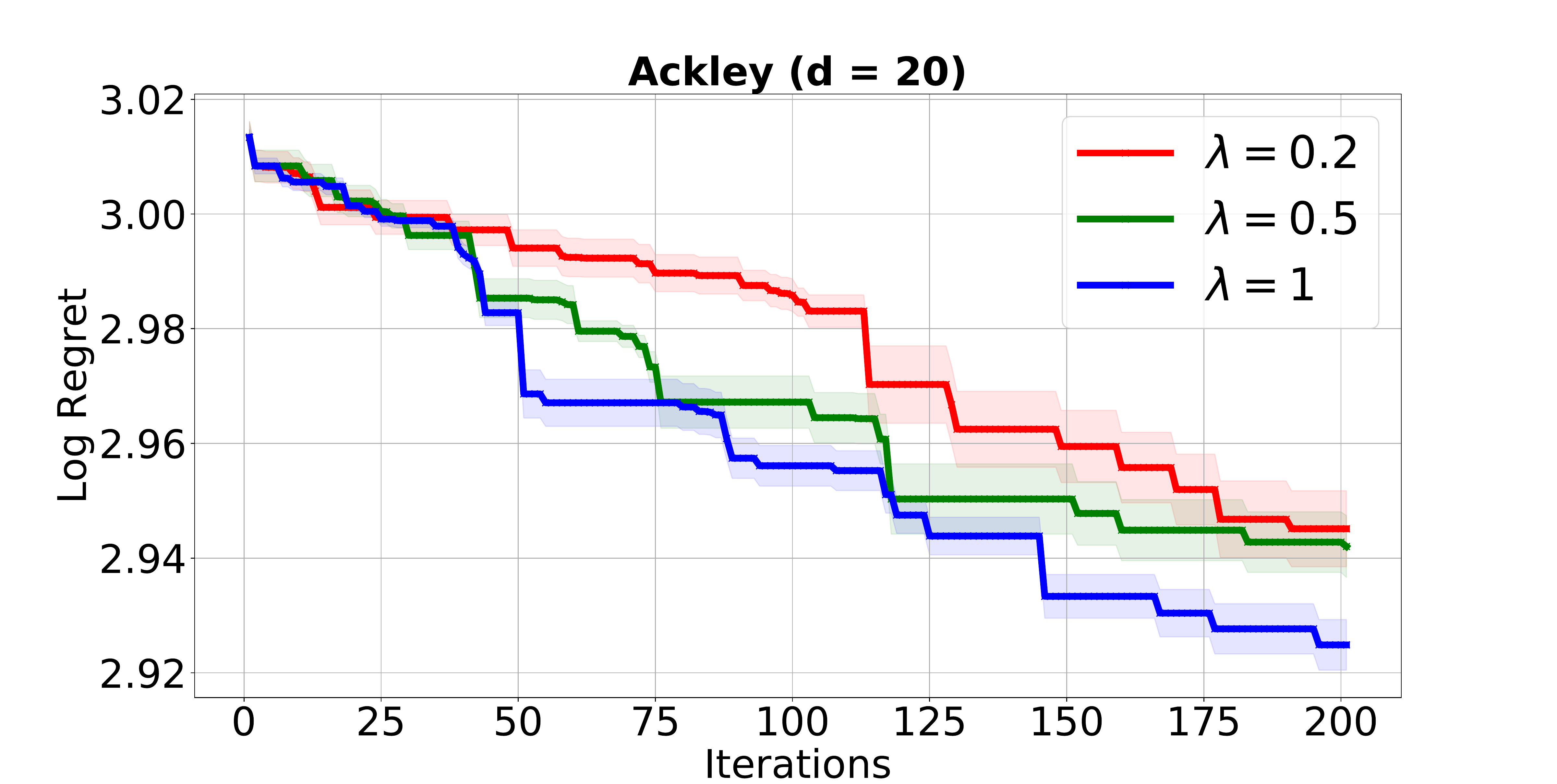}}
\vspace*{-3mm}
\caption{The study of $\alpha$ (for HuBO) and $\lambda$ (for HD-HuBO) in terms of best function value vs iterations. \textbf{Left}: Ackley function (d = 5) and different values of $\alpha$: -0.9; -0.95; -1. \textbf{Right}: Ackley function (d =20) with $\alpha = -1$ and different values of $\lambda$: 0.2; 0.5; 1.}
\vspace*{-2mm}
\label{figure3}
\end{figure}
Following our theoretical results, we choose $\alpha =-1$. By this way, any $\lambda > 0$ is valid as per our theoretical results and more importantly, it allows to minimize the number of hypercubes in HD-HuBO algorithm, and thus reduces the computations. We use $\lambda = 1, N_0 = 1$ and thus $N_t = t$. This means that at iteration $t$, we use $t$ hypercubes for the maximisation of acquisition function. We set the size of hypercubes, $l_h$  as $10\%$ of $\mathcal X_0$. All algorithms are given an equal computational budget to maximise acquisition functions. We also report the average time with each test function in the \textbf{supplementary material} (see Table 1).
\vspace*{-3mm}
\subsection{Optimisation of Benchmark Functions}
We test the algorithms on several benchmark functions: Beale, Hartmann3, Hartmann6, Ackley, Levy functions. We evaluate the progress of each algorithm using the log distance to the true optimum, that is, $\text{log}_{10}(f(\mathbf{x}^*) - f^+)$ where $f^+$ is the best function value found so far. For each test function, we repeat the experiments 15 times. We plot the mean and a confidence bound of one standard deviation across all the runs. Results are reported in Figure \ref{figure1} for low dimensions and in Figure \ref{figure2} for higher dimensions. From Figure \ref{figure1}, we can see that Vol2, Re-H and Re-Q perform poorly in most cases. We note that there is no convergence guarantee for methods Vol2, Re-H and Re-Q. Our HuBO method outperforms baselines.
\begin{figure*}[ht]
\centering
\subfigure{\includegraphics[scale=1.0,width=.32\textwidth, height= .13\textheight]{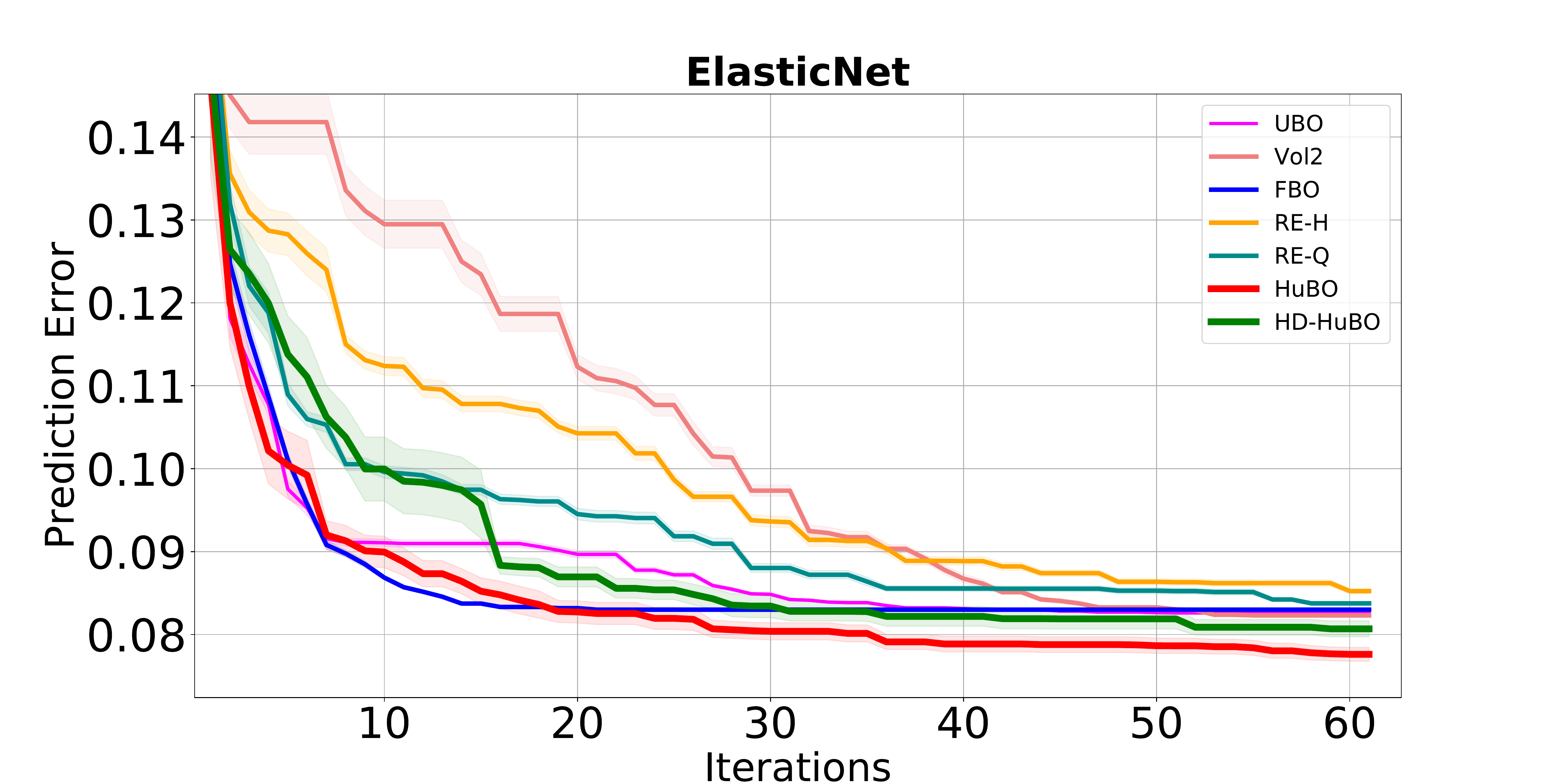}
}\hfill
\subfigure{\includegraphics[scale=1.0,width=.32\textwidth, height= .13\textheight]{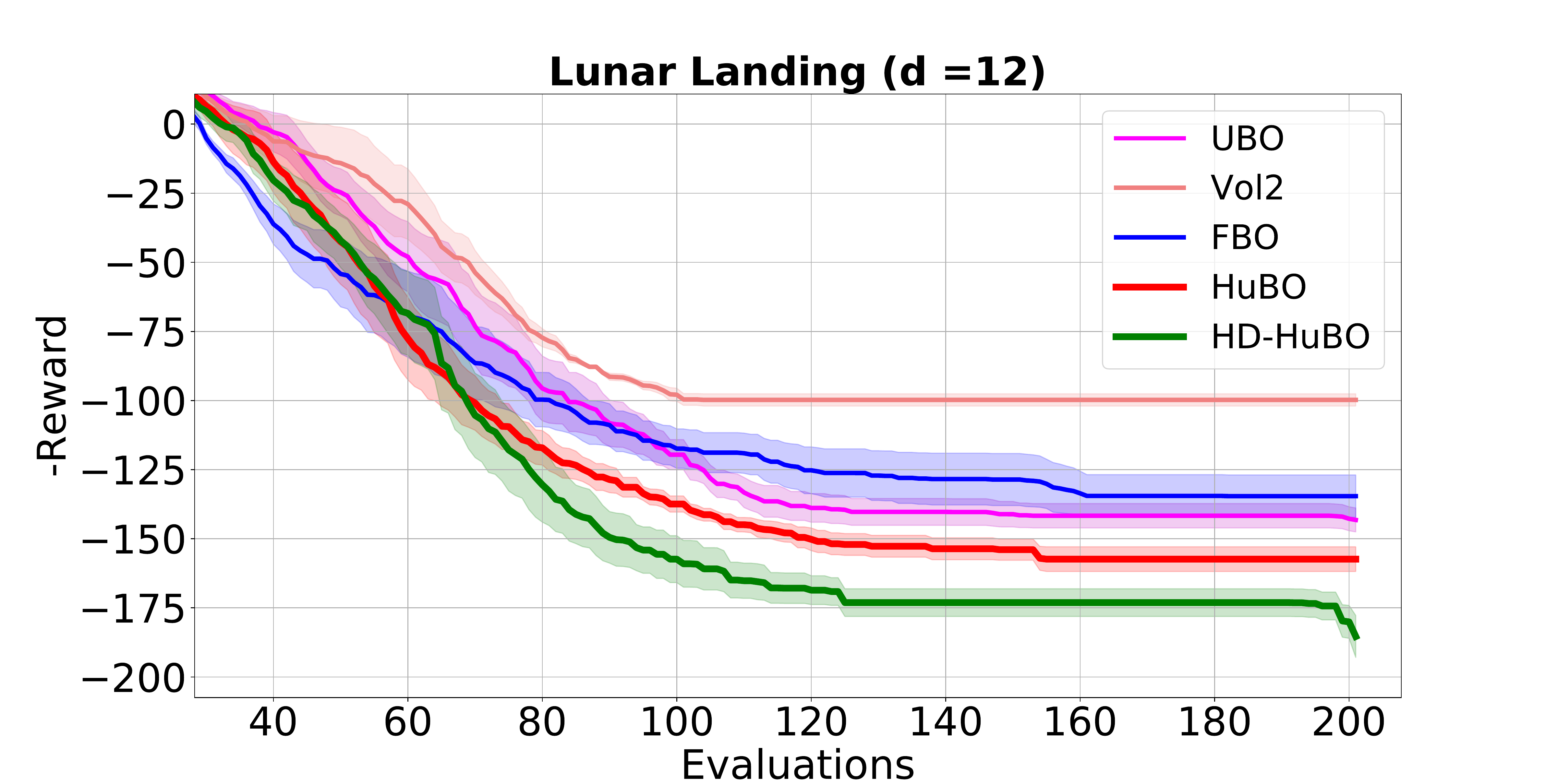}
}\hfill
\subfigure{\includegraphics[scale=1.0,width=.32\textwidth, height= .13\textheight]{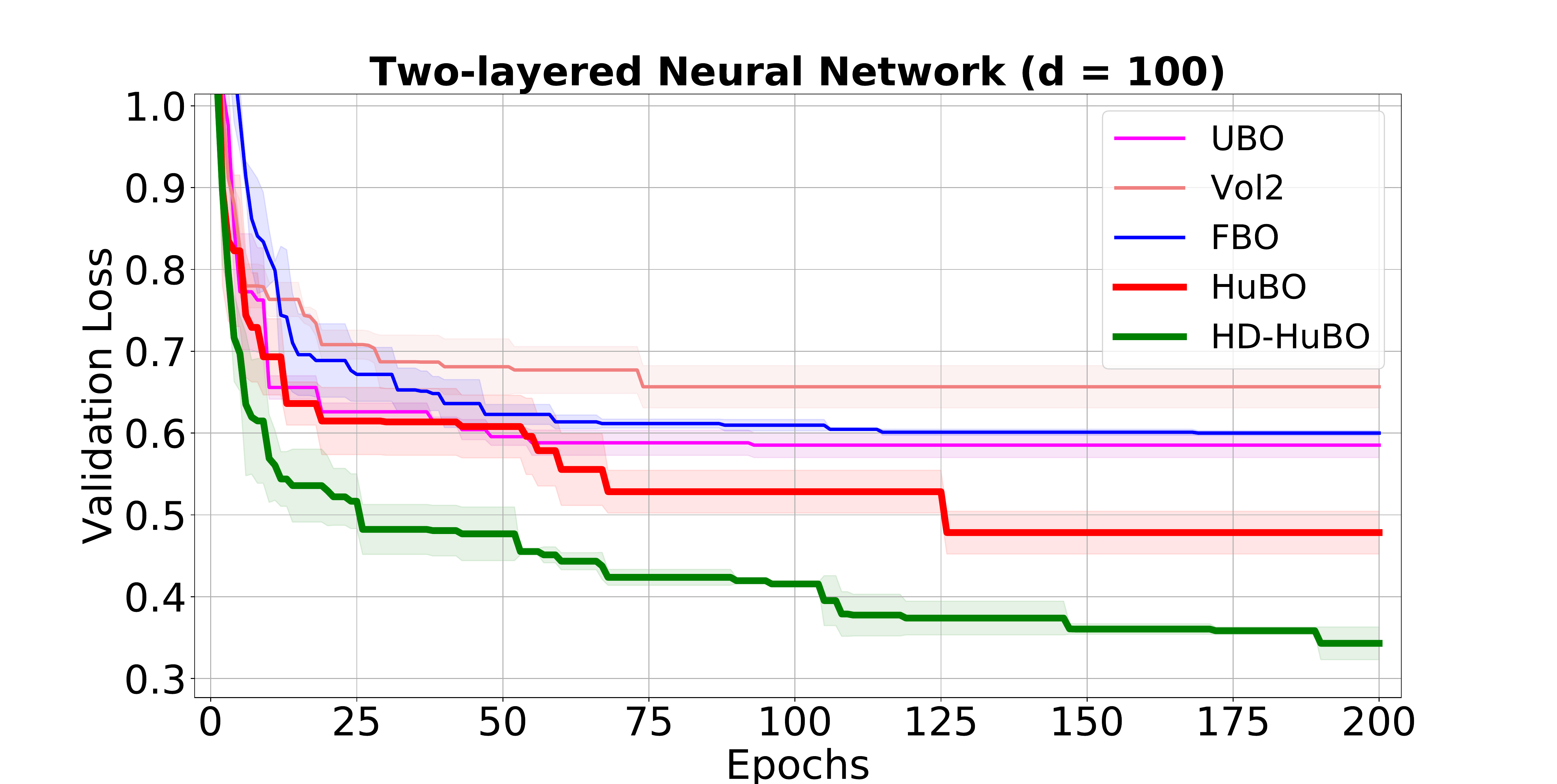}
}
\vspace*{-3mm}
\caption{\textbf{Left}: Prediction accuracy vs Iterations for MNIST dataset using Elastic Net. \textbf{Middle}: Rewards vs Evaluations for 12D Lunar lander. \textbf{Right}: Validation loss vs Epochs for learning parameters of a Two-layered Neural Network.}
\label{figure4}
\vspace*{-3mm}
\end{figure*}
In high dimensions, both HuBO and HD-HuBO outperform baselines. Particularly, since the maximisation of the acquisition function is performed on a restricted search space compared to HuBO, the performance of HD-HuBO is notable.

We would like to emphasise that, different from traditional BO algorithms with fixed search space where error bars (or variances) of
regret curves tend to get tighter over time, in the context of unbounded search space where the search space is being expanded over time,
error bars do not always have this property, they may even become higher over time till the search spaces have not contained the global
optimum. This trend can be seen for many unbounded search space methods such as in \citep{ha2019} and \citep{Nguyen19} in our references.
\paragraph{On the Expansion Rate $\alpha$ and Number of Hypercubes $\lambda$}
The space expansion rate $\alpha$ and the number of hypercubes are control parameters in our method. For SE kernels, $-1 \le \alpha < -1 + \frac{1}{d}$ is needed. However, in high dimensions, this range is tight. Therefore, to test the effect of $\alpha$, we consider HuBO with low dimensions. We create many variants of HuBO ($\alpha = -1, \alpha = -0.95$ and $\alpha = -0.9$). As a testbed, we use 5-dim Ackley function. Figure \ref{figure3} shows that smaller values of $\alpha$ performs better achieving tighter regrets. To study the effect of $\lambda$, we fix $\alpha = -1$ and create three variants of HD-HuBO using $\lambda = 0.2, \lambda = 0.5$ and $\lambda = 1$. We observe that larger values of  $\lambda$ achieve tighter regrets. These results validate our theoretical analysis.
\vspace*{-4mm}
\subsection{Applications to Machine Learning Models}
\paragraph{Elastic Net}
Elastic net is a regression method that has the $L_1$ and $L_2$ regularization parameters. We tune $w_1$ and $w_2$ where $w_1>0$ expresses the magnitude of the regularisation penalty while $w_2 \in [0,1]$ expresses the ratio between the two penalties.
We tune $w_1$ in the normal space while $w_2$ is tuned in an exponent space (base 10). The $\mathcal X_0$ is randomly placed box in the domain $[0,1] \times [-3,-1]$. We implement the Elastic net model by using the function SGDClassifier in the scikit-learn package \citep{elastic}. We train the model using the MNIST train dataset and then evaluate the model using the MNIST test dataset. Bayesian optimisation method suggests a new hyperparameter setting based on the prediction accuracy on the test set. As seen from Figure \ref{figure4} (Left), HuBO performs better than the baselines. In low dimensions, the restriction of the search space in HD-HuBO can influence the efficiency of BO, thus it is less efficient than HuBO.
\vspace*{-3mm}
\paragraph{Lunar Landing Reinforcement Learning}
In this task, the goal is to learn a controller for a lunar lander. The state space for the lunar lander is the position, angle, time derivatives, and whether or not either leg is in contact with the ground. The objective is to maximize the average final reward. The controller we learn is a modification of the original heuristic controller where there are twelve design parameters \citep{ErikssonPGTP19}. Each design parameter is tuned heuristically between $[0,2]$. We follow their setting however instead of assuming a fixed search space as $[0,2]^{12}$, we assume that the search space is unknown. Thus, $\mathcal X_0$ is randomly placed  in the domain $[0,2]^{12}$. We set $\alpha = -1$ and $\lambda = 1$. Our methods HuBO and HD-HuBO eventually learn the best controllers although at early iterations,
\vspace*{-3mm}
\paragraph{Parameter Tuning for Machine Learning Models}
We evaluate the algorithms on a two-layered neural network parameter optimisation task. Here we are given a CNN with one hidden layer of size 10. We denote the weights between the input and the hidden layer by $W_1$ and the weights between the hidden and the output layer by $W_2$. The goal is to find the weights that minimize the loss on the MNIST data set. We optimize $W_2$ by BO methods while $W_1$  are optimized by Adam algorithm. We choose the same network architecture as used by \citep{OhGW18,Tran-The0RV20},  however different from them, we assume that the search space is unknown. We  randomly choose an initial box in the domain $[0, \sqrt{d}]^{100}$. We compare our methods to UBO, Vol2, FBO using the validation loss. We set $\alpha = -1$ and $\lambda = 0.5$ for our methods. Both our methods outperform all the baselines, especially, HD-HuBO.
\vspace*{-2mm}
\section{Conclusion}
\vspace*{-2mm}
We propose a novel BO algorithm for global optimisation in an unknown search space setting. Starting from a randomly initialised search space, the search space shifts and expands as per a hyperharmornic series. The algorithm is shown to efficiently converge to the global optimum. We extend this algorithm to high dimensions, where the search space is restricted on a finite set of small hypercubes so that maximisation of the acquisition function is efficient. Both algorithms are shown to converge with sub-linear regret rates. Application to many optimisation tasks reveals the better sample efficiency of our algorithms compared to the existing methods.

\section*{Broader Impact Statement}
This work has potential to enable the scientists and researchers from the experimental design community to optimise the design of products and processes without the need to specify a search space, which is usually not known accurately when pursuing new products and processes. There are no unethical side of this research or any ill-effects on society.
\section*{Acknowledgments}
This research was partially funded by the Australian Government through the Australian Research Council (ARC).
Prof Venkatesh is the recipient of an ARC Australian Laureate Fellowship (FL170100006).

\bibliographystyle{plainnat}
\bibliography{Hung-research}
\newpage
\appendix
\section*{Supplementary Material}
\def\thesection{\Alph{section}}

In Section A, we first provide some auxiliary results which facilitate the proofs. We present the proofs of Theorem 1, Theorem 2, Theorem 3 and Theorem 4 in next sections. Finally, we provide additional benchmarking results in section F.
\section{Auxiliary Results}
\subsection{Properties of the Volume Expansion Strategy }
\begin{lemma}
For every $t\ge 1$, the search space $\mathcal X_t$ has the $[a_t, b_t]^d$ form where $b_t - a_t = (b-a)(1+\sum_{j=1}^{t} j^{\alpha})$.
\label{space_size}
\end{lemma}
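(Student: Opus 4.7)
The plan is a straightforward induction on $t$, tracking both the hypercube shape and the side length through the two-step transformation (\ref{transformation}).

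For the base case $t=1$, I start from $\mathcal X_0 = [a,b]^d$. Applying the expansion rule with increment $(b-a)/2 \cdot 1^{\alpha}$ per direction gives $\mathcal X'_1 = [a - (b-a)/2,\ b + (b-a)/2]^d$, which is a hypercube with side length $(b-a)(1 + 1^{\alpha}) = (b-a)(1 + \sum_{j=1}^1 j^\alpha)$. The translation step $\mathcal X'_1 \to \mathcal X_1$ merely shifts the center toward $\mathbf{c}_1$, so it preserves both the hypercube form and the side length. This matches the claim for $t=1$.

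For the inductive step, I assume $\mathcal X_{t-1} = [a_{t-1}, b_{t-1}]^d$ with $b_{t-1} - a_{t-1} = (b-a)\bigl(1 + \sum_{j=1}^{t-1} j^\alpha\bigr)$. By the definition of the expansion, $a'_t = a_{t-1} - \tfrac{b-a}{2} t^\alpha$ and $b'_t = b_{t-1} + \tfrac{b-a}{2} t^\alpha$, which are applied uniformly across all $d$ dimensions, so $\mathcal X'_t = [a'_t, b'_t]^d$ remains a hypercube. Its side length is
\begin{equation*}
b'_t - a'_t = (b_{t-1} - a_{t-1}) + (b-a)\, t^\alpha = (b-a)\Bigl(1 + \sum_{j=1}^{t-1} j^\alpha\Bigr) + (b-a)\, t^\alpha = (b-a)\Bigl(1 + \sum_{j=1}^{t} j^\alpha\Bigr).
\end{equation*}
Finally, passing from $\mathcal X'_t$ to $\mathcal X_t$ is a pure translation (moving the center $\mathbf{c}'_t$ toward $\mathbf{c}_t \in \mathcal C_{initial}$), which is an isometry that preserves both the axis-aligned hypercube shape and the side length. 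Hence $\mathcal X_t = [a_t, b_t]^d$ with $b_t - a_t = b'_t - a'_t$, completing the induction.

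There is no real obstacle here; the only thing to be slightly careful about is to note explicitly that (i) expansion is isotropic across the $d$ axes so the hypercube form is preserved, and (ii) the second transformation is a rigid translation so it does not alter the side length, only the center. Both facts follow directly from the definitions in (\ref{transformation}).
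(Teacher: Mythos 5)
Your proof is correct and follows essentially the same route as the paper's: induction on $t$, with the base case $t=1$ giving side length $2(b-a)$, the expansion step adding $(b-a)t^{\alpha}$ to the side length, and the translation step preserving both the hypercube form and the size. The only cosmetic difference is that you induct from $t-1$ to $t$ while the paper goes from $t$ to $t+1$.
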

\begin{proof}
We prove the statement by induction. If $t =1$ then by definition of the transformation in section 4, $X'_1 = [a'_1, b'_1]^d$ where $a'_1 = a_0 - \frac{b-a}{2}1^{\alpha}$ and $b'_1 = b_0 + \frac{b-a}{2}1^{\alpha}$. Hence, $b'_1 - a'_1 = 2(b-a)$. By definition of the transformation $X'_1 \rightarrow \mathcal X_1$, the size and the form of $\mathcal X_1$ is preserved from $\mathcal X'_1$. Therefore $\mathcal X_1 = [a_1, b_1]^d$ and $b_1 - a_1 = 2(b-a)$.

We assume that the statement is true for $t \ge 1$. We consider the transformation $\mathcal X_{t} \rightarrow \mathcal X'_{t+1} \rightarrow \mathcal X_{t+1}$. We have $a'_{t+1}= a_{t} - \frac{b-a}{2}(t+1)^{\alpha}$ and $b'_{t+1} = b_{t} + \frac{b-a}{2}(t+1)^{\alpha}$. Hence, $b'_{t+1} - a'_{t+1}= b_t - a_t + (b-a)(t+1)^{\alpha}$. By the inductive hypothesis, we have $\mathcal X_t = [a_t, b_t]^d$ and $b_t - a_t = (b-a)(1+\sum_{j=1}^{t} j^{\alpha})$. Therefore, $b'_{t+1} - a'_{t+1} = (b-a)(1+\sum_{j=1}^{t+1} j^{\alpha})$. By the transformation, the size and the form of $\mathcal X_{t+1}$ is preserved from $\mathcal X'_{t+1}$. Thus, $\mathcal X_{t+1} = [a_{t+1}, b_{t+1}]^d$ where $b_{t+1} - a_{t+1} = (b-a)(1+\sum_{j=1}^{t+1} j^{\alpha})$. The statement holds for any $t \ge 1$.
\end{proof}

Given a finite domain $\mathcal X$, we denote the volume of $\mathcal X$ by $Vol(\mathcal X)$.
\begin{lemma}
For every horizon $T > 0$, set $\mathcal C_T = [c_{min} - (b-a)(1+\sum_{j=1}^{T} j^{\alpha})/2, c_{max} + (b-a)(1+\sum_{j=1}^{T} j^{\alpha})/2]^d$. Then for every $1 \le t \le T$, $X_t \subseteq C_T$.
\label{volume_size}
\end{lemma}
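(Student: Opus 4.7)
The plan is to decompose the inclusion $\mathcal X_t \subseteq \mathcal C_T$ into two pieces: (i) a bound on the \emph{side length} of $\mathcal X_t$, which is already delivered by Lemma \ref{space_size}, and (ii) a bound on the location of the \emph{center} of $\mathcal X_t$, which follows directly from the construction of HuBO in Section 4. Combining the two then yields componentwise inclusions for every dimension.

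First, I would write $\mathcal X_t = [a_t, b_t]^d$ and let $\mathbf{c}_t$ denote its center, so that for every coordinate $i$, $a_t = [\mathbf{c}_t]_i - (b_t - a_t)/2$ and $b_t = [\mathbf{c}_t]_i + (b_t - a_t)/2$. By Lemma \ref{space_size}, $b_t - a_t = (b-a)(1 + \sum_{j=1}^{t} j^{\alpha})$. Next, I would invoke the key structural property of the transformation $\mathcal X'_t \to \mathcal X_t$: the center $\mathbf{c}_t$ is, by definition, the closest point in $\mathcal C_{initial}$ to the best solution found so far, so $\mathbf{c}_t \in \mathcal C_{initial}$, i.e., $[\mathbf{c}_t]_i \in [c_{min}, c_{max}]$ for each $i$. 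This is the one ingredient that is not purely computational, and is where the proof really ``uses'' the algorithm's design; I would state it explicitly (perhaps with a one-line induction) before combining the two bounds.

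Putting the pieces together, for each dimension $i$,
\begin{align*}
a_t &\ge c_{min} - \tfrac{b-a}{2}\Bigl(1 + \sum_{j=1}^{t} j^{\alpha}\Bigr),\\
b_t &\le c_{max} + \tfrac{b-a}{2}\Bigl(1 + \sum_{j=1}^{t} j^{\alpha}\Bigr).
\end{align*}
Since $j^{\alpha} > 0$ for $j \ge 1$, the partial sums $\sum_{j=1}^{t} j^{\alpha}$ are nondecreasing in $t$, so for $t \le T$ we may replace $\sum_{j=1}^{t} j^{\alpha}$ on the right-hand sides by $\sum_{j=1}^{T} j^{\alpha}$, giving $[a_t, b_t] \subseteq [c_{min} - (b-a)(1 + \sum_{j=1}^{T} j^{\alpha})/2,\; c_{max} + (b-a)(1 + \sum_{j=1}^{T} j^{\alpha})/2]$. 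Taking the $d$-fold Cartesian product yields $\mathcal X_t \subseteq \mathcal C_T$, as desired.

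The only nontrivial step is the center bound $\mathbf{c}_t \in \mathcal C_{initial}$; once that is in hand, the rest is mechanical arithmetic using Lemma \ref{space_size} and the monotonicity of the hyperharmonic partial sum. I would therefore emphasize this step and treat the remaining manipulations briefly.
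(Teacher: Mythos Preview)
Your proposal is correct and uses the same two ingredients as the paper's proof: Lemma~\ref{space_size} for the side length and the algorithm's guarantee that the center $\mathbf{c}_t$ lies in $\mathcal C_{initial}$. The only cosmetic difference is that the paper phrases the argument as an induction on $T$ (showing $\mathcal C_T \subset \mathcal C_{T+1}$ and $\mathcal X_{T+1} \subseteq \mathcal C_{T+1}$), whereas you invoke the monotonicity of $\sum_{j=1}^{t} j^{\alpha}$ directly; the content is identical.
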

\begin{proof}
We also prove this statement by induction. If $T =1$ then by Lemma \ref{space_size}, $\mathcal X_1 = [a_1, b_1]^d$ where $b_1 - a_1 = 2(b-a)$. By the transformation, the center of $\mathcal X_1$ only moves in the domain $\mathcal C_{initial}$. If we set $\mathcal C_T = [c_{min} - (b-a), c_{max} + (b-a)]^d$ then $\mathcal X_1 \subseteq \mathcal C_T$.

We assume that the statement is true for $T \ge 1$. By the inductive hypothesis, for every $1 \le t \le T$, $X_t \subseteq \mathcal C_T = [c_{min} - (b-a)(1+\sum_{j=1}^{T} j^{\alpha})/2, c_{max} + (b-a)(1+\sum_{j=1}^{T} j^{\alpha})/2]^d$. We set $\mathcal C_{T+1} = [c_{min} - (b-a)(1+\sum_{j=1}^{T+1} j^{\alpha})/2, c_{max} + (b-a)(1+\sum_{j=1}^{T+1} j^{\alpha})/2]^d$. First, we have $\mathcal C_{T} \subset \mathcal C_{T+1}$. Next we prove that $\mathcal X_{T+1} \subseteq \mathcal C_{T+1}$. Indeed, by Lemma \ref{space_size}, $X_{T+1} = [a_{T+1}, b_{T+1}]^d$ where $b_{T+1} - a_{T+1} = (b-a)(1+\sum_{j=1}^{T+1} j^{\alpha})$. By the transformation,  the center of $\mathcal X_{T+1}$ only moves in the domain $\mathcal C_{initial}$. It implies that $\mathcal X_{T+1}$ belongs to $\mathcal C_{T+1}$. The statement holds for any $T \ge 1$.
\end{proof}
\subsection{Properties of The Gamma Function and The Hyperharmonic Series}
\begin{lemma}(Lower bounds of a partial sum of a hyperharmonic series,\citep{Chlebus2009})
Given a partial sum of a hyperharmonic series $p_n = \sum_{j =1}^{n}j^{\alpha}$, where $n \in \mathbb{N}$. Then,
\begin{itemize}
  \item $p_n > \frac{(n+1)^{\alpha +1}-1}{\alpha +1}$ if $-1 \le \alpha < 0 $,
  \item $ p_n > ln(n+1)$ if $\alpha = -1$.
\end{itemize}
\label{le:1}
\end{lemma}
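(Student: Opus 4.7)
The plan is to prove both bounds by a standard integral comparison argument, exploiting the fact that the map $x \mapsto x^{\alpha}$ is strictly decreasing on $(0,\infty)$ for every $\alpha < 0$.

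First I would fix $\alpha \in [-1,0)$ and, for each integer $j \ge 1$, note that for all $x \in (j, j+1]$ we have $x^{\alpha} < j^{\alpha}$ strictly, because $x^{\alpha}$ is strictly decreasing. Integrating this pointwise inequality over $[j, j+1]$ gives $j^{\alpha} > \int_{j}^{j+1} x^{\alpha}\, dx$. Summing this strict inequality over $j = 1, 2, \ldots, n$ yields
\begin{equation*}
p_n \;=\; \sum_{j=1}^{n} j^{\alpha} \;>\; \sum_{j=1}^{n} \int_{j}^{j+1} x^{\alpha}\, dx \;=\; \int_{1}^{n+1} x^{\alpha}\, dx .
\end{equation*}

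The two bullet cases then follow simply by evaluating the right-hand integral. For $-1 < \alpha < 0$, the antiderivative of $x^{\alpha}$ is $x^{\alpha+1}/(\alpha+1)$, so $\int_{1}^{n+1} x^{\alpha} dx = \frac{(n+1)^{\alpha+1} - 1}{\alpha+1}$, giving the first bound. For $\alpha = -1$, the antiderivative is $\ln x$, so $\int_{1}^{n+1} x^{-1} dx = \ln(n+1)$, giving the second bound. (The boundary value $\alpha = -1$ in the first bullet is recovered as a limit, since $\lim_{\alpha \to -1}\frac{(n+1)^{\alpha+1}-1}{\alpha+1} = \ln(n+1)$ by L'H\^opital, matching the second bullet continuously.)

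There is essentially no real obstacle here; the only subtlety is to make sure the inequality is strict rather than weak. That strictness comes from the fact that $x^{\alpha} < j^{\alpha}$ holds on a set of positive Lebesgue measure inside $[j, j+1]$ (namely all of $(j, j+1]$), so the integral comparison on each unit interval is strict, and hence so is the sum. No other estimates, asymptotics, or special-function identities are needed, which is why \citep{Chlebus2009} is cited only for the statement.
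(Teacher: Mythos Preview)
Your argument is correct: the integral comparison using the strict monotonicity of $x\mapsto x^{\alpha}$ for $\alpha<0$ gives exactly the stated bounds, and your handling of the strictness and of the $\alpha=-1$ boundary (via the limit, matching the second bullet) is clean. The paper itself does not prove this lemma at all---it simply states it and cites \cite{Chlebus2009}---so there is nothing to compare against; your proof is precisely the standard one-line derivation one would expect behind that citation.
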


\begin{lemma}(Upper Bounds of a Hyperharmonic Series, \citep{Chlebus2009})
Given a hyperharmonic series $p_n = \sum_{j =1}^{n}j^{\alpha}$, where $n \in \mathbb{N}$. Then,
\begin{itemize}
  \item $p_n < 1 + \frac{n^{1 + \alpha}-1}{1 + \alpha}$ if $-1 \le \alpha < 0 $,
  \item $ p_n < 1 + ln(n)$ if $\alpha = -1$
\end{itemize}
\label{le:2}
\end{lemma}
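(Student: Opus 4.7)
The plan is to establish both upper bounds on the partial sum $p_n = \sum_{j=1}^{n} j^{\alpha}$ by the classical integral-comparison argument for monotone positive series. The unifying observation is that $f(x) = x^{\alpha}$ is strictly decreasing on $[1,\infty)$ whenever $\alpha < 0$, which covers both sub-cases $-1 \le \alpha < 0$ and $\alpha = -1$ simultaneously; the only reason the lemma splits into two cases is that the antiderivative of $x^{\alpha}$ changes form at $\alpha = -1$.

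First, I would establish the per-term estimate: for each integer $j \ge 2$ and every $x \in [j-1, j]$, one has $x^{\alpha} \ge j^{\alpha}$ (since $x \le j$ and the exponent is negative), with strict inequality on the open interval $(j-1, j)$. Integrating over $[j-1, j]$ therefore yields $j^{\alpha} < \int_{j-1}^{j} x^{\alpha}\,dx$. Summing this from $j = 2$ to $n$ telescopes the integration intervals into $[1, n]$, giving
\[ p_n - 1 \;=\; \sum_{j=2}^{n} j^{\alpha} \;<\; \int_{1}^{n} x^{\alpha}\, dx. \]
The degenerate case $n = 1$ is handled directly: $p_1 = 1$, and both claimed right-hand sides also evaluate to $1$, so the bound is tight (non-strict) there and holds as an equality.

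Second, I would evaluate the antiderivative in each regime separately. For $-1 < \alpha < 0$, the power rule gives $\int_{1}^{n} x^{\alpha}\,dx = \frac{n^{1+\alpha} - 1}{1+\alpha}$, producing the first bound $p_n < 1 + \frac{n^{1+\alpha}-1}{1+\alpha}$. For $\alpha = -1$, the antiderivative is logarithmic, and $\int_{1}^{n} x^{-1}\,dx = \ln n$, producing $p_n < 1 + \ln n$.

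The main obstacle: there is essentially none, and indeed the lemma is cited directly from \citep{Chlebus2009} rather than proved in-text. The only subtlety worth flagging is the boundary behaviour in $\alpha$, since the expression $\frac{n^{1+\alpha}-1}{1+\alpha}$ is of the form $0/0$ at $\alpha = -1$; this is precisely why the statement bifurcates, with the logarithm emerging in the limit via L'Hôpital, $\lim_{\alpha \to -1} \frac{n^{1+\alpha}-1}{1+\alpha} = \ln n$, so the two cases fit together continuously.
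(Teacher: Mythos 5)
Your proposal is correct: the integral-comparison bound $\sum_{j=2}^{n} j^{\alpha} < \int_{1}^{n} x^{\alpha}\,dx$ for the decreasing function $x^{\alpha}$, evaluated separately in the two antiderivative regimes, is exactly the standard argument behind this lemma, which the paper itself only cites from \citep{Chlebus2009} without reproving. Your observations that the $n=1$ case degenerates to equality and that the $\alpha=-1$ case is the $0/0$ limit of the power-rule formula are accurate and do not affect how the lemma is used elsewhere in the paper, where only the asymptotic growth in $n$ matters.
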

\begin{lemma}(Bounding $p$-series when $p >1$, \citep{tom99})
Given a $p$-series $s_n = \sum_{k =1}^{n}\frac{1}{k^p}$, where $n \in \mathbb{N}$. Then,
$$s_n < \zeta(p) < \frac{1}{p-1} + 1$$
for any $n$, where $\zeta(p) = \sum_{k =1}^{\infty}\frac{1}{k^p} $ is Euler–Riemann zeta function that always converges. For example, $\zeta(3/2) \approx 2.61$, $\zeta(2) = \frac{\pi^2}{6}$.
\label{le:3}
\end{lemma}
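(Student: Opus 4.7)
The plan is to split the double inequality $s_n < \zeta(p) < \frac{1}{p-1}+1$ into two independent bounds and handle each one separately. Note that the hypothesis $p > 1$ is implicit (stated in the lemma heading but not in the body); I will use it throughout since convergence of $\zeta(p)$ requires it.

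For the first inequality $s_n < \zeta(p)$, I would simply observe that every term $1/k^p$ is strictly positive, so the tail $\sum_{k=n+1}^{\infty} 1/k^p$ is strictly positive, giving $s_n = \zeta(p) - \sum_{k=n+1}^{\infty} 1/k^p < \zeta(p)$. This is one line and requires no further work.

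For the second inequality $\zeta(p) < \frac{1}{p-1} + 1$, the plan is to use the standard integral comparison: since $f(x) = 1/x^p$ is positive and strictly decreasing on $[1,\infty)$ for $p > 1$, for each $k \ge 2$ the rectangle bound gives $\frac{1}{k^p} < \int_{k-1}^{k} \frac{dx}{x^p}$. Summing these inequalities for $k = 2, 3, \dots$ telescopes into
\begin{equation*}
\sum_{k=2}^{\infty} \frac{1}{k^p} < \int_{1}^{\infty} \frac{dx}{x^p} = \frac{1}{p-1}.
\end{equation*}
Adding the isolated $k=1$ term, which equals $1$, yields $\zeta(p) < 1 + \frac{1}{p-1}$, as desired. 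Convergence of the improper integral (and hence of $\zeta(p)$) is exactly what the condition $p > 1$ buys us.

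There is no real obstacle here; the result is classical and the only choice is between the integral comparison above and an equivalent dyadic-block (Cauchy condensation) argument. I prefer the integral comparison because it directly produces the closed-form constant $\frac{1}{p-1}$ that appears in the statement, with no extra constants to absorb. The only thing worth being careful about is orienting the comparison in the correct direction — i.e., bounding $1/k^p$ by the integral over $[k-1,k]$ (strictly larger) rather than $[k,k+1]$ (strictly smaller) — so that the sum is controlled from above rather than from below.
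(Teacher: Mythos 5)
Your proof is correct. The paper does not prove this lemma itself---it imports it directly from the cited reference---and your argument (positivity of the tail for $s_n < \zeta(p)$, plus the integral comparison $\frac{1}{k^p} < \int_{k-1}^{k} x^{-p}\,dx$ summed over $k \ge 2$ to get $\zeta(p) < 1 + \frac{1}{p-1}$) is exactly the standard textbook derivation the citation points to, with the comparison correctly oriented.
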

\begin{lemma}
$\Gamma(\frac{d}{2} +1)^{\frac{1}{d}} < \sqrt{d+2}$
\label{le:4}
\end{lemma}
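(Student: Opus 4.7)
The plan is to establish the stronger inequality $\Gamma(d/2+1) \le (d/2+1)^{d/2}$ and then simply observe that $d/2+1 < d+2$. Taking the $d$-th root of the stronger bound and chaining with this observation will deliver the claim.

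To prove the auxiliary inequality, I would first show that $\Gamma(y) \le y^{y-1}$ for every $y \ge 1$. The base case $y \in [1,2]$ is handled by log-convexity of $\Gamma$ (a standard consequence of the Bohr--Mollerup characterisation): since $\ln\Gamma$ is convex on $(0,\infty)$ and vanishes at both endpoints $y = 1$ and $y = 2$, it must satisfy $\ln\Gamma(y) \le 0$ throughout $[1,2]$, so $\Gamma(y) \le 1 \le y^{y-1}$. For $y \ge 2$, I would use induction together with the functional equation $\Gamma(y) = (y-1)\Gamma(y-1)$: assuming $\Gamma(y-1) \le (y-1)^{y-2}$, one gets $\Gamma(y) \le (y-1)\cdot(y-1)^{y-2} = (y-1)^{y-1} \le y^{y-1}$, closing the induction.

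Applying this auxiliary bound with $y = d/2 + 1 \ge 1$ yields $\Gamma(d/2+1) \le (d/2+1)^{d/2}$. Taking $d$-th roots gives $\Gamma(d/2+1)^{1/d} \le \sqrt{d/2+1}$, and since $d/2 + 1 < d + 2$ for all $d \ge 1$, we conclude $\Gamma(d/2+1)^{1/d} < \sqrt{d+2}$, as claimed.

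The only mildly delicate step is the base case $y \in [1,2]$; everything else is a one-line induction plus a trivial comparison. No Stirling-type estimates are required, keeping the argument entirely elementary.
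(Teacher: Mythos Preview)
Your argument is correct, but it differs from the paper's. The paper splits by parity of $d$: for $d=2n$ one has $\Gamma(d/2+1)=n!$, and for $d=2n+1$ one bounds $\Gamma(n+3/2)$ by $2n!$; then it applies the AM--GM inequality to obtain $n! \le \bigl(\tfrac{n+1}{2}\bigr)^n$ and finishes with a short chain of numerical comparisons. Your route instead establishes the uniform bound $\Gamma(y)\le y^{y-1}$ for all real $y\ge 1$ via log-convexity on $[1,2]$ and the functional equation $\Gamma(y)=(y-1)\Gamma(y-1)$ to propagate. This is cleaner and in fact yields the sharper intermediate bound $\Gamma(d/2+1)^{1/d}\le\sqrt{d/2+1}$, roughly a factor $\sqrt{2}$ better than the paper's $\sqrt{d+2}$. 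The trade-off is that the paper's argument is arguably more elementary (only factorials and AM--GM), whereas yours invokes the log-convexity of $\Gamma$; your ``induction'' for $y\ge 2$ should also be phrased as stepping from $[k,k+1]$ to $[k+1,k+2]$ via the functional equation, but that is a cosmetic point.
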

\begin{proof}
We consider two cases:
\begin{itemize}
  \item if $d = 2n$, where $n \in \mathbb{N}$ then $\Gamma(\frac{d}{2} +1) = \Gamma(n+1) = n!$
  \item if $d = 2n +1$, where $n \in \mathbb{N}$ then $\Gamma(\frac{d}{2} +1) = \Gamma(n+1 + \frac{1}{2}) = n!\Gamma(\frac{1}{2}) = \sqrt{\pi}n! < 2n!$
\end{itemize}
Hence, in both case, $\Gamma(\frac{d}{2} +1) < 2n!$. By Cauchy-Schwarz, we have:$n! < (\frac{1 + 2 + ... + n)}{n})^n = (\frac{n+1}{2})^n$.
However, $n \le \frac{d}{2}$. Thus, $(\Gamma(\frac{d}{2} +1))^{\frac{1}{d}} < 2(\frac{n+1}{2})^{\frac{n}{d}} < \sqrt{2(n+1)} < \sqrt{d+2}$.
\end{proof}
\section{Proof of Theorem 1}
\begin{theorem}[Reachability]
If $\alpha \ge -1$, then the HuBO algorithm guarantees that there exists a constant $T_0 > 0$ (independent of $t$) such that when $t > T_0$,  $\mathcal X_t$ contains $\mathbf{x}^*$.
\label{theorem:1}
\end{theorem}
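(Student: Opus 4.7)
The plan is to reduce the statement to the divergence of the partial sums $\sum_{j=1}^t j^\alpha$ for $\alpha \geq -1$, which is the content of the hyperharmonic series bound (Lemma \ref{le:1} in the Supplementary). First I would invoke the assumption that $\mathbf{x}^*$ is not at infinity to obtain a finite hypercube $[a_g,b_g]^d$ that contains both $\mathbf{x}^*$ and the fixed center $\mathbf{c}_0$ of $\mathcal{C}_{initial}$. Containment of $\mathbf{x}^*$ in $\mathcal{X}_t$ is then implied by containment of this auxiliary hypercube, so the goal reduces to proving $[a_g,b_g]^d \subseteq \mathcal{X}_t$ for all large enough $t$.

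Next I would establish, by induction on the transformation $\mathcal{X}_{t-1} \to \mathcal{X}'_t \to \mathcal{X}_t$, that $\mathcal{X}_t$ remains a hypercube of side length $b_t - a_t = (b-a)\bigl(1 + \sum_{j=1}^{t} j^\alpha\bigr)$. This is exactly the computation carried out in Lemma \ref{space_size} above. The shifting step preserves size and cubic shape, so only the expansion step changes the side length, and each iteration adds $(b-a)\,t^\alpha$ to it.

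The third step is the worst-case bound on how far the center can drift away from $\mathbf{c}_0$. By construction, the center of $\mathcal{X}_t$ is translated to a point in $\mathcal{C}_{initial}=[c_{min},c_{max}]^d$, hence the coordinate-wise distance from $\mathbf{c}_0$ is at most $(c_{max}-c_{min})/2$. Combining this with the length formula from Step 2, the upper boundary of $\mathcal{X}_t$ along each coordinate is at least $[\mathbf{c}_0]_i + \frac{b-a}{2}(1+\sum_{j=1}^t j^\alpha) - \frac{c_{max}-c_{min}}{2}$, and symmetrically for the lower boundary. Requiring the upper boundary to exceed $b_g$ and the lower boundary to be below $a_g$ gives the two sufficient conditions (1) and (2) in the statement.

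Finally, invoking Lemma \ref{le:1}, when $\alpha \geq -1$ the partial sum $\sum_{j=1}^t j^\alpha$ diverges (logarithmically for $\alpha=-1$, polynomially otherwise), so both inequalities eventually hold. Taking $T_0$ to be the smallest index for which conditions (1) and (2) are simultaneously satisfied yields the desired threshold, and $T_0$ depends only on $a,b,c_{min},c_{max},\alpha,a_g,b_g$, not on $t$. The only delicate point is handling the translation conservatively: one must account for the worst-case location of the center inside $\mathcal{C}_{initial}$, since otherwise a naive argument would allow the center to drift arbitrarily toward $\mathbf{x}^*$ and trivialize the claim. Using the uniform bound $(c_{max}-c_{min})/2$ on the drift in each coordinate keeps the argument independent of the (unknown) best-solution sequence $\{\mathbf{c}_t\}$, which is the main obstacle to a clean proof.
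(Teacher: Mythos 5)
Your proposal is correct and follows essentially the same route as the paper's proof: the same auxiliary hypercube $[a_g,b_g]^d$ containing $\mathbf{x}^*$ and $\mathbf{c}_0$, the same inductive side-length formula $b_t-a_t=(b-a)(1+\sum_{j=1}^t j^{\alpha})$ from Lemma \ref{space_size}, the same worst-case bound $(c_{max}-c_{min})/2$ on the center's drift within $\mathcal C_{initial}$, and the same appeal to the divergence of the hyperharmonic partial sums (Lemma \ref{le:1}) to produce $T_0$. Your explicit remark about why the drift must be bounded uniformly (independently of the best-solution sequence) is the right observation and matches the role $\mathcal C_{initial}$ plays in the paper's argument.
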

We denote the center of the user-defined finite region $\mathcal C_{initial} = [c_{min},c_{max}]^d$ as $\mathbf{c}_0$. By the assumption of $\mathbf{x}^*$  being not at infinity, there exists a smallest range $[a_g, b_g]^d$ so that both $\mathbf{x}^*$ and  $\mathbf{c}_0$ belong to  $[a_g, b_g]^d$. By induction, the search space $\mathcal X_t$ at iteration $t$ is a hypercube, denoted by $[a_t, b_t]^d$. Following our search space expansion, the center of $\mathcal X_t$ only moves in region $\mathcal C_{initial}$. Therefore, for each dimension $i$,  we have in the worst case,  $(b_t - [\mathbf{c}_0]_i)$ is at least $\frac{c_{min}-c_{max}}{2} + \frac{b_t -a_t}{2}$ and $([\mathbf{c}_0]_i - a_t)$ is at most $\frac{c_{max} -c_{min}}{2} - \frac{b_t - a_t}{2}$. By Lemma \ref{space_size}, the size of $\mathcal X_t$ as $b_t - a_t: b_t- a_t = (b-a)(1+\sum_{j=1}^{t} j^{\alpha})$.
Therefore, $(b_t - [\mathbf{c}_0]_i)$ is at least $ \frac{c_{min}-c_{max}}{2} + \frac{b-a}{2}(1+\sum_{j=1}^{t} j^{\alpha})$ and $([\mathbf{c}_0]_i - a_t)$ is at most $\frac{c_{max}-c_{min}}{2} - \frac{b-a}{2}(1+\sum_{j=1}^{t} j^{\alpha})$.

If there exists a $T_0$ such that two conditions satisfy: (1) $\frac{c_{min}-c_{max}}{2} + \frac{b-a}{2}(1+\sum_{j=1}^{T_0} j^{\alpha}) \ge b_g$, and (2) $\frac{c_{max}-c_{min}}{2} - \frac{b-a}{2}(1+\sum_{j=1}^{T_0} j^{\alpha})  < a_g$, then we can guarantee that for all $t>T_0$, the search space $\mathcal X_t$ will contain $[a_g, b_g]^d$ and thus also contain $\mathbf{x}^*$.

Such a $T_0$ exists because $p_t = \sum_{j=1}^{t} j^{\alpha}$ is a diverging sum with $t$ when $\alpha \ge -1$. Indeed,
by Lemma \ref{le:1}, we have
\begin{itemize}
  \item $p_t > \frac{(t+1)^{\alpha +1}-1}{\alpha +1}$ if $-1 \le \alpha < 0 $,
  \item $ p_t > ln(t+1)$ if $\alpha = -1$.
\end{itemize}
For both cases, $\lim_{t \rightarrow \infty}p_t \rightarrow \infty$. From the conditions (1) and (2), we can see that $T_0$ is a function of parameters $a$,  $b$, $c_{min}$, $c_{max}$, $\alpha$ and $a_g, b_g$. Since $a$, $b$, $c_{min}$, $c_{max}$ and $\alpha$ are determined at the beginning of the HuBO algorithm and do not change, such a constant (although unknown) $T_0$ exists. \qedhere
\section{Proof of Theorem 2}
To derive an upper bound of the cumulative regret of the HuBO algorithm for SE kernels and Mat\'ern kernels, we first derive an upper bound of the cumulative regret for a general class of kernels according to the maximum information gain. We do this in the following Proposition 1. Next, we provide upper bounds for the maximum information gain on SE kernels and  Mat\'ern kernels. We do that in Proposition 2. Finally, we prove the correctness of Theorem 2 by combining Proposition 1 and Proposition 2.
\begin{proposition}
Let $f \sim \mathcal{GP}(\mathbf{0}, k)$ with a stationary covariance function $k$. Assume that $-1 \le \alpha < 0$ and there exist constants $s_1, s_2 > 0$ such that $\mathbb{P}[sup_{\mathbf{x} \in \mathcal X}|\partial f/ \partial x_{i}| > L] \le s_1e^{-(L/s_2)^2}$
for all $L > 0$ and for all $i \in \{1,2,..., d\}$. Pick a $\delta \in (0,1)$. Set $\beta_T = 2log(4\pi_t/\delta) + 4dlog(dTs_2(b-a)(1 + \sum_{j=1}^{T} j^{\alpha})\sqrt{log(4ds_1/\delta)})$. Thus, there is a constant $C$ such that for any horizon $T > T_0$, the cumulative regret of the proposed HuBO algorithm is bounded as
$$R_T \le C  + \sqrt{C_1T\beta_T\gamma_T(\mathcal C_T)} + \frac{\pi^2}{6}$$,
with probability $1 -\delta$, where the domain $\mathcal C_T = [c_{min} - (b-a)(1+\sum_{j=1}^{T} j^{\alpha})/2, c_{max} + (b-a)(1+\sum_{j=1}^{T} j^{\alpha})/2]^d$, and $\gamma_T(\mathcal C_T)$ is the maximum information gain  for any $T$ observations in the domain $\mathcal C_T$ (see \citep{Srinivas12}).
\end{proposition}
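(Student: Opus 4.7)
The plan is to follow the GP-UCB analysis of Srinivas et al.\ closely, with two modifications that handle the time-varying search space $\mathcal X_t$ and the fact that $\mathbf x^*$ is not guaranteed to lie in $\mathcal X_t$ for small $t$. First I would split the cumulative regret as $R_T = \sum_{t \le T_0} r_t + \sum_{t > T_0} r_t$. For $t \le T_0$, the domains $\mathcal X_t$ are compact and the Gaussian-tail assumption on the partial derivatives ensures that $f$ is almost-surely bounded there, so this contribution is absorbed into a deterministic constant $C$. From $t > T_0$ onward, Theorem 1 gives $\mathbf x^* \in \mathcal X_t$, and the lemma on $\mathcal C_T$ gives $\mathcal X_t \subseteq \mathcal C_T$ for all $t \le T$, which will be used to bound information-gain quantities uniformly by $\gamma_T(\mathcal C_T)$.

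Next I would introduce a time-indexed discretization $D_t \subset \mathcal X_t$ consisting of a uniform grid with $\tau_t^d$ points and spacing $(b_t - a_t)/\tau_t$ along each axis. The tail assumption on the partials, combined with a union bound over the $d$ coordinates, yields a high-probability Lipschitz constant $L = s_2\sqrt{\log(4ds_1/\delta)}$ with failure probability at most $\delta/4$. Taking $\tau_t$ proportional to $dL t^2 (b_t - a_t)$ makes the discretization error satisfy $|f(\mathbf x) - f([\mathbf x]_t)| \le 1/t^2$ uniformly in $\mathbf x \in \mathcal X_t$, while keeping $\log|D_t|$ of order $d\log\bigl(d t^2 s_2(b_t - a_t)\sqrt{\log(4ds_1/\delta)}\bigr)$. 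Substituting $b_t - a_t = (b-a)\bigl(1 + \sum_{j=1}^t j^\alpha\bigr)$ from Lemma 1 and replacing $t$ and $b_t - a_t$ by $T$ and $b_T - a_T$ via monotonicity recovers the stated $\beta_T$.

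Then I would apply the standard GP-UCB confidence argument on the countable set $\bigcup_t D_t$: a union bound with weights $\pi_t$ (with $\sum_t \pi_t^{-1} = 1$) yields $|f(\mathbf x) - \mu_{t-1}(\mathbf x)| \le \sqrt{\beta_t}\,\sigma_{t-1}(\mathbf x)$ for every $t$ and every $\mathbf x \in D_t$, with failure probability $\delta/2$. Combined with the discretization error bound and the acquisition rule $\mathbf x_t = \arg\max_{\mathbf x \in \mathcal X_t} u_t(\mathbf x)$, the familiar chain $f(\mathbf x^*) \le u_t([\mathbf x^*]_t) + 1/t^2 \le u_t(\mathbf x_t) + 1/t^2$ gives the per-round bound $r_t \le 2\sqrt{\beta_t}\,\sigma_{t-1}(\mathbf x_t) + 1/t^2$. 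Summing over $t > T_0$ via Cauchy--Schwarz and invoking the Srinivas posterior-variance lemma $\sum_t \sigma_{t-1}^2(\mathbf x_t) \le C_1 \gamma_T(\mathcal C_T)$ yields the main term $\sqrt{C_1 T \beta_T \gamma_T(\mathcal C_T)}$; the residual $\sum_t 1/t^2 \le \pi^2/6$ contributes the additive constant.

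The principal obstacle is carrying the discretization through a domain that grows with $t$: both the spacing requirement (which tightens as $(b_t - a_t) t^2$) and the union-bound cost $\log|D_t|$ must be absorbed into a single $\beta_T$ valid uniformly for all $t \le T$, and this is what forces the specific $(b-a)\bigl(1 + \sum_{j=1}^T j^\alpha\bigr)$ inside the logarithm of $\beta_T$. A second delicate point is invoking the Srinivas information-gain machinery over a \emph{family} of nested compacts rather than a single fixed search space; here the lemma giving $\mathcal X_t \subseteq \mathcal C_T$ is essential, since it lets one replace the growing family by the single enclosing compact $\mathcal C_T$ and keep $\gamma_T(\mathcal C_T)$ a well-defined, bounded quantity.
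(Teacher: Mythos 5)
Your proposal matches the paper's own argument in all essential respects: the split of $R_T$ at $T_0$ with the pre-$T_0$ rounds absorbed into the constant $C$, the Srinivas-style discretization of the growing domain $\mathcal X_t$ (with grid resolution scaled by $b_t-a_t$ and $t^2$, which is exactly what produces the stated $\beta_T$), the reachability of $\mathbf x^*$ from Theorem 1 for $t>T_0$, the containment $\mathcal X_t\subseteq\mathcal C_T$ to bound the information gain by $\gamma_T(\mathcal C_T)$, and the final Cauchy--Schwarz step with $\sum_t t^{-2}\le\pi^2/6$. The only cosmetic difference is that the paper retains the $2\sqrt{\beta_t}\sigma_{t-1}(\mathbf x_t)$ terms for $t\le T_0$ in the main sum and puts only the gaps $g_t$ into $C$, which changes nothing in the conclusion.
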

\begin{proof}
Let us denote by $f^*_t$ the optimum in the search space $\mathcal X_t$, and denote by $g_t$ the gap between the global optimum and the optimum in the search space $\mathcal X_t$. Formally,  $g_t = f(x^*) - f^*_t$.
We consider two cases:
\begin{itemize}
  \item $1 \le t \le T_0$. Then with the probability $1-\delta$, $r_t$ can be bounded as follows:
  \begin{eqnarray}
    r_t & = & f(x^*)- f(x_t) \\
    & = & f_t^* -f(x_t) + g_t \\
    & = & f_t^* - \mu_{t-1}(x^*) + \mu_{t-1}(x^*) - f(x_t) + g_t \\
    & \le & f(x^*) - \mu_{t-1}(x^*) + \mu_{t-1}(x^*) - f(x_t) + g_t \\
    & \le & \sqrt{\beta_t}\sigma_{t-1}(x^*) +  \mu_{t-1}(x^*) - f(x_t) + g_t\\
    & \le & \sqrt{\beta_t}\sigma_{t-1}(x_t) +  \mu_{t-1}(x_t) - f(x_t) + g_t\\
    & \le & 2\sqrt{\beta_t}\sigma_{t-1}(x_t) + g_t
  \end{eqnarray}
where the inequality (4) holds as $f^*_t \le f(x^*)$, the inequality (5) holds as $f(x^*) \le \mu_{t-1}(x^*) + \sqrt{\beta_t}\sigma_{t-1}(x^*)$ with probability $1 -\delta$ ( the proof is similar to Lemma 5.5 of \citep{Srinivas12}), the inequality (6) holds as $\sqrt{\beta_t}\sigma_{t-1}(x^*) +  \mu_{t-1}(x^*)  = \mu_t(x^*) \le \sqrt{\beta_t}\sigma_{t-1}(x_t) +  \mu_{t-1}(x_t) = \mu_t(x_t)$ ( recall that $x_t = \text{argmax}_{x \in \mathcal X_t} u_t(x)$), and finally inequality (7) holds as $\mu_{t-1}(x_t) - \sqrt{\beta_t}\sigma_{t-1}(x_t) \le f(x_t)$ with probability $1 -\delta$ ( the proof is similar to Lemma 5.1 of \citep{Srinivas12}).

  \item $t > T_0$. By Theorem 1, the search space $\mathcal X_t$ contains $x^*$. Similar to the idea of \citep{Srinivas12}, we can use a set of \emph{discretizations} of $\mathcal X_t$ to achieve a valid confidence interval on $x^*$. By proof similar to Lemma 5.8 of \citep{Srinivas12}, we achieve: $r_t \le 2\sqrt{\beta_t}\sigma_{t-1}(x_t) + \frac{1}{t^2}$.
\end{itemize}
Combining the two cases,  we achieve $R_T = \sum_{t=1}^{T}r_t \le \sum_{t=1}^{T_0}g_t + 2\sum_{t=1}^{T}\sqrt{\beta_t}\sigma_{t-1}(x_t) + \sum_{t=T_0 +1}^{T} \frac{1}{t^2} \le C + 2\sum_{t=1}^{T}\sqrt{\beta_t}\sigma_{t-1}(x_t) + \sum_{t=1}^{T} \frac{1}{t^2} \le C + 2\sum_{t=1}^{T}\sqrt{\beta_t}\sigma_{t-1}(x_t) + \sum_{t=1}^{T} \frac{1}{t^2} + \frac{\pi^2}{6}$, where we set $C = \sum_{t=1}^{T_0}g_t$. To make our problem in context of unknown search spaces tractable, we assume that the function $f$ is \emph{finite} on any finite domain of $\mathbb{R}^d$. It implies that for every $1 \le t \le T_0$, $g_t$ is finite. Further, by definition of $T_0$, $T_0$ is the constant and independent of $T$. Thus, $C$ is also a constant and is independent of $T$.

Next, we derive an upper bound on $\sum_{t=1}^{T}\sqrt{\beta_t}\sigma_{t-1}(x_t)$.  By Lemma \ref{volume_size}, for every $1 \le t \le T$, $\mathcal X_t \subseteq \mathcal C_T$. Similar to the proof of Lemma 5.4 of \citep{Srinivas12} we can achieve
\begin{eqnarray}
\sum_{t=1}^{T}4\beta_t\sigma^2_{t-1}(x_t) \le C_1\beta_T\gamma_T(\mathcal C_T),
\end{eqnarray}
where  $C_1 = 8/log(1 + \sigma^2) $, $\beta_t = 2log(4\pi_t/\delta) + 4dlog(dts_2(b-a)(1 + \sum_{j=1}^{t} j^{\alpha})\sqrt{log(4ds_1/\delta)})$.

By Cauchy-Schwarz, we have:
\begin{eqnarray}
\sum_{t=1}^{T}\sqrt{\beta_t}\sigma_{t-1}(x_t) \le \sqrt{C_1T\beta_T\gamma_T(\mathcal C_T)}
\end{eqnarray}
Therefore,  $R_T \le C + \sqrt{C_1T\beta_T\gamma_T(\mathcal C_T)} + \frac{\pi^2}{6}$.
\end{proof}
\begin{proposition}
We assume the kernel function $k$ satisfies $k(x,x') \le 1$. Then,
\begin{itemize}
  \item For SE kernels: $\gamma_T(\mathcal C_T) = \mathcal O(T^{(\alpha+1)d})$,
  \item For Mat\'ern kernels with $\nu > 1$: $\gamma_T(\mathcal C_T) = \mathcal O(T^{\frac{d^2(\alpha +2) + d}{2\nu + d(d+1)}})$
\end{itemize}
\end{proposition}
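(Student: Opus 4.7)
The plan is to lift the maximum-information-gain bounds of \citep{Srinivas12}, which are stated for a fixed compact domain, to the expanding domain $\mathcal C_T$, by tracking how the kernel-operator spectrum depends on the size of the domain. The ingredients are (i) a quantitative estimate of $\mathrm{Vol}(\mathcal C_T)$ and (ii) a Mercer-spectrum computation for SE and Mat\'ern kernels on a domain of that volume.

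First, I would quantify how fast $\mathcal C_T$ grows. By Lemma \ref{le:2}, for $-1 \le \alpha < 0$ we have $1 + \sum_{j=1}^{T} j^\alpha \le 2 + (T^{1+\alpha}-1)/(1+\alpha) = \mathcal O(T^{\alpha+1})$, so each side of $\mathcal C_T$ has length $\mathcal O(T^{\alpha+1})$ and $\mathrm{Vol}(\mathcal C_T) = \mathcal O(T^{(\alpha+1)d})$. Note that the condition $k(x,x') \le 1$ in the proposition rules out the degenerate case where the kernel amplitude would also scale with domain size.

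Next, I would invoke the general Srinivas characterization: for observations in a compact set $D$, $\gamma_T(D) \le \tfrac{1}{2}\max_{(m_s): \sum m_s = T} \sum_s \log(1 + \sigma^{-2} m_s \hat\lambda_s)$, where $\{\hat\lambda_s\}$ are the eigenvalues of the kernel integral operator on $D$. For SE on a domain of volume $V$, these eigenvalues decay super-exponentially (geometrically in $s^{1/d}$) with leading constant $\propto V$, while for Mat\'ern-$\nu$ they decay polynomially like $s^{-(2\nu+d)/d}$ with the same $V$-dependent prefactor. Plugging these spectra into the Srinivas optimization problem (exactly as in Theorem 5 and Theorem 8 of \citep{Srinivas12}, but retaining the $V$-dependence in the constants) gives, for SE, $\gamma_T(D) = \mathcal O\bigl(V (\log T)^{d+1}\bigr)$, and for Mat\'ern, $\gamma_T(D) = \mathcal O\bigl(V^{d/(2\nu+d(d+1))} \, T^{d(d+1)/(2\nu+d(d+1))}\bigr)$ up to logarithmic factors.

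Finally, I would substitute $V = \mathcal O(T^{(\alpha+1)d})$ into both rates. For SE this immediately yields $\gamma_T(\mathcal C_T) = \mathcal O(T^{(\alpha+1)d})$ (with log factors absorbed into the $\mathcal O$). For Mat\'ern, collecting exponents,
\begin{equation*}
\gamma_T(\mathcal C_T) = \mathcal O\!\left( T^{\frac{(\alpha+1)d \cdot d}{2\nu + d(d+1)} + \frac{d(d+1)}{2\nu + d(d+1)}} \right) = \mathcal O\!\left( T^{\frac{d^2(\alpha+2) + d}{2\nu + d(d+1)}} \right),
\end{equation*}
which matches the claim after rewriting $(\alpha+1)d^2 + d(d+1) = d^2(\alpha+2) + d$.

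The main obstacle is the middle step: the Srinivas proofs discretise a \emph{unit-size} domain and bound the operator eigenvalues with constants absorbed. To recover the clean exponent in terms of $V$, one must retrace their argument with explicit scaling of the Mercer eigenvalues under dilation of the domain and verify that the greedy allocation solving the $\max_{(m_s)}$ problem gives the stated $V^{d/(2\nu+d(d+1))}$ dependence for Mat\'ern; the SE case is easier because the spectrum decays so fast that the leading behaviour is essentially linear in $V$. Once the $V$-dependence is correctly propagated, substituting our bound $V = \mathcal O(T^{(\alpha+1)d})$ closes the argument.
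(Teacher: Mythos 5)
Your proposal is correct and follows essentially the same route as the paper: bound $\mathrm{Vol}(\mathcal C_T)=\mathcal O(T^{(\alpha+1)d})$ via the hyperharmonic-series upper bound, then carry that volume through the information-gain bounds of Srinivas et al.\ (Theorems 5/8). Your intermediate formula $\gamma_T(D)=\mathcal O\bigl(V^{d/(2\nu+d(d+1))}T^{d(d+1)/(2\nu+d(d+1))}\bigr)$ is exactly what the paper's choice of the balancing parameter $\tau=\frac{2\nu d-d^{3}(\alpha+1)}{2\nu+d(d+1)}$ produces, so the two derivations coincide up to repackaging of the exponent bookkeeping.
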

\begin{proof}
For SE kernels, by the proof similar as in Theorem 5 of \citep{Srinivas12}, we can bound $\gamma_T(\mathcal C_T)$ as $\gamma_T(\mathcal C_T) \le \mathcal O(Vol(\mathcal C_T)log(T))$. By definition, $\mathcal C_T = [c_{min} - (b-a)(1+\sum_{j=1}^{T} j^{\alpha})/2, c_{max} + (b-a)(1+\sum_{j=1}^{T} j^{\alpha})/2]^d$. Hence, $Vol(\mathcal C_T) = (c_{max} - c_{min} + (b-a)( 1 + \sum_{j=1}^{T}j^{\alpha}))^d$. We consider two cases on $\alpha$:
\begin{itemize}
  \item $\alpha =-1$. By Lemma \ref{le:2}, $\sum_{j=1}^{T}j^{\alpha} < 1 + ln(T)$. Hence,  $Vol(\mathcal C_T) < (c_{max} - c_{min} + (b-a)(2 + ln(T)))^d$. Therefore, $\gamma_T(\mathcal C_T) \le \mathcal O((ln(T))^{d+1})$.
  \item if $-1 < \alpha < 0$. By Lemma \ref{le:2}, $\sum_{j=1}^{T}j^{\alpha} < 1 + \frac{T^{1+\alpha}-1}{1+\alpha}$. Hence, $Vol(\mathcal C_T) = (c_{max} - c_{min} + (b-a)( 1 + \sum_{j=1}^{T}j^{\alpha}))^d < (c_{max} - c_{min} + \frac{b-a}{(1 + \alpha)^d}( 2\alpha +1 + T^{\alpha +1}))^d$. Thus, $Vol(\mathcal C_T) = \mathcal O(T^{(\alpha+1)d})$.
\end{itemize}
Thus, $\gamma_T(\mathcal C_T) = \mathcal O(T^{(\alpha+1)d})$.

For Mat\'ern kernels, by the proof similar as in Theorem 5 of \citep{Srinivas12}, we can bound $\gamma_T(\mathcal C_T)$ as $\gamma_T(\mathcal C_T) = \mathcal O(T_{*}log(Tn_T)))$, where $n_T = 2Vol(\mathcal C_T)(2\tau +1) T^{\tau}(log T)$  and $T_{*} = (Tn_T)^{d/(2\nu +d)}(log(Tn_T))^{-d/(2\nu + d)}$, $\tau$ is a parameter. We consider two cases:
\begin{itemize}
  \item if $\alpha =-1$, $\sum_{j=1}^{T}j^{\alpha} < 1 + ln(T)$. We have $Vol(\mathcal C_T) = \mathcal O(log T)$ and $\mathcal O(T_{*}log(Tn_T)) = \mathcal O(T^{\frac{(\tau +1)d}{2\nu + d}}(log T)$. We choose $\tau = \frac{2\nu d}{2\nu + d(d+1)}$ to match this term with $\mathcal O(T^{1 -\frac{\tau}{d}})$. Thus, $\gamma_T(\mathcal C_T) = \mathcal O(T^{1 - \frac{\tau}{d}}) = \mathcal O(T^{\frac{d(d+1)}{2\nu + d(d+1)}})$.
  \item if $-1 < \alpha < 0$, we obtain $Vol(\mathcal C_T) = \mathcal O(T^{(\alpha+1)d})$. Thus, $\mathcal O(T_{*}log(Tn_T)) = \mathcal O(T^{\frac{((\alpha +1)d + \tau +1)d}{2\nu + d}}(log T)$. To match this term with $\mathcal O(T^{1- \frac{\tau}{d}})$ we choose $\tau$ such that:
            $$\frac{((\alpha +1)d + \tau +1)d}{2\nu + d} = 1- \frac{\tau}{d}$$
      This is equivalent to $\tau = \frac{2\nu d - d^3(\alpha +1)}{2\nu + d(d+1)}$. Thus, $\gamma_T(\mathcal C_T) = \mathcal O(T^{1 - \frac{\tau}{d}}) = \mathcal O(T^{\frac{d^2(\alpha +2) + d}{2\nu + d(d+1)}})$.
\end{itemize}
Since, when $\alpha = -1$, $\gamma_T(\mathcal C_T) = \mathcal O(T^{\frac{d^2(\alpha +2) + d}{2\nu + d(d+1)}}) =\mathcal O(T^{\frac{d(d+1)}{2\nu + d(d+1)}})$. Thus, we can write $\gamma_T(\mathcal C_T) = \mathcal O(T^{\frac{d^2(\alpha +2) + d}{2\nu + d(d+1)}})$ for $-1 \le \alpha < 0$.
\end{proof}
Combining Proposition 1 and Proposition 2, we achieve Theorem 2.
\begin{theorem}[Cumulative Regret $R_T$ of HuBO Algorithm]
Let $f \sim \mathcal{GP}(\mathbf{0}, k)$ with a stationary covariance function $k$. Assume that there exist constants $s_1, s_2 > 0$ such that $\mathbb{P}[sup_{\mathbf{x} \in \mathcal X}|\partial f/ \partial x_{i}| > L] \le s_1e^{-(L/s_2)^2}$
for all $L > 0$ and for all $i \in \{1,2,..., d\}$. Pick a $\delta \in (0,1)$. Thus, if  $-1 \le \alpha < 0$ then for any horizon $T > T_0$, the cumulative regret of the proposed HuBO algorithm is bounded as
\begin{itemize}
  \item \scalebox{0.9}{$R_T \le \mathcal O^*(T^{\frac{(\alpha +1)d +1}{2}})$} if $k$ is a SE kernel,
  \item \scalebox{0.9}{$R_T \le \mathcal O^*(T^{\frac{d^2(\alpha +2) + d}{4\nu + 2d(d+1)}})$} if $k$ is a Mat\'ern kernel
\end{itemize}
with probability greater than $1 -\delta$.
\label{cumulative_regret_1}
\end{theorem}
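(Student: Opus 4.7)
The plan is to directly combine Proposition 1 and Proposition 2, which have already done essentially all the work, and then simplify. From Proposition 1 we have, for $T > T_0$ and with probability at least $1-\delta$,
\begin{equation*}
R_T \;\le\; C \;+\; \sqrt{C_1\, T\, \beta_T\, \gamma_T(\mathcal{C}_T)} \;+\; \frac{\pi^2}{6},
\end{equation*}
where $C = \sum_{t=1}^{T_0} g_t$ is a constant independent of $T$, and $\beta_T = 2\log(4\pi_t/\delta) + 4d\log\!\bigl(dT s_2 (b-a)(1+\sum_{j=1}^{T} j^{\alpha})\sqrt{\log(4ds_1/\delta)}\bigr)$. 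The first step of the plan is simply to observe that $\beta_T$ contributes only a polylogarithmic factor in $T$: by Lemma~\ref{le:2} the partial sum $\sum_{j=1}^T j^\alpha$ grows at most polynomially in $T$ (specifically like $T^{\alpha+1}$ for $-1 < \alpha < 0$ and like $\log T$ for $\alpha = -1$), so its logarithm is $O(\log T)$ and hence $\beta_T = O(\log T)$ hides inside the $\mathcal{O}^*$ notation.

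The second step is to substitute the bounds on $\gamma_T(\mathcal{C}_T)$ from Proposition~2. For the SE kernel, $\gamma_T(\mathcal{C}_T) = \mathcal{O}\bigl(T^{(\alpha+1)d}\bigr)$, so
\begin{equation*}
\sqrt{T\, \beta_T\, \gamma_T(\mathcal{C}_T)} \;=\; \mathcal{O}^*\!\Bigl(T^{\frac{(\alpha+1)d+1}{2}}\Bigr),
\end{equation*}
and the constant term $C$ and the $\pi^2/6$ term are negligible relative to this growing quantity (the $C$ term is dominated for $T$ large enough since $(\alpha+1)d + 1 \ge 1 > 0$). For the Mat\'ern kernel, $\gamma_T(\mathcal{C}_T) = \mathcal{O}\bigl(T^{(d^2(\alpha+2)+d)/(2\nu+d(d+1))}\bigr)$, so a similar substitution yields the stated bound (up to the half-power of $T$, which the theorem statement absorbs into its exponent).

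Putting everything together, the proof is essentially a one-line algebraic combination. The one thing that requires a moment of thought, and is the only real subtlety, is verifying that the constant $C$ from the initial $t \le T_0$ phase really is finite and independent of $T$; this was handled inside Proposition~1 under the assumption that $f$ is finite on every bounded subset of $\mathbb{R}^d$, which is a mild regularity condition implicit in the GP model. The ``hard part'' was already done in Proposition~1 (bounding $R_T$ in terms of $\gamma_T(\mathcal{C}_T)$ on a time-varying search space by controlling the growth of $\mathcal{C}_T$ via Lemma~\ref{volume_size}) and in Proposition~2 (bounding $\gamma_T(\mathcal{C}_T)$ for SE and Mat\'ern kernels using volume bounds on $\mathcal{C}_T$ via Lemmas~\ref{le:1} and~\ref{le:2}); the present theorem is a corollary.
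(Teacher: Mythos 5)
Your proposal is correct and follows essentially the same route as the paper's own proof: Theorem~\ref{cumulative_regret_1} is obtained by plugging the information-gain bounds of Proposition~2 into the regret bound of Proposition~1 and noting that $\beta_T = \mathcal{O}(\log T)$ is absorbed by the $\mathcal{O}^*$ notation. Your parenthetical about the extra half-power of $T$ in the Mat\'ern case is a fair observation --- the exponent $\frac{d^2(\alpha+2)+d}{4\nu+2d(d+1)}$ as printed omits the $+\frac{1}{2}$ contributed by the factor $\sqrt{T}$ (which the corresponding HD-HuBO statement does include), but this is an issue with the paper's statement rather than with your argument.
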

\begin{proof}
By Proposition 1, we have $R_T \le C  + \sqrt{C_1T\beta_T\gamma_T(\mathcal C_T)} + \frac{\pi^2}{6}$, where $\beta_T  = 2log(4\pi_t/\delta) + 4dlog(dTs_2(b-a)(1 + \sum_{j=1}^{T} j^{\alpha})\sqrt{log(4ds_1/\delta)})$. By Lemma \ref{le:2}, if $\alpha = -1$ then $\sum_{j=1}^{T}j^{\alpha} < 1 + ln(T)$, if $-1 < \alpha < 0$ then $\sum_{j=1}^{T}j^{\alpha} < 1 + \frac{T^{1+\alpha}-1}{1+\alpha}$. For both cases, $\beta_T \le \mathcal O(log(T))$. By Proposition 2, the Theorem 2 holds.
\end{proof}
\section{Proof of Theorem 3}
\begin{figure}[ht]
\begin{center}
\centerline{\includegraphics[scale=1.0,width=.30\textwidth]{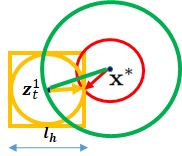}}
\caption{\emph{An illustration of the case where a hypercube (the yellow square) intersects the sphere $S_{\theta}$ (the red circle) in two-dimensional space. In this case, the inscribed sphere (the yellow circle centered at $z^1_t$ with the radius $\frac{l_h}{2}$) of the hypercube intersects the sphere $S_{\theta}$ since $z^1_t$ is within the circle centered at $x^*$ with the radius $\theta + \frac{l_h}{2}$ (the green circle).}}
\label{abc}
\end{center}
\end{figure}
\begin{theorem}
Pick a $\delta \in (0,1)$. Let $\mathbf{x}^*_t \in \mathcal H_t$ be the closest point to $\mathbf{x}^*$ in the search space $\mathcal H_t$. For any $t > T_0$ and $-1 \le \alpha < 0$, with probability greater than $1- \delta$, we have
\begin{equation}
||\mathbf{x}^*_t - \mathbf{x}^*||_2 < \frac{2(b-a)}{\pi}(\Gamma(\frac{d}{2} +1))^{\frac{1}{d}}(log(\frac{1}{\delta}))^{\frac{1}{d}}M_t,
\end{equation}
where the constant $T_0$ is defined in Theorem \ref{theorem:11}, $\Gamma$ is the gamma function, and $ M_t= (2 + ln(t))t^{-\frac{\lambda}{d}}$ if $\alpha = -1$, otherwise, $ M_t = 2(\alpha +1)^{-1}t^{-\frac{\lambda}{d}}$ if $-1 < \alpha < 0$.
\label{reachbility2}
\end{theorem}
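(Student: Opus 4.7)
The key structural observation is that every sampled center $\mathbf{z}^i_t$ itself lies in $\mathcal H_t$, because $\mathbf{z}^i_t \in H(\mathbf{z}^i_t, l_h)$ and $\mathbf{z}^i_t \in \mathcal X_t$ by construction. Hence $\|\mathbf{x}^*_t - \mathbf{x}^*\|_2 \le \min_{1 \le i \le N_t} \|\mathbf{z}^i_t - \mathbf{x}^*\|_2$, and the problem reduces to a classical random-sampling coverage argument: how likely is it that none of the $N_t$ uniform samples in $\mathcal X_t$ lands in a small Euclidean ball around $\mathbf{x}^*$?

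First, I would invoke Theorem \ref{theorem:11} to ensure $\mathbf{x}^* \in \mathcal X_t$ for every $t > T_0$, so that balls around $\mathbf{x}^*$ meaningfully intersect the sampling domain. Second, for a candidate radius $\theta > 0$, independence of the samples yields
$$\Pr\!\left[\min_i \|\mathbf{z}^i_t - \mathbf{x}^*\|_2 > \theta\right] = \left(1 - \frac{Vol(B(\mathbf{x}^*,\theta) \cap \mathcal X_t)}{Vol(\mathcal X_t)}\right)^{N_t} \le \exp\!\left(-N_t\, \frac{Vol(B(\mathbf{x}^*,\theta) \cap \mathcal X_t)}{Vol(\mathcal X_t)}\right).$$
Third, I would insert the $d$-ball volume $\pi^{d/2}\theta^d/\Gamma(d/2+1)$ into the numerator and $Vol(\mathcal X_t) = (b-a)^d(1 + \sum_{j=1}^{t} j^\alpha)^d$ from Lemma \ref{space_size} into the denominator, set the right-hand side equal to $\delta$, and solve for $\theta$. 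The resulting bound is proportional to $(b-a)(\Gamma(d/2+1))^{1/d}(\log(1/\delta))^{1/d}(1 + \sum_{j=1}^{t} j^\alpha)/N_t^{1/d}$.

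The final step is to substitute $N_t \ge t^\lambda$ together with the upper bounds from Lemma \ref{le:2}: $1 + \sum_{j=1}^{t} j^{-1} < 2 + \ln t$ when $\alpha = -1$, and $1 + \sum_{j=1}^{t} j^\alpha \le 2(\alpha+1)^{-1} t^{1+\alpha}$ when $-1 < \alpha < 0$ (the latter being elementary for $t \ge 1$ since $1 + 2\alpha < 1 \le t^{1+\alpha}$). These two cases deliver the two pieces of $M_t$ stated in the theorem.

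The main obstacle I foresee is the boundary-volume issue: if $\mathbf{x}^*$ lies close to $\partial \mathcal X_t$ the Euclidean ball $B(\mathbf{x}^*,\theta)$ can partially protrude out of $\mathcal X_t$, so a dimension-free lower bound on the usable intersection fraction is needed. The cleanest resolution is a worst-case corner factor $2^{-d}$, whose resulting numerical constant gets absorbed into the prefactor $2(b-a)/\pi$ appearing in the statement. An alternative suggested by the supplementary figure is to weaken the target event to ``the inscribed ball of $H(\mathbf{z}^i_t, l_h)$ meets $\bar B(\mathbf{x}^*, \theta)$'', i.e.\ $\|\mathbf{z}^i_t - \mathbf{x}^*\|_2 \le \theta + l_h/2$; but since the stated bound is $l_h$-independent, this refinement is not actually required.
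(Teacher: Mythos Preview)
Your approach is essentially the paper's: reduce to the probability that none of the $N_t$ uniform samples from $\mathcal X_t$ lands in a Euclidean ball around $\mathbf x^*$, use $1-x\le e^{-x}$ and independence, insert the $d$-ball volume and $Vol(\mathcal X_t)=(b-a)^d(1+\sum_{j\le t}j^\alpha)^d$, handle the boundary via the worst-case $2^{-d}$ corner factor, solve for the radius, and finish with $N_t\ge t^\lambda$ together with Lemma~\ref{le:2}. Your shortcut of using the centers $\mathbf z^i_t\in\mathcal H_t$ directly is in fact a bit cleaner than the paper's detour through ``hypercube meets $S_\theta$'' via the inscribed ball of radius $l_h/2$, which the paper then discards anyway to obtain an $l_h$-free bound; the two routes produce the same inequality.

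One slip in your last line: for $-1<\alpha<0$ your (correct) estimate $1+\sum_{j\le t}j^\alpha\le 2(\alpha+1)^{-1}t^{\alpha+1}$ yields $M_t=2(\alpha+1)^{-1}t^{\alpha+1-\lambda/d}$, not the $2(\alpha+1)^{-1}t^{-\lambda/d}$ printed in the theorem. The paper obtains the printed form only by writing $\frac{t^{\alpha+1}+1}{1+\alpha}\le\frac{2}{\alpha+1}$, which is false for every $t>1$; so the discrepancy is an error in the paper rather than in your argument, and your exponent is the right one (and still gives $M_t\to0$ under the standing assumption $\lambda>d(\alpha+1)$).
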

\begin{proof}
The proof idea is to estimate the probability that $x^*_t$ lies in a sphere around $x^*$ with a small radius. Formally, we seek to bound  $\mathbb{P}[||x^*_t - x^*||_2 \le \theta]$ given a small $\theta > 0$.

It is hard to estimate directly $\mathbb{P}[||x^*_t - x^*||_2 \le \theta]$. Instead, our idea is as follows. Since $x^*_t \in \mathcal H_t$, there exists a hypercube which contains $x^*_t$. We estimate the probability that this hypercube intersects the sphere $S_{\theta} =\{x \in \mathbb{R}^d |||x-x^*||_2 \le \theta\}$ which is centered at the optimum $x^*$ with the radius $\theta$.

We consider the case of $t > T_0$. By Theorem 1, $\mathcal X_t$ contains $x^*$ for every $t > T_0$, where $\mathcal X_t$ is the search space of the HuBO algorithm . We recall that $\mathcal X_t$ is different from $\mathcal H_t$ which is the search space of the HD-HuBO algorithm that we are considering in this section. However, since
the search space $\mathcal H_t$ is defined via $\mathcal X_t$, we need to use $\mathcal X_t$ to bound $\mathcal H_t$.

There are two cases to consider: \textbf{Case 1}: the whole sphere $S_{\theta}$ is within $\mathcal X_t$; \textbf{Case 2}: the only part of $S_{\theta}$ is within $\mathcal X_t$. Note that it is impossible that the whole sphere $S_{\theta}$ is outside of $\mathcal X_t$ since at least we have $x^* \in \mathcal X_t$ for $t > T_0$.
\begin{figure}[ht]
\begin{center}
\centerline{\includegraphics[scale=1.0,width=.35\textwidth]{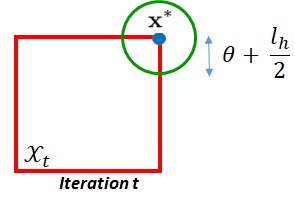}}
\caption{\emph{An illustration of the case where the global optimum $x^*$ is a vertex of the square $\mathcal X_t$ in two-dimensional space.
In this case,  only a 1/4 volume of the sphere $S_{\theta + \frac{l_h}{2}}$ centered at $x^*$  with the radius $\theta + \frac{l_h}{2}$ (the green circle) is inside of $\mathcal X_t$.}}
\label{fig:b}
\end{center}
\vskip -0.2in
\end{figure}
\begin{itemize}
  \item \textbf{Case 1} where the whole sphere $S_{\theta}$ is within $\mathcal X_t$. We seek to bound the probability that a hypercube $H(z^i_t, l_h)$ intersects the sphere $S_{\theta}$, $1 \le i \le N_t$. We denote this probability by $p_0$. This probability is greater than the probability that the inscribed sphere of the hypercube $H(z^i_t, l_h)$, denoted by $S(z^i_t, \frac{l_h}{2})$ that has the center at $z^i_t$ and the radius $\frac{l_h}{2}$ intersects the sphere $S_{\theta}$. Let us define this probability as $p_1$. Further, $p_1$ is greater than the probability that the point $z^i_t$ is within the sphere around $x^*$ with the radius $\theta + \frac{l_h}{2}$. Let us define this probability as $p_2$. To explain the connection  $p_1 \ge p_2$, we can see that the condition so that the sphere $S(z^i_t, \frac{l_h}{2})$ intersects the sphere $S_{\theta}$ is the distance between two centers $x^*$ and $z^i_t$ is less than or equal to the total of two radius. Figure \ref{abc} illustrates our situation.

    The probability $p_2$ can be computed by
    $$\frac{Vol(S_{\theta + \frac{l_h}{2}})}{Vol(\mathcal X_t)},$$
    where $Vol(\mathcal X_t)$ denotes the volume of the $\mathcal X_t$ and  $Vol(S_{\theta + \frac{l_h}{2}})$ denotes the volume of the sphere $S_{\theta + \frac{l_h}{2}}$ centered at $x^*$ with the radius $\theta + \frac{l_h}{2}$.

    Since $p_0 > p_2$, we achieve
    $$p_0 > \frac{Vol(S_{\theta + \frac{l_h}{2}})}{Vol(\mathcal X_t)}.$$
    By Lemma \ref{space_size},  the volume of $\mathcal X_t$ can be computed as $Vol(\mathcal X_t) = ((b-a)(1 + \sum_{j=1}^{t} j^{\alpha}))^d$. By \citep{Xian05}, the volume of the $d$-dimensional sphere with radius $\theta + \frac{l_h}{2}$ in $L^2$ norms is $\frac{(\pi (\theta + \frac{l_h}{2}))^d}{\Gamma(\frac{d}{2} +1)}$. Thus, the probability can be re-write as
    $$\frac{1}{\Gamma(\frac{d}{2} +1)} [\frac{2\Gamma(\frac{3}{2})(\theta + \frac{l_h}{2})}{(b-a)(1 + \sum_{j=1}^{t} j^{\alpha}))}]^d.$$
  \item \textbf{Case 2} where the only part of $S_{\theta}$ is within $\mathcal X_t$. We only consider the case where $\theta + \frac{l_h}{2} < b-a$. Note that $b-a$ is the size of the initial space $\mathcal X_0$ as defined in Algorithm 1. $l_h$ denotes the size of hypercubes and $l_h$ is a parameter of the HD-HuBO algorithm. Hence, we can choose $l_h$ so that $l_h < b-a$ and $\theta < b-a - \frac{l_h}{2}$. It means that the sphere $S_{\theta}$ is small compared to $\mathcal X_t$.

    In the worst case where $x^*$ is at the boundary of $\mathcal X_t$ for all dimensions. See Figure \ref{fig:b} for an explanation. In this case, the size of the space part of $S_{\theta + \frac{l_h}{2}}$ in $\mathcal X_t$ halves in each dimension and therefore, the volume of the space part of $S_{\theta + \frac{l_h}{2}}$ in $\mathcal X_t$, represented by $Vol(S_{\theta + \frac{l_h}{2}}) \cap Vol(\mathcal X_t)$ is reduced by $2^d$ times, compared to the whole volume of the sphere $S_{\theta + \frac{l_h}{2}}$. Thus, similar to Case 1, the probability $p_0$ that a hypercube $H(z^i_t, l_h)$ intersects the sphere $S_{\theta}$ is bounded as
    $$p_0 > \frac{Vol(S_{\theta + \frac{l_h}{2}}) \cap Vol(\mathcal X_t)}{Vol(\mathcal X_t)} = \frac{1}{2^d}\frac{1}{\Gamma(\frac{d}{2} +1)} [\frac{2\Gamma(\frac{3}{2})(\theta + \frac{l_h}{2})}{(b-a)(1 + \sum_{j=1}^{t} j^{\alpha}))}]^d.$$
\end{itemize}
Thus, in both Case 1 and Case 2, we have that the probability that a hypercube $H(z^i_t, l_h)$ intersects the sphere $S_{\theta}$ is bounded as
$$p_0 > \frac{1}{2^d}\frac{1}{\Gamma(\frac{d}{2} +1)} [\frac{2\Gamma(\frac{3}{2})(\theta + \frac{l_h}{2})}{(b-a)(1 + \sum_{j=1}^{t} j^{\alpha}))}]^d.$$
It implies that the probability that a hypercube $H(z^i_t, l_h)$ does not intersect the sphere $S_{\theta}$ is computed as
\begin{eqnarray*}
1 - p_0 & < & 1 -\frac{1}{2^d}\frac{1}{\Gamma(\frac{d}{2} +1)} [\frac{2\Gamma(\frac{3}{2})(\theta + \frac{l_h}{2})}{(b-a)(1 + \sum_{j=1}^{t} j^{\alpha}))}]^d \\
& = &  1 - \frac{1}{\Gamma(\frac{d}{2} +1)} [\frac{\Gamma(\frac{3}{2})(\theta + \frac{l_h}{2})}{(b-a)(1 + \sum_{j=1}^{t} j^{\alpha}))}]^d \\
& < &  e^{-\frac{1}{\Gamma(\frac{d}{2} +1)} [\frac{\Gamma(\frac{3}{2}) (\theta + \frac{l_h}{2})}{(b-a)(1 + \sum_{j=1}^{t} j^{\alpha}))}]^d},
\end{eqnarray*}
where we use the inequality $1-x \le e^{-x}$.

Therefore, if we consider the set of $N_t$ hypercubes then the probability that no hypercube $H(z^i_t, l_h)$ intersects the sphere $S_{\theta}$ is less than
$$ \prod_{1 \le i \le N_t} e^{-\frac{1}{\Gamma(\frac{d}{2} +1)} [\frac{\Gamma(\frac{3}{2}) (\theta + \frac{l_h}{2})}{(b-a)(1 + \sum_{j=1}^{t} j^{\alpha}))}]^d} = e^{- N_t\frac{1}{\Gamma(\frac{d}{2} +1)} [\frac{\Gamma(\frac{3}{2}) (\theta + \frac{l_h}{2})}{(b-a)(1 + \sum_{j=1}^{t} j^{\alpha}))}]^d}$$
Note that this is achieved because the set of centres of hypercubes is sampled uniformly at random (hence independently). Thus, the probability that there is at least a hypercube from the set of $N_t$ hypercubes which intersects the sphere $S_{\theta}$ is at least:
$$1 - e^{- N_t\frac{1}{\Gamma(\frac{d}{2} +1)} [\frac{\Gamma(\frac{3}{2}) (\theta + \frac{l_h}{2})}{(b-a)(1 + \sum_{j=1}^{t} j^{\alpha}))}]^d}.$$

Further, since $l_h \ge 0$, $1 - e^{- N_t\frac{1}{\Gamma(\frac{d}{2} +1)} [\frac{\Gamma(\frac{3}{2}) (\theta + \frac{l_h}{2})}{(b-a)(1 + \sum_{j=1}^{t} j^{\alpha}))}]^d} \ge 1 - e^{- N_t\frac{1}{\Gamma(\frac{d}{2} +1)} [\frac{\Gamma(\frac{3}{2}) (\theta)}{(b-a)(1 + \sum_{j=1}^{t} j^{\alpha}))}]^d}$. Thus, the probability that there is at least a hypercube from the set of $N_t$ hypercubes which intersects the sphere $S_{\theta}$ is greater than:
$$1 - e^{- N_t\frac{1}{\Gamma(\frac{d}{2} +1)} [\frac{\Gamma(\frac{3}{2}) (\theta)}{(b-a)(1 + \sum_{j=1}^{t} j^{\alpha}))}]^d}.$$
Note that here, we omit the influence of the size of hypercubes. In fact, the larger the $l_h$, the higher the probability that there is at least a hypercube from the set of $N_t$ hypercubes which intersects the sphere $S_{\theta}$.

On the other hand, if let $x^*_t \in \mathcal H_t$ be the closest point to $x^*$ in the search space $\mathcal H_t$
then the probability that there is at least a hypercube from the set of $N_t$ hypercubes which intersects the sphere $S_{\theta}$ is equal to the probability that $||x^*_t - x^*||_2 \le \theta$. Thus, we have
$$\mathbb{P}[||x^*_t - x^*||_2 \le \theta] > 1 - e^{- N_t\frac{1}{\Gamma(\frac{d}{2} +1)} [\frac{\Gamma(\frac{3}{2}) (\theta)}{(b-a)(1 + \sum_{j=1}^{t} j^{\alpha}))}]^d}.$$
Now set $ e^{- N_t\frac{1}{\Gamma(\frac{d}{2} +1)} [\frac{\Gamma(\frac{3}{2}) (\theta)}{(b-a)(1 + \sum_{j=1}^{t} j^{\alpha}))}]^d}= \delta$. We achieve $\theta = \frac{(b-a)}{\Gamma(\frac{3}{2})}(1 + \sum_{j=1}^{t} j^{\alpha}) (\Gamma(\frac{d}{2} +1))^{\frac{1}{d}}(\frac{1}{N_t}log(\frac{1}{\delta}))^{\frac{1}{d}} = \frac{2(b-a)}{\sqrt{\pi}}(1 + \sum_{j=1}^{t} j^{\alpha}) (\Gamma(\frac{d}{2} +1))^{\frac{1}{d}}(\frac{1}{N_t}log(\frac{1}{\delta}))^{\frac{1}{d}}$. Here, we use $\Gamma(\frac{3}{2}) = \frac{\sqrt{\pi}}{2}$.

Thus, given a $\delta \in (0,1)$, we have
\begin{eqnarray*}
||x_t^*- x^*||_2  < \frac{2(b-a)}{\sqrt{\pi}} (1 + \sum_{j=1}^{t} j^{\alpha})(\Gamma_t(\frac{d}{2} +1))^{\frac{1}{d}}(\frac{1}{N_t}log(\frac{1}{\delta}))^{\frac{1}{d}},
\end{eqnarray*}
with the probability $1 - \delta$.

By definition, $N_t = N_0\ceil*{t^{\lambda}} \ge t^{\lambda}$. Using the results from Lemma 2, we consider two cases of $\alpha$:
\begin{itemize}
  \item if $\alpha =-1$, $1 + \sum_{j=1}^{t} \frac{1}{j} < 2 + ln(t)$. In this case, $||x_t^*- x^*||_2  \le \frac{2(b-a)}{\sqrt{\pi}}(2 + ln(t)) (\Gamma(\frac{d}{2} +1))^{\frac{1}{d}}(\frac{1}{N_t}log(\frac{1}{\delta}))^{\frac{1}{d}}  \le \frac{2(b-a)}{\sqrt{\pi}}(\Gamma(\frac{d}{2} +1))^{\frac{1}{d}}(log(\frac{1}{\delta}))^{\frac{1}{d}} \frac{2 + ln(t)}{t^{\frac{\lambda}{d}}}$.
  \item if $-1 < \alpha <0$, $1 + \sum_{j=1}^{t} j^{\alpha} < 2 + \frac{t^{\alpha +1}-1}{\alpha +1} = \frac{t^{\alpha +1} + 2\alpha +1}{1 + \alpha}$. Since $\alpha \le 0$, $\frac{t^{\alpha +1} + 2\alpha + 1}{1 + \alpha} \le \frac{t^{\alpha +1} + 1}{1 + \alpha} \le \frac{2}{\alpha +1}$. Thus, $||x_t^*- x^*||_2  \le \frac{2(b-a)}{\sqrt{\pi}}(\Gamma(\frac{d}{2} +1))^{\frac{1}{d}}(log(\frac{1}{\delta}))^{\frac{1}{d}} \frac{2}{\alpha +1}t^{-\frac{\lambda}{d}}$.
\end{itemize}
Let \begin{equation*}
  M_t=\begin{cases}
    (2 + ln(t))t^{-\frac{\lambda}{d}}, & \text{if $\alpha = -1$}.\\
   \frac{2}{\alpha +1}t^{-\frac{\lambda}{d}}, & \text{if $-1 < \alpha < 0$}.
  \end{cases}
\end{equation*},
we have $$||x^*_t - x^*||_2 < \frac{2(b-a)}{\sqrt{\pi}}(\Gamma(\frac{d}{2} +1))^{\frac{1}{d}}(log(\frac{1}{\delta}))^{\frac{1}{d}}M_t,$$
with the high probability $1- \delta$. The Theorem holds.
\end{proof}
\section{Proof of Theorem 4}
Similar to HuBO, to derive the upper bounds of the cumulative regret of HD-HuBO for SE kernels and Mat\'ern kernels, we first derive an upper bound of the cumulative regret for a general class of kernels as the following Proposition 3. We use Theorem 3 to prove this. Next, by combining results from Proposition 2 and Proposition 3, we achieve upper bounds for HD-HuBO for SE kernels and Mat\'ern kernels.
\begin{proposition}
Let $f \sim \mathcal{GP}(\mathbf{0}, k)$ with a stationary covariance function $k$. Assume that there exist constants $s_1, s_2 > 0$ such that $\mathbb{P}[sup_{\mathbf{x} \in \mathcal X}|\partial f/ \partial x_{i}| > L] \le s_1e^{-(L/s_2)^2}$
for all $L > 0$ and for all $i \in \{1,2,..., d\}$. Pick a $\delta \in (0,1)$. Set $\beta_t = 2log(\pi^2t^2/\delta) +  2dlog(2s_2l_hd\sqrt{log(6ds_1/\delta)}t^2)$. Then, there exists a constant $C'$ such that with any horizon $T > T_0$, under conditions $\frac{\lambda}{d} > \alpha +1$, $-1 \le \alpha < 0$, $l_h > 0$, the cumulative regret of HD-HuBO Algorithm is bounded with probability greater than $1 -\delta$ as

$R_T \le  C'  +  \sqrt{C_1T\beta_T\gamma_T(\mathcal C_T)} + A(log(\frac{6}{\delta}))^{\frac{1}{d}}B_T + \frac{\pi^2}{6}$
, where $A =  s_2\sqrt{log(\frac{l_h ds_1}{\delta})}\frac{2(b-a)}{\pi}d\sqrt{d+2}$, and where $B_T =U_T V_T$ such that $U_T = 2 + ln(T)$ if $\alpha =-1$, otherwise $U_T = 2(\alpha +1)^{-1}$, and $V_T = 1 + ln(T)$ if $\lambda = d$, otherwise $V_T = 1 + \frac{d}{d- \lambda} \text{max}\{ 1, T^{1 -\frac{\lambda}{d}}\}$,

$C_1 = 8/log(1 + \sigma^2)$, $l_h$ is the size of the hypercube, $\mathcal C_T = [c_{min} - (b-a)(1+\sum_{j=1}^{T} j^{\alpha})/2, c_{max} + (b-a)(1+\sum_{j=1}^{T} j^{\alpha})/2]^d$, and $\gamma_T(\mathcal C_T)$ is the
maximum information gain about the function $f$ from any $T$ observations from $\mathcal C_T$.
\label{R_T}
\end{proposition}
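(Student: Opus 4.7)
The plan is to mirror the proof of Proposition 1, but insert the intermediate anchor $\mathbf{x}^*_t \in \mathcal{H}_t$ supplied by Theorem 3 whenever the original GP-UCB argument requires the true global optimum $\mathbf{x}^*$. For $1 \le t \le T_0$ the gaps $g_t = f(\mathbf{x}^*) - \max_{\mathbf{x} \in \mathcal{H}_t} f(\mathbf{x})$ are finite and their sum contributes a constant $C'$ (independent of $T$) to the regret, exactly as in Proposition 1. For $t > T_0$ I would write
\begin{equation*}
r_t \;=\; \bigl(f(\mathbf{x}^*) - f(\mathbf{x}^*_t)\bigr) + \bigl(f(\mathbf{x}^*_t) - f(\mathbf{x}_t)\bigr),
\end{equation*}
so that the second bracket can be handled by the usual GP-UCB telescoping against $\mathbf{x}^*_t$ (which lies in $\mathcal{H}_t$ and so is a feasible competitor for the acquisition maximiser $\mathbf{x}_t$), yielding $f(\mathbf{x}^*_t) - f(\mathbf{x}_t) \le 2\sqrt{\beta_t}\sigma_{t-1}(\mathbf{x}_t) + t^{-2}$ along the lines of Lemmas 5.1, 5.5 and 5.8 of \citep{Srinivas12} with the chosen $\beta_t$.

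The first bracket is the genuinely new term. Here I would use the derivative tail assumption $\mathbb{P}[\sup_{\mathbf{x}\in\mathcal{X}}|\partial f/\partial x_i|>L]\le s_1 e^{-(L/s_2)^2}$ to obtain, with probability at least $1-\delta/3$ (say), a uniform Lipschitz constant $L^* = s_2\sqrt{\log(l_h d s_1/\delta)}$ on each of the $d$ partial derivatives, and then bound $|f(\mathbf{x}^*) - f(\mathbf{x}^*_t)| \le L^* d \,\|\mathbf{x}^*_t - \mathbf{x}^*\|_2$. Plugging Theorem 3 in (with its own $\delta/3$ failure budget) and using Lemma 4 to replace $\Gamma(d/2+1)^{1/d}$ by $\sqrt{d+2}$ yields a per-iteration contribution of the form $A(\log(6/\delta))^{1/d} M_t$ with $A = s_2\sqrt{\log(l_h d s_1/\delta)}\cdot \frac{2(b-a)}{\pi}\, d\sqrt{d+2}$.

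It remains to sum everything over $t\le T$. The variance sum is handled exactly as in Proposition 1: all hypercubes lie in $\mathcal{X}_t \subseteq \mathcal{C}_T$ by Lemma 2, so $\sum_t 4\beta_t\sigma_{t-1}^2(\mathbf{x}_t)\le C_1 \beta_T \gamma_T(\mathcal{C}_T)$, and Cauchy--Schwarz gives the $\sqrt{C_1 T\beta_T \gamma_T(\mathcal{C}_T)}$ term. The residuals $\sum_t t^{-2}$ contribute $\pi^2/6$. The crux is therefore to evaluate $\sum_{t=1}^T M_t$: when $\alpha=-1$ this is $\sum_t (2+\ln t)t^{-\lambda/d}$ and when $-1<\alpha<0$ it is $2(\alpha+1)^{-1}\sum_t t^{-\lambda/d}$. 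For the $p$-series $\sum_t t^{-\lambda/d}$ I would split into $\lambda>d$ (bounded via Lemma 7), $\lambda=d$ (logarithmic via an integral comparison), and $\lambda<d$ (giving $O(T^{1-\lambda/d})$ via integral comparison). Packaging the logarithmic factor from $M_t$ separately into $U_T$ and the $p$-series sum into $V_T$ reproduces $B_T = U_T V_T$ as stated.

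The main obstacle I anticipate is the careful bookkeeping of the probability budget (the Lipschitz event, the discretisation argument behind $f(\mathbf{x}^*_t)\le \mu_{t-1}(\mathbf{x}^*_t)+\sqrt{\beta_t}\sigma_{t-1}(\mathbf{x}^*_t)$, and the Theorem 3 event must all hold simultaneously with total failure probability $\delta$) together with the case analysis on $\lambda$ versus $d$ when summing $M_t$, since the constant $d/(d-\lambda)$ in $V_T$ must be tracked through the integral comparison in both the $\lambda<d$ and $\lambda>d$ regimes and matched against the degenerate $\lambda=d$ case. Everything else, in particular the reuse of Proposition 1's variance-sum machinery on the enveloping box $\mathcal{C}_T$, is routine.
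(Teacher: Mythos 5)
Your proposal follows essentially the same route as the paper's proof: the same split at $T_0$ giving the constant $C'$, the same decomposition $r_t = (f(\mathbf{x}^*)-f(\mathbf{x}^*_t)) + (f(\mathbf{x}^*_t)-f(\mathbf{x}_t))$ with the second bracket handled by the standard GP-UCB discretisation argument on the hypercube containing $\mathbf{x}^*_t$ and the first by the Lipschitz tail bound combined with Theorem 3 and the $\Gamma(d/2+1)^{1/d}<\sqrt{d+2}$ estimate, the same $\delta/3$ probability budget, and the same three-way case analysis on $\lambda$ versus $d$ when summing $M_t$ to produce $B_T = U_T V_T$. No substantive differences to report.
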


\paragraph{Our Idea} To derive a cumulative regret $R_T = \sum_{t=1}^{T} r_t$, we will seek to bound $r_t = f(x^*)- f(x_t)$ for any $t$.
If $t \le T_0$, similar to the proof of Proposition 2, we achieve a bound on $r_t$: $r_t \le 2\sqrt{\beta_t}\sigma_{t-1}(x_t) + g'_t$, where $\beta_t$ is defined as in section 5 in the main paper, $g'_t$ is the is the gap between the global optimum and the optimum in $\mathcal H_t$. Formally, $g'_t = f(x^*) - f^*(\mathcal H_t)$.

Now we consider the case where $t > T_0$. Let $x^*_t \in \mathcal H_t$ be the closest point to $x^*$ in the search space $\mathcal H_t$. To obtain a bound on $r_t$ ($t > T_0$), we write it as
\begin{eqnarray}
r_t & = & f(x^*)- f(x_t) \\
& = & \underbrace{f(x^*) - f(x^*_t)}_\text{Part 1} +  \underbrace{f(x^*_t)}_\text{Part 2} - \underbrace{f(x_t)}_\text{Part 3}
\end{eqnarray}

Now we start to bound the part 1, the part 2 and part 3.
\paragraph{Bounding Part 1}
\begin{lemma}
Pick a $\delta \in (0,1)$. For any $t > T_0$, with probability at least $1- \delta$, we have
$$|f(x^*) - f(x)| \le s_2\sqrt{log(\frac{2ds_1}{\delta})}\frac{2(b-a)}{\sqrt{\pi}}d \sqrt{d+2}(log(\frac{2}{\delta}))^{\frac{1}{d}}M_t,$$
where
\begin{equation*}
  M_t=\begin{cases}
    (2 + ln(t))t^{-\frac{\lambda}{d}}, & \text{if $\alpha = -1$}.\\
   \frac{2}{\alpha +1}t^{-\frac{\lambda}{d}}, & \text{if $-1 < \alpha < 0$}.
  \end{cases}
\end{equation*}
\label{lem:1}
\end{lemma}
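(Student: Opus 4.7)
The plan is to combine Theorem~\ref{reachbility2}, which gives a high-probability bound on $\|x^*_t - x^*\|_2$, with the partial-derivative tail assumption on $f$, which furnishes a high-probability Lipschitz-type constant. Read in context (the lemma is invoked to bound Part~1, namely $f(x^*) - f(x^*_t)$), the point $x$ in the statement should be understood as $x^*_t$, the closest point to $x^*$ in $\mathcal{H}_t$.

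First I would convert the derivative tail bound into a deterministic gradient bound on a good event. The assumption gives, for each coordinate $i$, $\mathbb{P}[\sup_x |\partial f/\partial x_i| > L] \le s_1 e^{-(L/s_2)^2}$. A union bound over the $d$ coordinates, combined with the choice $L := s_2\sqrt{\log(2ds_1/\delta)}$, makes $d s_1 e^{-(L/s_2)^2} = \delta/2$, so on an event $E_1$ of probability at least $1-\delta/2$ every partial derivative satisfies $\sup_x |\partial f/\partial x_i| \le L$ simultaneously for all $i$.

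Next, on $E_1$ the fundamental theorem of calculus along the segment from $x^*_t$ to $x^*$ gives
$$|f(x^*) - f(x^*_t)| \le L\,\|x^* - x^*_t\|_1 \le L d\,\|x^* - x^*_t\|_\infty \le L d\,\|x^* - x^*_t\|_2.$$
The crude estimate $\|v\|_1 \le d\|v\|_\infty$ (rather than the sharper $\sqrt{d}\|v\|_2$) is what produces the factor of $d$ appearing in the target bound. I would then apply Theorem~\ref{reachbility2} with confidence parameter $\delta/2$ to bound $\|x^* - x^*_t\|_2$ on a second event $E_2$ of probability at least $1-\delta/2$, and finally simplify $(\Gamma(d/2+1))^{1/d}$ by $\sqrt{d+2}$ via Lemma~\ref{le:4}.

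A union bound on $E_1 \cap E_2$ then delivers the claim with probability at least $1-\delta$. The step requiring the most care is calibrating the two confidence events so that the constants $s_2\sqrt{\log(2ds_1/\delta)}$ and $(\log(2/\delta))^{1/d}$ both appear exactly as stated; apart from this bookkeeping and the mild loss in converting $\ell_1$ to $\ell_2$, there is no substantive obstacle, since Theorem~\ref{reachbility2} has already absorbed all the geometric difficulty in the argument.
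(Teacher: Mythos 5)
Your proposal is correct and follows essentially the same route as the paper's proof: split the confidence budget as $\delta/2$ for the gradient tail bound (yielding the Lipschitz constant $s_2\sqrt{\log(2ds_1/\delta)}$) and $\delta/2$ for Theorem~\ref{reachbility2} (yielding $(\log(2/\delta))^{1/d}M_t$), convert $\ell_1$ to $\ell_2$ with the factor $d$, and finish with Lemma~\ref{le:4}. You also correctly read the unquantified $x$ in the statement as $x^*_t$, which is exactly how the paper uses the lemma.
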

\begin{proof}
Given any $x \in \mathcal X_t$,  by Assumption of Theorem 4 and the union bound, we have,
$$|f(x^*) - f(x)| \le L||x^* -x)||_1$$
with probability greater than $1 - ds_1e^{-L^2/s_2^2}$. Set $ds_1e^{-L^2/s_2^2} = \delta/2$. Thus,
\begin{eqnarray}
|f(x^*) - f(x)| \le s_2\sqrt{log(\frac{2ds_1}{\delta})}||x^* -x||_1
\label{eq:1}
\end{eqnarray}
with probability greater than $1 - \delta/2$.

On the other hand, By Theorem 3 we have:
\begin{eqnarray}
||x_t^*- x^*||_2  \le \frac{2(b-a)}{\sqrt{\pi}}(\Gamma(\frac{d}{2} +1))^{\frac{1}{d}}(log(\frac{2}{\delta}))^{\frac{1}{d}}M_t
\label{eq:2}
\end{eqnarray}
with probability $1 - \delta/2$, \begin{equation*}
  M_t=\begin{cases}
    (2 + ln(t))t^{-\frac{\lambda}{d}}, & \text{if $\alpha = -1$}.\\
   \frac{2}{\alpha +1}t^{-\frac{\lambda}{d}}, & \text{if $-1 < \alpha < 0$}.
  \end{cases}
\end{equation*}
To transform from the $L^2$ norms to the $L^1$ norms, we use Cauchy-Schwarz:
\begin{eqnarray}
||x_t^*-x^*||_1 \le d||x_t^*-x^*||_2
\label{eq:3}
\end{eqnarray}
Combining Eq(\ref{eq:1}), Eq(\ref{eq:2}) and  Eq(\ref{eq:3}), we have
$$|f(x^*) - f(x)| \le s_2\sqrt{log(\frac{2ds_1}{\delta})}d\frac{2(b-a)}{\sqrt{\pi}}(\Gamma(\frac{d}{2} +1))^{\frac{1}{d}}(log(\frac{2}{\delta}))^{\frac{1}{d}}M_t$$
with the probability $1 - \delta$.

Further, by Lemma \ref{le:4}, we achieve $(\Gamma(\frac{d}{2} +1))^{\frac{1}{d}} < \sqrt{d+2}$.  Thus, $$|f(x^*) - f(x)| \le s_2\sqrt{log(\frac{2ds_1}{\delta})}\frac{2(b-a)}{\sqrt{\pi}}d \sqrt{d+2}(log(\frac{2}{\delta}))^{\frac{1}{d}}M_t$$
with the probability $1 - \delta$.
\end{proof}

\paragraph{Bounding Part 2}
Now, we continue to bound the part 2. By definition, $x^*_t \in \mathcal H_t$. Since $\mathcal H_t = \{H(\mathbf{z}^{1}_t, l_h) \cup ...\cup H(\mathbf{z}^{N_t}_t, l_h)\} \cap \mathcal X_t$,  $x^*_t$ is in some hypercube. Without the loss of generality, we assume that $x^*_t$ is within the hypercube $H(z^*_t, l_h)$, where $z^*_t$ is one centre among sampled centres $\{z^1_t, ..., z^{N_t}_t\}$.
\begin{lemma}[Bounding Part 2]
Pick a $\delta \in (0,1)$ and set $\zeta_{t}^1 =  2log(\frac{\pi^2t^2}{3\delta}) +  2dlog(2s_1l_hd\sqrt{log(\frac{2ds_1}{\delta})}t^2)$.  Then, there exists a $x' \in H(z^*_t, l_h)$ such that
\begin{eqnarray}
f(x^*_t) \le  \mu_{t-1}(x') + \sqrt{\zeta_t^1}\sigma_{t-1}(x')  + \frac{1}{t^2}
\end{eqnarray}
holds with probability $\ge 1 - \delta$.
\label{lem:2}
\end{lemma}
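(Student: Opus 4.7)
The plan is to apply a discretisation argument in the style of Lemma 5.7 of Srinivas et al.~(2012), but localised to the small hypercube $H(z^*_t, l_h)$ that, by construction, contains $x^*_t$, rather than to a full ambient search space. The role of $x'$ in the statement will be played by the point of a suitably fine grid in $H(z^*_t, l_h)$ that lies nearest to $x^*_t$.

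First I would build a regular grid $D_t \subset H(z^*_t, l_h)$ of $|D_t| = \tau_t^d$ points, uniformly spaced so that every $x \in H(z^*_t, l_h)$ has a nearest grid neighbour $[x]_t$ with $\|x - [x]_t\|_1 \le d l_h / \tau_t$. The resolution $\tau_t$ will be chosen below so that the grid is fine enough to absorb the Lipschitz rounding error into a $1/t^2$ slack, yet small enough that the union-bound cost $\log|D_t|$ can be absorbed into $\zeta_t^1$.

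Second, I would control the oscillation of $f$ on $H(z^*_t, l_h)$. By the derivative-tail assumption together with a union bound over the $d$ coordinates, there exists a Lipschitz constant $L_t$ of order $s_2 \sqrt{\log(C d s_1 / \delta)}$, for an appropriate constant $C$ that matches the share of the total $\delta$-budget allocated to this event, such that $|f(x) - f(y)| \le L_t \|x - y\|_1$ holds uniformly on the hypercube with probability at least $1 - \delta/2$. Picking $\tau_t$ of the form $2 s_1 l_h d \sqrt{\log(2 d s_1/\delta)} t^2$, as implicit in the statement of $\zeta_t^1$, then makes the quantisation error at most $L_t d l_h / \tau_t \le 1/t^2$. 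With $x' := [x^*_t]_t$ this yields $f(x^*_t) \le f(x') + 1/t^2$.

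Third, on the finite set $D_t$ I would apply the standard one-sided Gaussian concentration bound to get, pointwise, $f(x) \le \mu_{t-1}(x) + \sqrt{\zeta_t^1}\,\sigma_{t-1}(x)$ with failure probability at most $e^{-\zeta_t^1/2}$, then union-bound over $D_t$. Substituting $|D_t| = \tau_t^d$ and the stated value of $\zeta_t^1 = 2\log(\pi^2 t^2/(3\delta)) + 2d\log(\tau_t)$ makes the total failure of this step at most $3\delta/(\pi^2 t^2)$; the $\pi^2 t^2$ factor is included precisely so the same event can be union-bounded over $t$ when this lemma is plugged into Proposition \ref{R_T}. Intersecting this event with the Lipschitz event of the previous step, and taking $x' = [x^*_t]_t$, gives
\begin{equation*}
f(x^*_t) \;\le\; f(x') + \tfrac{1}{t^2} \;\le\; \mu_{t-1}(x') + \sqrt{\zeta_t^1}\,\sigma_{t-1}(x') + \tfrac{1}{t^2},
\end{equation*}
with probability at least $1 - \delta$, which is the claim.

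The main obstacle I expect is the self-referential calibration of $\tau_t$: the grid resolution required to push the Lipschitz discretisation error below $1/t^2$ depends on $L_t$, which itself depends on $\delta$ through a logarithm, while $\tau_t$ in turn appears inside $\log|D_t|$ when balancing the union bound for $\zeta_t^1$. Lining up constants so that the $\sqrt{\log(2ds_1/\delta)}$ inside $\tau_t$, the $\log(\pi^2 t^2/(3\delta))$ coming from the failure budget, and the $2d\log(\cdot)$ coming from the size of $D_t$ match the closed-form $\zeta_t^1$ in the statement is the only delicate piece; once it is done the rest is routine bookkeeping.
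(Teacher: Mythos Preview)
Your proposal is correct and follows essentially the same route as the paper: construct a discretisation $F_t$ of the hypercube $H(z^*_t,l_h)$ with resolution $\tau_t$ chosen so that the Lipschitz rounding error (obtained from the derivative-tail assumption with budget $\delta/2$) is at most $1/t^2$, then apply the standard GP confidence bound uniformly over $F_t$ via a union bound (budget $\delta/2$), and take $x' = [x^*_t]_t$. The only remaining work is the constant-matching you already flagged, and the paper's own proof handles it in exactly the way you outline (modulo an evident $s_1/s_2$ typo in the stated $\zeta_t^1$).
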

\begin{proof}
We use the idea of proof of Lemma 5.7 in \citep{Srinivas12} for the hypercube $H(z^*_t, l_h)$. We consider the distance of any two points in the hypercube:
$||x - x'||_1$. We have $||x - x'||_1  \le l_h$, where $l_h$ is the size of the hypercube.

By Assumption of Theorem 4 and the union bound, for $\forall x, x'$, we have
$$|f(x) - f(x')| \le L||x -x'||_1$$
with probability greater than $1 - ds_1e^{-L^2/s_2^2}$.
Thus, by choosing $ds_1e^{-L^2/s_2^2} = \delta/2$, we have
\begin{eqnarray}
|f(x) - f(x')| \le s_2\sqrt{log(\frac{2ds_1}{\delta})}||x -x'||_1
\label{eq:4}
\end{eqnarray}
with probability greater than $1 - \delta/2$.

Now, on $H(z^*_t, l_h)$,  we construct a discretization $F_t$ of size $(\tau_t)^d$ dense enough such that for any $x \in F_t$ $$||x - [x]_t]||_1 \le \frac{l_h d}{\tau_t}$$
where $[x]_t$ denotes the closest point in $F_t$ to $x$. In this manner, with probability greater than $1 - \delta/2$, we have
\begin{eqnarray*}
|f(x) - f([x]_t)| & \le &  \frac{s_2\sqrt{log(\frac{2ds_1}{\delta})}||x- [x]_t||_1}{\tau_t} \\
& \le &  s_2\sqrt{log(\frac{2ds_1}{\delta})}\frac{l_h d}{\tau_t} \\
& < & \frac{s_2 l_h d\sqrt{log(\frac{2ds_1}{\delta})}}{\tau_t}
\end{eqnarray*}
Here, we use the inequality $||x- [x]_t||_1 \le l_h d$. Let $\tau_t = s_2l_h d\sqrt{log(\frac{2ds_1}{\delta})}t^2$. Thus, $|F_t| = (s_2l_hd\sqrt{log(\frac{2ds_1}{\delta})}t^2)^d$. We obtain
\begin{eqnarray}
|f(x) - f([x]_t)| \le \frac{1}{t^2}
\label{eq:4}
\end{eqnarray}
with probability $1- \delta/2$ for any $x \in F_t$.

Similar to Lemma 5.6 of \citep{Srinivas12}, if we set $\zeta_{t}^1 = 2log(|F_t|\frac{\pi^2t^2}{3\delta}) = 2log(\frac{\pi^2t^2}{3\delta}) +  2dlog(s_2l_hd\sqrt{log(\frac{2ds_1}{\delta})}t^2)$, we have with probability $1 -\delta/2$, we have
\begin{eqnarray}
f(x) \le \mu_{t-1}(x) + \sqrt{\zeta_{t}^1}\sigma_{t-1}(x)
\label{eq:5}
\end{eqnarray}
for any $x \in F_t$ and any $t \ge 1$. Thus, combining Eq(\ref{eq:4}) and Eq(\ref{eq:5}), if we let $[x]_t$ which is the closest point in $F_t$ to $x$, we have
\begin{eqnarray*}
f(x^*_t) & \le & \mu_{t-1}([x^*_t]_t) + \sqrt{\zeta_{t}^{1}}\sigma_{t-1}([x^*_t]_t) + \frac{1}{t^2}
\end{eqnarray*}
with probability $1 - \delta$.
\end{proof}
\paragraph{Bounding Part 3}
\begin{lemma}
Pick a $\delta \in (0,1)$ and set $\zeta_t^0 = 2log(\pi^2t^2/(6\delta))$. Then we have
\begin{eqnarray}
f(x_t) \ge \mu_{t-1}(x_t) - \sqrt{\zeta_t^0} \sigma_{t-1}(x_t)
\end{eqnarray}
holds with probability $\ge 1 - \delta$.
\label{lem:3}
\end{lemma}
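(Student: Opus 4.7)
The plan is to mimic Lemma 5.1 of Srinivas et al. (2012), which gives a pointwise GP confidence lower bound for the chosen query point. The key observation is that although $x_t$ is random in principle, it is $\mathcal{D}_{t-1}$-measurable (it is the argmax of the acquisition function computed from $\mathcal{D}_{t-1}$), so conditional on $\mathcal{D}_{t-1}$ the posterior distribution of $f(x_t)$ is Gaussian with mean $\mu_{t-1}(x_t)$ and variance $\sigma_{t-1}^2(x_t)$. This removes the need for any discretisation argument, unlike the bounds on Part 1 and Part 2 where the relevant point is not known in advance.

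First I would fix $t$ and condition on $\mathcal{D}_{t-1}$. Since $f \sim \mathcal{GP}(\mathbf{0}, k)$, the random variable $Z_t := (f(x_t) - \mu_{t-1}(x_t))/\sigma_{t-1}(x_t)$ is standard normal conditional on $\mathcal{D}_{t-1}$. Using the standard Gaussian tail bound $\mathbb{P}[Z_t < -c] \le \tfrac{1}{2} e^{-c^2/2}$ for $c > 0$, I would take $c = \sqrt{\zeta_t^0}$, so that
\begin{equation*}
\mathbb{P}\bigl[f(x_t) < \mu_{t-1}(x_t) - \sqrt{\zeta_t^0}\,\sigma_{t-1}(x_t) \mid \mathcal{D}_{t-1}\bigr] \le \tfrac{1}{2} e^{-\zeta_t^0/2}.
\end{equation*}
Marginalising over $\mathcal{D}_{t-1}$ preserves this inequality.

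Next I would take a union bound over all iterations $t \ge 1$. With the prescribed choice $\zeta_t^0 = 2\log(\pi^2 t^2 / (6\delta))$, the per-iteration failure probability is at most $\tfrac{1}{2}\cdot 6\delta/(\pi^2 t^2) = 3\delta/(\pi^2 t^2)$. Summing over $t$ and using $\sum_{t\ge 1} 1/t^2 = \pi^2/6$ gives total failure probability at most $\sum_{t\ge 1} 3\delta/(\pi^2 t^2) = \delta/2 \le \delta$, which establishes the desired bound simultaneously for all $t$ with probability at least $1-\delta$.

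There is no real obstacle here — the lemma is essentially a one-sided version of the standard GP-UCB confidence inequality. The only subtlety worth stating carefully is the measurability of $x_t$ with respect to $\mathcal{D}_{t-1}$; once that is noted, the proof reduces to a Gaussian tail bound plus a Basel-series union bound. Note that this bound is strictly easier than Lemma \ref{lem:2}, because the upper bound on $f(x^*_t)$ in Part 2 needs to hold at a point ($x^*_t$) that depends on $f$ itself, forcing the discretisation $F_t$; by contrast $x_t$ is fully determined by past data, so the pointwise Gaussian bound suffices.
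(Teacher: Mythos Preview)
Your proof is correct and matches the paper's approach: the paper simply defers to Lemma~5.5 of \citet{Srinivas12}, and what you have written is precisely a reconstruction of that argument (Gaussian tail bound at the $\mathcal{D}_{t-1}$-measurable point $x_t$, followed by a Basel-series union bound over $t$). One minor remark: the relevant reference in \citet{Srinivas12} is Lemma~5.5 rather than Lemma~5.1 --- Lemma~5.1 is the uniform bound over a finite decision set, whereas Lemma~5.5 is exactly the pointwise bound at $x_t$ that you describe.
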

\begin{proof}
It is similar to Lemma 5.5 of \citep{Srinivas12}.
\end{proof}

Now, we combine the results from Lemmas \ref{lem:1}, \ref{lem:2} and \ref{lem:3} to obtain a bound on $r_t$ as in the following Lemma.
\begin{lemma}[Bounding $r_t$]
Pick a $\delta \in (0,1)$ and set $\beta_t = 2log(\frac{\pi^2t^2}{\delta}) +  2dlog(2s_2l_hd\sqrt{log(\frac{6s_1d}{\delta})}t^2)$. Then with $t > T_0$, $\frac{\lambda}{d} > \alpha +1$, $-1 \le \alpha < 0$ and $l_h > 0$, we have
\begin{eqnarray}
r_t \le 2\beta_t^{1/2}\sigma_{t-1}(x_t) + \frac{1}{t^2} + A(log(\frac{6}{\delta}))^{\frac{1}{d}}M_t
\end{eqnarray}
holds with probability $\ge 1 - \delta$, where $A =  s_2\sqrt{log(\frac{2 ds_1}{\delta})}\frac{2(b-a)}{\sqrt{\pi}}d\sqrt{d+2}$ and
\begin{equation*}
  M_t=\begin{cases}
    (2 + ln(t))t^{-\frac{\lambda}{d}}, & \text{if $\alpha = -1$}.\\
   \frac{2}{\alpha +1}t^{-\frac{\lambda}{d}}, & \text{if $-1 < \alpha < 0$}.
  \end{cases}
\end{equation*}
\label{lem:4}
\end{lemma}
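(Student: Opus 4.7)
The plan is to decompose $r_t$ as in the ``Our Idea'' paragraph and bound each of the three pieces using Lemmas \ref{lem:1}, \ref{lem:2}, \ref{lem:3}, then combine them through the UCB selection rule.

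First, I would apply a union bound: each of the three lemmas is invoked with failure budget $\delta/3$ rather than $\delta$, so the conjunction of all three conclusions holds with probability at least $1-\delta$. With this substitution, Lemma \ref{lem:1} produces the ``Part 1'' term $|f(x^*)-f(x^*_t)| \le A(\log(6/\delta))^{1/d} M_t$ (the extra factor of $3$ inside the logarithms is absorbed into the constants defining $A$ and into $\beta_t$, matching the $\log(6s_1 d/\delta)$ term that appears in the stated $\beta_t$). Lemma \ref{lem:2} yields a point $x' \in H(z^*_t, l_h)$ such that
\begin{equation*}
f(x^*_t) \le \mu_{t-1}(x') + \sqrt{\zeta^1_t}\,\sigma_{t-1}(x') + \tfrac{1}{t^2},
\end{equation*}
and Lemma \ref{lem:3} gives $f(x_t) \ge \mu_{t-1}(x_t) - \sqrt{\zeta^0_t}\,\sigma_{t-1}(x_t)$.

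Next, the key step is to invoke the acquisition rule. Since $x'\in H(z^*_t,l_h)\subseteq \mathcal H_t$ and $x_t = \mathrm{argmax}_{x\in\mathcal H_t} u_t(x) = \mathrm{argmax}_{x\in\mathcal H_t}\{\mu_{t-1}(x)+\sqrt{\beta_t}\,\sigma_{t-1}(x)\}$, we have
\begin{equation*}
\mu_{t-1}(x') + \sqrt{\beta_t}\,\sigma_{t-1}(x') \le \mu_{t-1}(x_t) + \sqrt{\beta_t}\,\sigma_{t-1}(x_t),
\end{equation*}
provided $\beta_t \ge \max(\zeta^1_t,\zeta^0_t)$. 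A direct comparison shows that the choice $\beta_t = 2\log(\pi^2 t^2/\delta) + 2d\log(2 s_2 l_h d \sqrt{\log(6s_1 d/\delta)}\,t^2)$ indeed dominates both thresholds after the $\delta\to\delta/3$ substitution. Substituting the Lemma \ref{lem:3} bound for $-f(x_t)$ and the Lemma \ref{lem:2} bound for $f(x^*_t)$ and then applying the UCB inequality gives
\begin{equation*}
f(x^*_t) - f(x_t) \le 2\sqrt{\beta_t}\,\sigma_{t-1}(x_t) + \tfrac{1}{t^2}.
\end{equation*}
Adding the Lemma \ref{lem:1} bound on $f(x^*)-f(x^*_t)$ yields the claimed inequality.

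I expect the main obstacle to be the bookkeeping of confidence levels and constants: one must take care that (i) each of the three stochastic events retains at most $\delta/3$ failure probability, (ii) the chosen $\beta_t$ simultaneously dominates both the discretization-based threshold $\zeta^1_t$ from Lemma \ref{lem:2} and the GP-concentration threshold $\zeta^0_t$ from Lemma \ref{lem:3}, and (iii) the membership $x'\in\mathcal H_t$ is genuinely used so that the UCB argmax step is valid on the \emph{restricted} search space $\mathcal H_t$ rather than on $\mathcal X_t$. The conditions $-1\le\alpha<0$, $\lambda/d > \alpha+1$, and $l_h>0$ enter only through the validity of Lemma \ref{lem:1} (via Theorem \ref{reachbility2}) and to guarantee $M_t$ is well defined and vanishing; they are not needed for the combination step itself.
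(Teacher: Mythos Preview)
Your proposal is correct and follows essentially the same route as the paper: split $r_t$ into the three parts, apply Lemmas~\ref{lem:1}--\ref{lem:3} each with failure probability $\delta/3$, use $\beta_t \ge \max(\zeta_t^0,\zeta_t^1)$ to replace the thresholds, and invoke the argmax property of $x_t$ over $\mathcal H_t$ (with $x'\in H(z^*_t,l_h)\subseteq\mathcal H_t$) to collapse Parts~2 and~3 into $2\sqrt{\beta_t}\,\sigma_{t-1}(x_t)+1/t^2$. Your bookkeeping remarks (i)--(iii) are exactly the points the paper handles, so there is nothing to add.
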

\begin{proof}
We use $\frac{\delta}{3}$ for Lemmas \ref{lem:1}, \ref{lem:2} and \ref{lem:3} so that these events hold simultaneously with probability greater than $1- \delta$. Formally, by Lemma \ref{lem:3} using $\frac{\delta}{3}$:
\begin{eqnarray*}
f(x_t) \ge \mu_{t-1}(x_t) - \sqrt{2log(\frac{\pi^2t^2}{\delta})}\sigma_{t-1}(x_t)
\end{eqnarray*}
holds with probability $\ge 1 - \frac{\delta}{3}$. As a result,
\begin{eqnarray}
f(x_t) & \ge &  \mu_{t-1}(x_t) - \sqrt{\zeta_t^0}\sigma_{t-1}(x_t) \\
& > & \mu_{t-1}(x_t) - \sqrt{\beta_t}\sigma_{t-1}(x_t) \label{eq:6}
\end{eqnarray}
holds with probability $\ge 1 - \frac{\delta}{3}$.

By Lemma \ref{lem:2} using $\frac{\delta}{3}$, there exists a $x' \in H(z^*_t, l_h)$ such that
\begin{eqnarray*}
f(x^*_t) \le \mu_{t-1}(x') + \sqrt{\zeta_t^{1}}\sigma_{t-1}(x') + \frac{1}{t^2}
\end{eqnarray*}
holds with probability $\ge 1 - \frac{\delta}{3}$. As a result,
\begin{eqnarray*}
f(x^*_t) & \le & \mu_{t-1}(x') + \sqrt{\zeta_t^{1}}\sigma_{t-1}(x') + \frac{1}{t^2} \\
f(x^*_t) & \le & \mu_{t-1}(x') + \sqrt{\beta_t}\sigma_{t-1}(x') + \frac{1}{t^2} \\
& = & u_t(x') + \frac{1}{t^2}
\end{eqnarray*}
Recall that $u_t(x)$ is the acquisition function  defined in the main paper.  Since $x_t = \text{argmax}_{x \in \mathcal X'_t}u_t(x)$ and $x' \in H(z^*_t, l_h) \subset \mathcal X'_t$, we have $u_t(x') \le u_t(x_t)$. Thus,
\begin{eqnarray}
 f(x^*_t) \le u_t(x_t) + \frac{1}{t^2} \label{eq:7}
\end{eqnarray}
holds with probability $\ge 1 - \frac{\delta}{3}$.

By Lemma \ref{lem:1} using $\frac{\delta}{3}$:
\begin{eqnarray}
|f(x_t^*) - f(x^*)| \le A(log(\frac{6}{\delta}))^{\frac{1}{d}}M_t
 \label{eq:8}
\end{eqnarray}
holds with probability $\ge 1 - \frac{\delta}{3}$.

Combing Eq(\ref{eq:6}), Eq(\ref{eq:7}) and Eq(\ref{eq:8}), we have
\begin{eqnarray}
r_t & = & f(x^*) - f(x_t)\\
& = & \underbrace{f(x^*) - f(x^*_t)}_\text{Part 1} +  \underbrace{f(x^*_t)}_\text{Part 2} - \underbrace{f(x_t)}_\text{Part 3}\\
& \le &  A(log(\frac{6}{\delta}))^{\frac{1}{d}}M_t  +  \underbrace{f(x^*_t)}_\text{Part 2} - \underbrace{f(x_t)}_\text{Part 3}\\
& \le & A(log(\frac{6}{\delta}))^{\frac{1}{d}}M_t + \frac{1}{t^2} + u_{t}(x_t)  - f(x_t) \label{eq:9}\\
& \le & A(log(\frac{6}{\delta}))^{\frac{1}{d}}M_t + \frac{1}{t^2} + 2(\beta_t)^{1/2}\sigma_{t-1}(x_t)
\label{eq:10}
\end{eqnarray}
holds with probability $\ge 1 - \delta$.
\end{proof}

Now we are ready to prove Proposition \ref{R_T}.

We have $R_T = \sum_{t=1}^{T} r_t = \sum_{t=1}^{T_0} r_t + \sum_{t=T_0 +1}^{T} r_t$.

Similar to the proof of Proposition 1, we have $\sum_{t=1}^{T_0} r_t  \le \sum_{t=1}^{T_0} (2\sqrt{\beta_t}\sigma_{t-1}(x_t) + g'_t)$.

On the other hand, By Lemma \ref{lem:4}, we have $ \sum_{t=T_0 +1}^{T} r_t \le \sum_{t=T_0 +1}^{T}( 2\beta_t^{1/2}\sigma_{t-1}(x_t) + \frac{1}{t^2} +  A(log(\frac{6}{\delta}))^{\frac{1}{d}}M_t)$. Thus,
$R_T = \sum_{t=1}^{T} r_t \le \sum_{t=1}^{T_0} g'_t + \frac{\pi^2}{6} + \sum_{t=1}^{T}2\beta_t^{1/2}\sigma_{t-1}(x_t) + \sum_{t=T_0 +1}^{T} A(log(\frac{6}{\delta}))^{\frac{1}{d}}M_t$. We set $C' = \sum_{t=1}^{T_0}g'_t$. To make our problem in context of unknown search spaces tractable, we assume that the function $f$ is \emph{finite} on any finite domain of $\mathbb{R}^d$. It implies that for every $1 \le t \le T_0$, $g_t'$ is finite. Further, by definition of $T_0$, $T_0$ is the constant and independent of $T$. Thus, $C'$ is also a constant and is independent of $T$. Thus, we have $R_T \le C' + \frac{\pi^2}{6} + \sum_{t=1}^{T}2\beta_t^{1/2}\sigma_{t-1}(x_t) + \sum_{t=T_0 +1}^{T} A(log(\frac{6}{\delta}))^{\frac{1}{d}}M_t$

To bound $\sum_{1}^{T}2\beta_t^{1/2}\sigma_{t-1}(x_t)$, we use the property of $\mathcal C_T$ and $\mathcal H_T$ that $\mathcal H_T \subseteq \mathcal X_T \subseteq \mathcal C_T$. Hence, similar to the proof of Lemma 5.4 of \citep{Srinivas12}, we have $\sum_{1}^{T}2\beta_t^{1/2}\sigma_{t-1}(x_t) \le \sqrt{C_1T\beta_T\gamma_T(\mathcal C_T)}$.
The remaining problem is to bound $\sum_{T_0}^{T} A(log(\frac{6}{\delta}))^{\frac{1}{d}}M_t = A(log(\frac{6}{\delta}))^{\frac{1}{d}}\sum_{T_0}^{T}M_t$, where
\begin{equation*}
  M_t=\begin{cases}
    (2 + ln(t))t^{-\frac{\lambda}{d}}, & \text{if $\alpha = -1$}.\\
   \frac{2}{\alpha +1}t^{-\frac{\lambda}{d}}, & \text{if $-1 < \alpha < 0$}.
  \end{cases}
\end{equation*}
We consider two cases of $\alpha$:
\begin{itemize}
  \item If $\alpha = -1$, then $\sum_{t=T_0}^{T}M_t \le \sum_{t=1}^{T} \frac{2 + ln(t)}{t^{\frac{\lambda}{d}}} < (2 + ln(T))\sum_{t=1}^{T} \frac{1}{t^{\frac{\lambda}{d}}}$. We consider three cases of $\lambda$:
\begin{itemize}
  \item if $\lambda = d$, $\sum_{t=1}^{T} \frac{1}{t^{\frac{\lambda}{d}}} = \sum_{t=1}^{T} \frac{1}{t} < 1 + ln(T)$ (using Lemma \ref{le:2}). Therefore, $\sum_{T_0}^{T} A(log(\frac{6}{\delta}))^{\frac{1}{d}}M_t < A(log(\frac{6}{\delta}))^{\frac{1}{d}}B_T$, where $B_T = (2 + ln(T))(1 + ln(T))$.
  \item if $\lambda > d$, $\sum_{t=1}^{T} \frac{1}{t^{\frac{\lambda}{d}}} < 1 + \frac{1}{\lambda/d -1} = \frac{\lambda}{\lambda - d} $(using Lemma \ref{le:3}). Thus, $\sum_{T_0}^{T} A(log(\frac{6}{\delta}))^{\frac{1}{d}}M_t < A(log(\frac{6}{\delta}))^{\frac{1}{d}}B_T$, where $B_T = (2 + ln(T))(1 + \frac{d}{d -\lambda})$.
  \item if $0 < \lambda < d$, $\sum_{t=1}^{T} \frac{1}{t^{\frac{\lambda}{d}}} < 1 + \frac{T^{1 - \frac{\lambda}{d}}}{1 - \frac{\lambda}{d}} < 1 + \frac{d}{d -\lambda}T^{1- \frac{\lambda}{d}}$. Thus, $\sum_{T_0}^{T} A(log(\frac{6}{\delta}))^{\frac{1}{d}}M_t < A(log(\frac{6}{\delta}))^{\frac{1}{d}}B_T$, where $B_T = (2 + ln(T))(1+ \frac{d}{d -\lambda}T^{1- \frac{\lambda}{d}})$.
\end{itemize}
  \item If $-1 < \alpha < 0$, then $\sum_{t=T_0}^{T}M_t \le \frac{2}{\alpha +1}(\sum_{t=1}^{T} \frac{1}{t^{\frac{\lambda}{d}}})$. Similar to the above case, we consider three cases of $\lambda$:
\begin{itemize}
   \item if $\lambda = d$, $\sum_{t=1}^{T} \frac{1}{t^{\frac{\lambda}{d}}} = \sum_{t=1}^{T} \frac{1}{t} < 1 + ln(T)$ (using Lemma \ref{le:2}). Therefore, $\sum_{T_0}^{T} A(log(\frac{6}{\delta}))^{\frac{1}{d}}M_t < A(log(\frac{6}{\delta}))^{\frac{1}{d}}B_T$, where $B_T = \frac{2}{\alpha +1}(1 + ln(T))$.
  \item if $\lambda > d$, $\sum_{t=1}^{T} \frac{1}{t^{\frac{\lambda}{d}}} < 1 + \frac{1}{\lambda/d -1} = \frac{\lambda}{\lambda - d} $(using Lemma \ref{le:3}). Thus, $\sum_{T_0}^{T} A(log(\frac{6}{\delta}))^{\frac{1}{d}}M_t < A(log(\frac{6}{\delta}))^{\frac{1}{d}}B_T$, where $B_T = \frac{2}{\alpha +1}(1 + \frac{d}{d -\lambda})$.
  \item if $0 < \lambda < d$, $\sum_{t=1}^{T} \frac{1}{t^{\frac{\lambda}{d}}} < 1 + \frac{T^{1 - \frac{\lambda}{d}}}{1 - \frac{\lambda}{d}} < 1 + \frac{d}{d -\lambda}T^{1- \frac{\lambda}{d}}$. Thus, $\sum_{T_0}^{T} A(log(\frac{6}{\delta}))^{\frac{1}{d}}M_t < A(log(\frac{6}{\delta}))^{\frac{1}{d}}B_T$, where $B_T = \frac{2}{\alpha +1}(1+ \frac{d}{d -\lambda}T^{1- \frac{\lambda}{d}})$.
\end{itemize}
\end{itemize}
For all cases,
with probability greater than $1 -\delta$ we achieve
$R_T \le  C'  +  \sqrt{C_1T\beta_T\gamma_T(\mathcal C_T)} + A(log(\frac{6}{\delta}))^{\frac{1}{d}}B_T + \frac{\pi^2}{6}$
, where $A =  s_2\sqrt{log(\frac{l_h ds_1}{\delta})}\frac{2(b-a)}{\pi}d\sqrt{d+2}$, and
$B_T =U_T V_T$ such that $U_T = 2 + ln(T)$ if $\alpha =-1$, otherwise $U_T = 2(\alpha +1)^{-1}$, and $V_T = 1 + ln(T)$ if $\lambda = d$, otherwise $V_T = 1 + \frac{d}{d- \lambda} \text{max}\{ 1, T^{1 -\frac{\lambda}{d}}\}$. Thus, Proposition 3 holds.
\begin{theorem}[Cumulative Regret $R_T$ of HD-HuBO Algorithm]
Let $f \sim \mathcal{GP}(\mathbf{0}, k)$ with a stationary covariance function $k$. Assume that there exist constants $s_1, s_2 > 0$ such that $\mathbb{P}[sup_{\mathbf{x} \in \mathcal X}|\partial f/ \partial x_{i}| > L] \le s_1e^{-(L/s_2)^2}$
for all $L > 0$ and for all $i \in \{1,2,..., d\}$. Pick a $\delta \in (0,1)$. Then, with $T > T_0$, under conditions $\lambda > d(\alpha +1)$, $-1 \le \alpha < 0$, $l_h > 0$, the cumulative regret of proposed HD-HuBO algorithm is bounded as
\begin{itemize}
  \item \scalebox{0.9}{$R_T \le \mathcal O^*(T^{\frac{(\alpha +1)d +1}{2}} + (log(\frac{6}{\delta}))^{\frac{1}{d}}B_T)$} if $k$ is a SE kernel,
  \item \scalebox{0.9}{$R_T \le \mathcal O^*(T^{\frac{d^2(\alpha +2) + d}{4\nu + 2d(d+1)}+ \frac{1}{2}} + (log(\frac{6}{\delta}))^{\frac{1}{d}}B_T)$} if $k$ is a Mat\'ern kernel,
\end{itemize}
with probability greater than $1 -\delta$, where $B_T =U_T V_T$ such that $U_T = 2 + ln(T)$ if $\alpha =-1$, otherwise $U_T = 2(\alpha +1)^{-1}$, and $V_T = 1 + ln(T)$ if $\lambda = d$, otherwise $V_T = 1 + \frac{d}{d- \lambda} \text{max}\{ 1, T^{1 -\frac{\lambda}{d}}\}$.
\label{theorem4}
\end{theorem}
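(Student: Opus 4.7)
The plan is to prove Theorem 4 by following the standard two-stage template for GP-UCB regret: first derive a generic bound on $R_T$ in terms of the maximum information gain $\gamma_T(\mathcal{C}_T)$ over the containing set $\mathcal{C}_T$ from Lemma 2, and then specialise this bound by plugging in the kernel-specific information-gain rates already computed in Proposition 2. The novelty compared with the HuBO proof is that the maximisation is carried out on the restricted set $\mathcal{H}_t$ rather than on $\mathcal{X}_t$, so the instantaneous regret must absorb an additional ``reachability'' term controlled by Theorem 3.

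First I would split $r_t = f(\mathbf{x}^*) - f(\mathbf{x}_t)$ for $t > T_0$ as
\begin{equation*}
r_t = \bigl(f(\mathbf{x}^*) - f(\mathbf{x}^*_t)\bigr) + \bigl(f(\mathbf{x}^*_t) - f(\mathbf{x}_t)\bigr),
\end{equation*}
where $\mathbf{x}^*_t$ is the closest point to $\mathbf{x}^*$ inside $\mathcal{H}_t$. For the first summand I would combine the high-probability gradient assumption (via a union bound, giving an $L^1$-Lipschitz constant of order $s_2\sqrt{\log(ds_1/\delta)}$) with the estimate $\|\mathbf{x}^*_t - \mathbf{x}^*\|_2 \lesssim (\log(1/\delta))^{1/d} M_t$ from Theorem 3. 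A Cauchy--Schwarz step to pass from $L^2$ to $L^1$ and the bound $\Gamma(d/2+1)^{1/d} \le \sqrt{d+2}$ (Lemma 4 in the auxiliary section) give a term of the form $A(\log(6/\delta))^{1/d} M_t$. For the second summand, because $\mathbf{x}^*_t$ lies inside some hypercube $H(\mathbf{z}^*_t, l_h) \subset \mathcal{H}_t$ of small diameter, I would discretise this single hypercube with a grid $F_t$ of size $(s_2 l_h d\sqrt{\log(2ds_1/\delta)}t^2)^d$ to obtain a standard $1/t^2$ approximation error, and then apply the GP-UCB confidence inequalities as in Lemmas 5.5--5.7 of Srinivas et al.\ to deduce $f(\mathbf{x}^*_t) - f(\mathbf{x}_t) \le 2\sqrt{\beta_t}\sigma_{t-1}(\mathbf{x}_t) + 1/t^2$, using $\mathbf{x}_t = \arg\max_{\mathbf{x}\in\mathcal{H}_t} u_t(\mathbf{x})$.

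For $t \le T_0$, since $T_0$ is a fixed constant from Theorem 1 and $f$ is finite on every bounded domain, the contribution $\sum_{t\le T_0}(f(\mathbf{x}^*) - f^*(\mathcal{H}_t))$ is a universal constant $C'$, independent of $T$. Summing the three contributions then gives
\begin{equation*}
R_T \le C' + \tfrac{\pi^2}{6} + \sum_{t=1}^{T} 2\sqrt{\beta_t}\sigma_{t-1}(\mathbf{x}_t) + A(\log(6/\delta))^{1/d}\sum_{t=T_0+1}^{T} M_t.
\end{equation*}
The middle sum is controlled by Cauchy--Schwarz and the standard information-gain inequality $\sum_t 4\beta_t \sigma_{t-1}^2(\mathbf{x}_t) \le C_1 \beta_T \gamma_T(\mathcal{C}_T)$, valid because $\mathcal{H}_t \subseteq \mathcal{X}_t \subseteq \mathcal{C}_T$ by Lemma 2. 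Finally, invoking Proposition 2 converts $\gamma_T(\mathcal{C}_T)$ into $\mathcal{O}(T^{(\alpha+1)d})$ for SE and $\mathcal{O}(T^{(d^2(\alpha+2)+d)/(2\nu+d(d+1))})$ for Matérn, and $\beta_T$ is only polylogarithmic in $T$, yielding the $\mathcal{O}^*$ rates claimed in the theorem.

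The main obstacle is the bookkeeping for the residual sum $\sum_{t=T_0+1}^{T} M_t$, which drives the additive $(\log(6/\delta))^{1/d} B_T$ term. Because $M_t$ has two regimes ($\alpha = -1$ giving $(2+\ln t)\,t^{-\lambda/d}$ and $-1 < \alpha < 0$ giving $2(\alpha+1)^{-1} t^{-\lambda/d}$) and the exponent $\lambda/d$ can lie below, at, or above $1$, I expect to split into six cases using Lemmas 2 and 3 of the auxiliary section (hyperharmonic and $p$-series bounds). The condition $\lambda > d(\alpha+1)$ is exactly what is needed so that the dominant $T^{1-\lambda/d}$ factor in $V_T$ is strictly smaller than the $T^{((\alpha+1)d+1)/2}$ rate from the information-gain term, preserving sublinearity. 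Once these cases are verified, reassembling them into the compact form $B_T = U_T V_T$ with the piecewise definitions of $U_T$ and $V_T$ completes the proof.
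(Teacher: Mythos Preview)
Your proposal is correct and follows essentially the same route as the paper's proof: the paper also decomposes $r_t$ for $t>T_0$ into the reachability gap $f(\mathbf{x}^*)-f(\mathbf{x}^*_t)$ (bounded via Theorem~3, the Lipschitz/gradient assumption, Cauchy--Schwarz, and the $\Gamma$-function estimate), plus the GP-UCB part handled by discretising the single hypercube containing $\mathbf{x}^*_t$, then sums, controls $\sum_t \sqrt{\beta_t}\sigma_{t-1}(\mathbf{x}_t)$ through $\gamma_T(\mathcal{C}_T)$ using the containment $\mathcal{H}_t\subseteq\mathcal{X}_t\subseteq\mathcal{C}_T$, and finally splits $\sum_t M_t$ into the six $(\alpha,\lambda)$ cases via the hyperharmonic and $p$-series lemmas before invoking Proposition~2. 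One small remark: the hypothesis $\lambda>d(\alpha+1)$ is not actually used to compare $T^{1-\lambda/d}$ with the information-gain rate as you suggest; in the paper it enters only through Theorem~3 to ensure $M_t\to 0$ (and hence sublinearity of $B_T$), while the regret bound itself is derived without invoking it.
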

\begin{proof}
Theorem holds due to Proposition 2 and Proposition 3.
\end{proof}
\section{Experiments}
\paragraph{On the initial search space}
\begin{figure}[ht]
  \centering
  \subfigure{\includegraphics[scale=1.0,width=.45\textwidth]{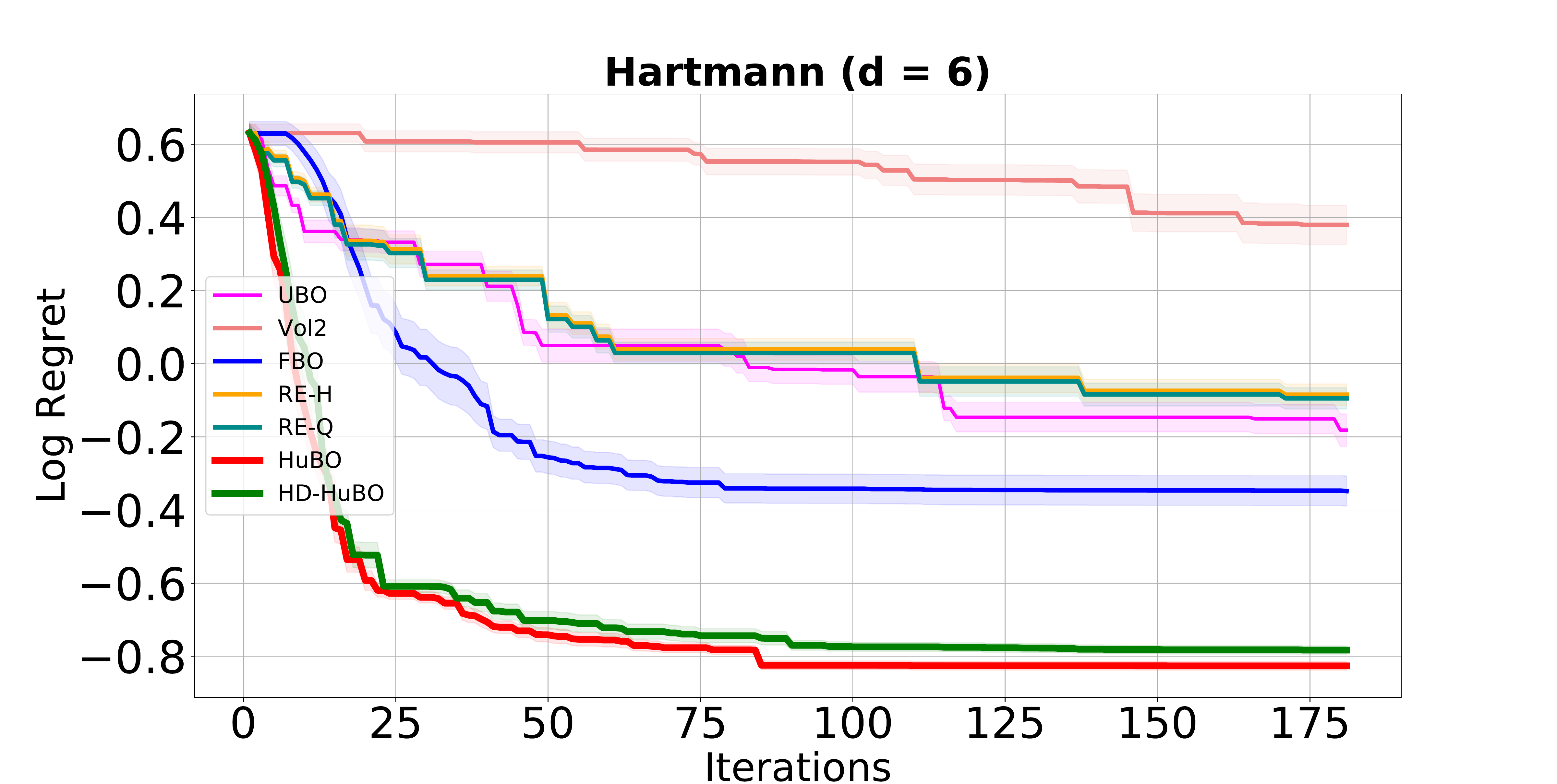}}\quad
  \subfigure{\includegraphics[scale=1.0,width=.45\textwidth]{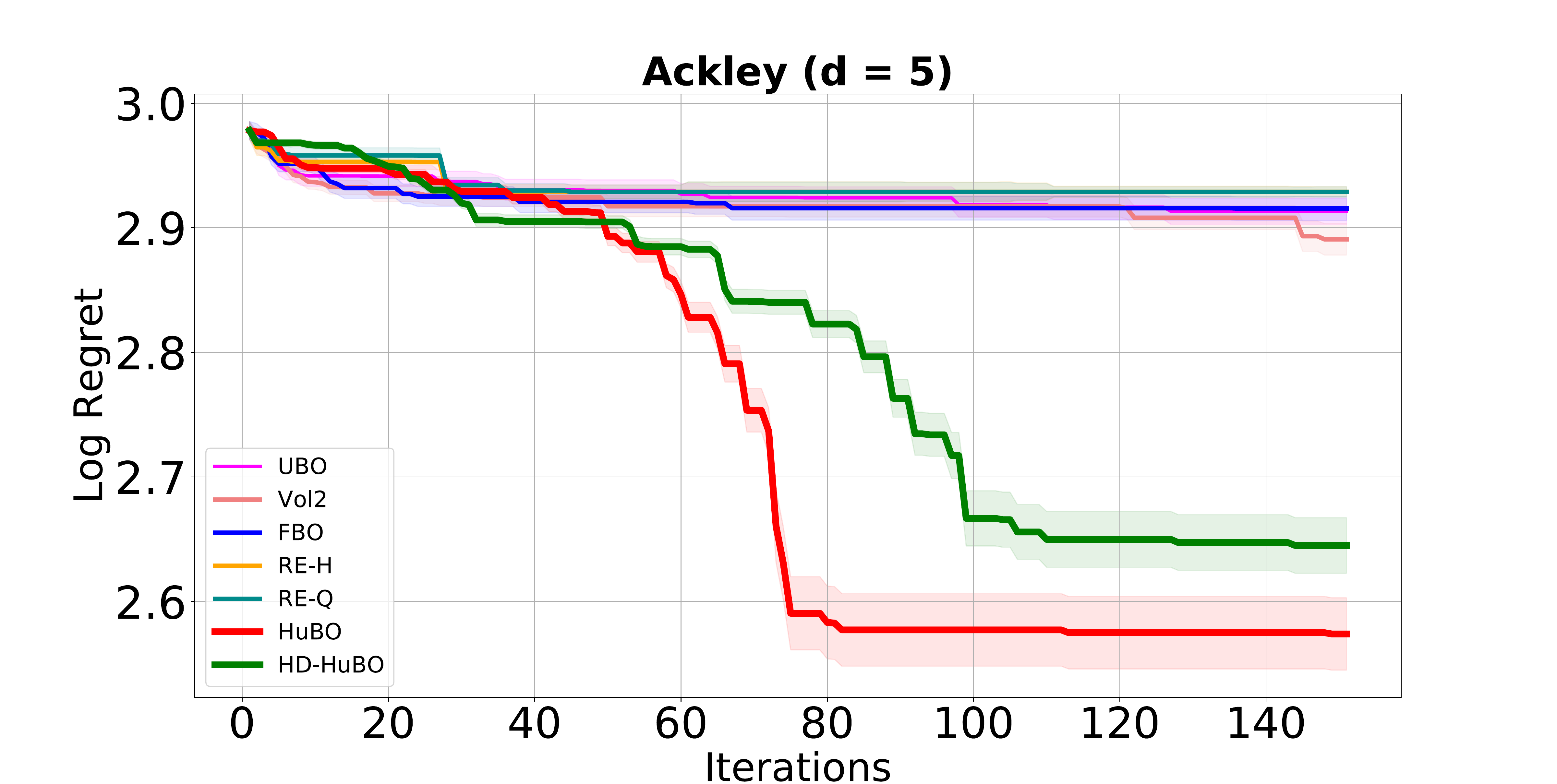}}
\caption{Comparison of baselines and the proposed methods when the initial search space is very small  fraction ($2\%$) of the pre-defined space.}
\label{initialSearchspace}
\end{figure}
The initial search space is crucial to the optimisation efficiency of any volume expansion strategy. However, since the search space is unknown, in reality it is possible that the initial search domain is very far from the global optimum. We consider this situation by setting the initial search space to be only $2\%$ of the pre-defined domain. Under this setting, we optimise two functions: Hartmann6  and 5-dims Ackley function. As seen in Figure \ref{initialSearchspace}, our algorithms outperform baselines due to the expansion and  especially translations of search spaces toward the promising regions. This is a benefit of our algorithm compared to the previous works in unknown search spaces.
\paragraph{On the computational effectiveness}
\begin{figure}[ht]
  \centering
  \includegraphics[scale=1.0,width=.45\textwidth]{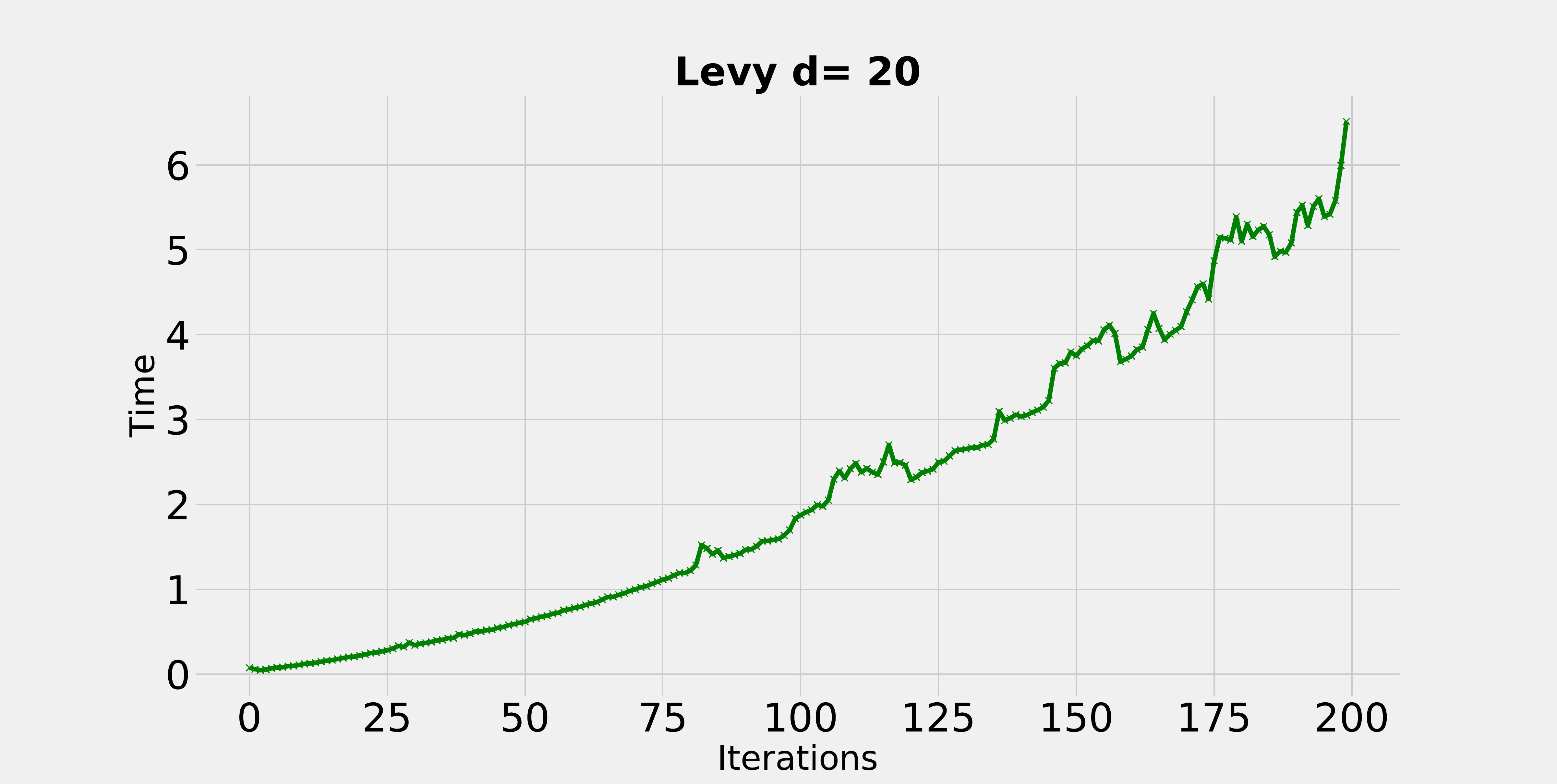}
\caption{The average runtime (seconds) of HD-HuBO over iterations.}
\label{running_time}
\end{figure}
\begin{table}[ht]
\centering
\caption{Average CPU time (seconds) at the final iteration for all algorithms.}
\begin{tabular}[t]{lccccc}
\hline
Algorithms & Beale & Hartmann3 & Hartmann6 & Levy(d =20) & Ackley(d =20)\\
\hline
HuBO           &  0.40  & 0.74   & 3.06    & 6.63 & 9.98  \\
HD-HuBO        &  0.48  & 0.76   & 3.14    & 6.90 & 11.13 \\
Re-H           &  0.49  & 0.84   & 0.91    & 7.22 & 12.96 \\
Re-Q           &  0.47  & 1.52   & 6.12    & 6.97 & 13.21 \\
Vol2           &  0.37  & 0.76   & 2.89    & 6.34 & 9.13  \\
UBO            &  0.61  & 2.11   & 11.21   & 9.37 & 21.33 \\
FBO            &  1.91  & 4.32   & 29.50   & 23.67& 46.56 \\
\hline
\end{tabular}
\end{table}%

The computational time is an important benefit for our algorithms. In our experiments, $\mathcal C_{initial}$ is set to 10 times to the size of the initial search space $\mathcal X_0$ along each dimension, it allows expanded spaces to move freely to any position in the pre-defined domain. It follows that via the transformation, the center of the new search space is set closer to the best solution found up to that iteration. Therefore, both the new bound and the new center are easy to determine compared to previous works in unknown search spaces except the volume doubling strategy. We note that in practice, if the search domain is unknown, our algorithm would typically benefit by setting a large $\mathcal C_{initial}$ as this allows the search space to be
centered close to the best found solution.

For HD-HuBO, to optimise over multiple disjoint hypercubes in the continuous input space, we perform optimisation for each hypercube and then take the best maximum value found across all hypercubes. For example, for synthetic functions we used $\lambda = 1, N_0 = 1$ and thus $N_t = t$. This means that at iteration $t$, we use $t$ hypercubes for the maximisation of acquisition function. We optimise the
acquisition function using L-BFGS with 20 restarts on each hypercube. The maximum number of acquisition function evaluations is set to 1000. The Figure \ref{running_time} shows the average runtime (seconds) of HD-HuBO over iterations on the 20-dims Levy function.

To compare the computational time of all algorithms, we give to all the algorithms the equal computational budget to maximise acquisition functions at each iteration. As seen in Table 1, our algorithms are faster than UBO which needs to compute singular values of matrix $(\textbf{K} + \sigma \textbf{I})^{-1}$, and faster than FBO, which needs extra steps to numerically solve multiple optimisation problems for FBO.
\paragraph{Additional Results}
\begin{figure}[ht]
  \centering
  \subfigure{\includegraphics[scale=1.0,width=.45\textwidth]{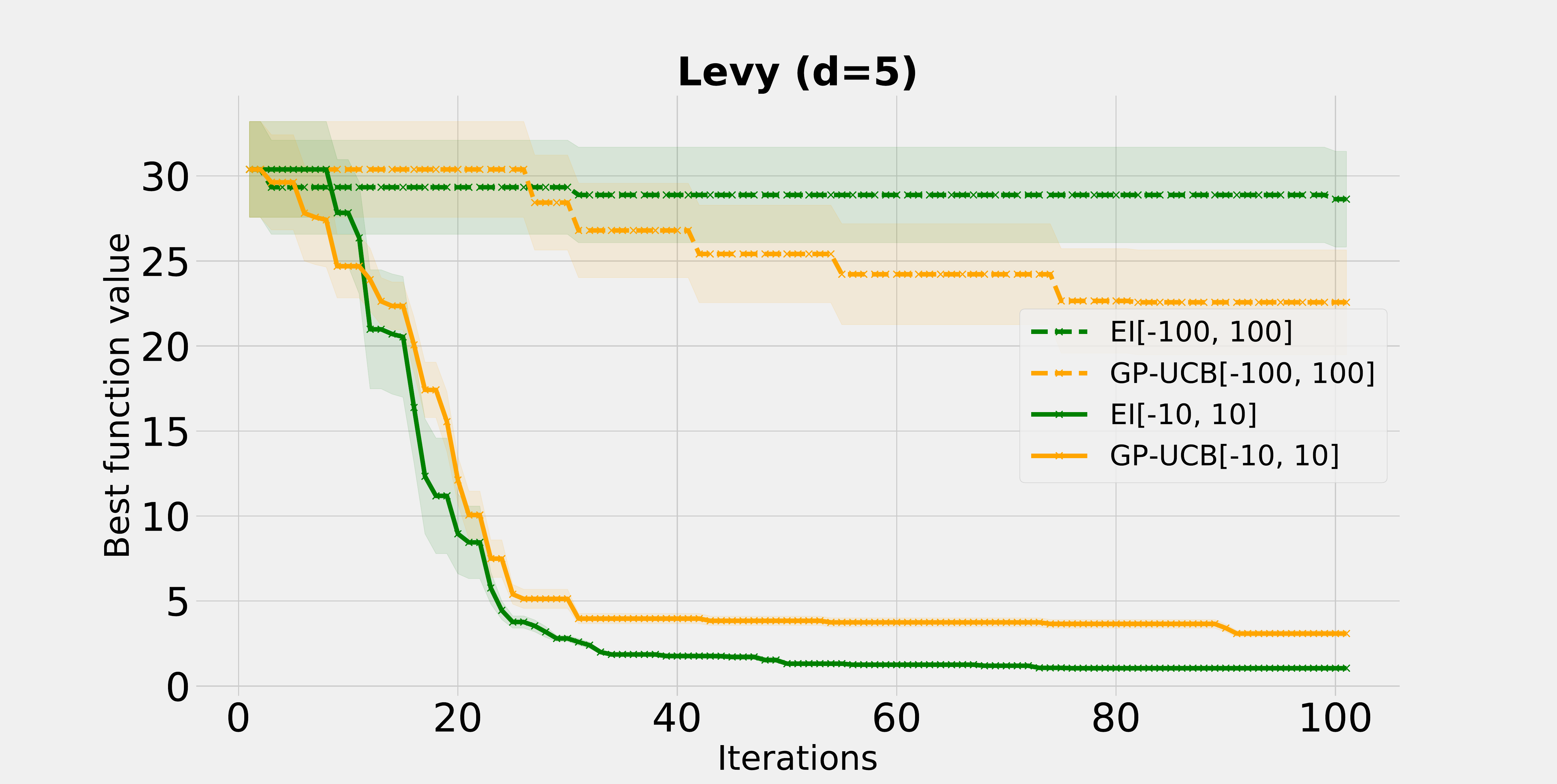}}\quad
  \subfigure{\includegraphics[scale=1.0,width=.45\textwidth]{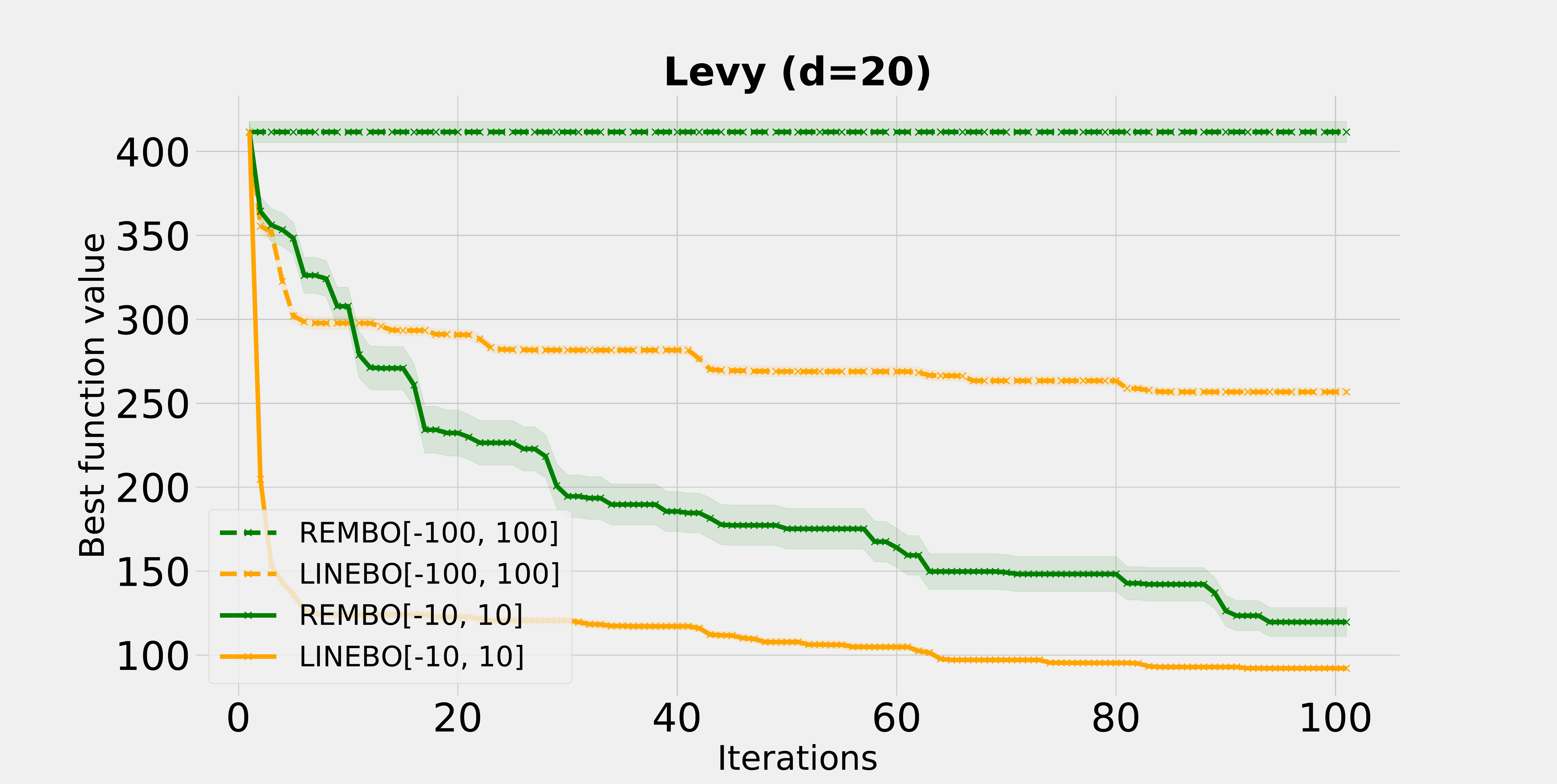}}
\caption{Optimisation efficiency with different sizes of the search space}
\label{volume_search_space}
\end{figure}
When the search space is \emph{unknown}, one heuristic solution is to specify it arbitrarily. However, there are two problems: (1) an arbitrary search space that is finite, no matter how large, may not contain the global optimum (2) optimisation efficiency decreases with increasing size of the search space. We below provide two examples to illustrate that the optimisation efficiency decreases with increasing size of the search space.

In low dimensions, we consider the optimisation efficiency of BO algorithms such as EI and GP-UCB on 5-dims Levy function when increasing the size of the search space. We consider two cases: (1) the search space is set to $[-10, 10]$ and (2) the search space is set to $[-100, 100]$. In high dimensions, we consider the optimisation efficiency of REMBO algorithm \citep{Wang13} and LINEBO algorithm \citep{Johannes19} on 20-dims Levy function. Also, we consider two cases: (1) the search space is set to $[-10, 10]$ and (2) the search space is set to $[-100, 100]$. The Levy function achieves the minimum value at $x^* = (1,1,...,1)$.  As seen in Figure \ref{volume_search_space}, the use of a larger space slows down fast the convergence. In contrast, our approach using a volume expansion strategy starting from a small initial search space can avoid this unnecessary sampling.
\end{document}